\documentclass{article} 
\usepackage[sort]{natbib}
\usepackage[margin=2cm]{geometry}

\usepackage{amsmath,amsfonts,bm}









\def\eqref#1{equation~\ref{#1}}









\def\1{\bm{1}}










\DeclareMathAlphabet{\mathsfit}{\encodingdefault}{\sfdefault}{m}{sl}
\SetMathAlphabet{\mathsfit}{bold}{\encodingdefault}{\sfdefault}{bx}{n}











\newcommand{\KL}{D_{\mathrm{KL}}}



\DeclareMathOperator*{\argmax}{arg\,max}

\usepackage{enumitem}

\usepackage{hyperref}
\usepackage{url}
\usepackage{cleveref}

\usepackage{amsmath,amssymb,amsthm}
\usepackage{bm}
\usepackage{algpseudocode}
\usepackage{algorithm}

\usepackage{graphicx}

\newtheorem{theorem}{Theorem}[section]
\newtheorem{definition}{Definition}
\newtheorem{adef}{Definition}[section]
\newtheorem{assumption}{Assumption}[section]
\crefname{assumption}{assumption}{assumptions}
\Crefname{assumption}{Assumption}{Assumptions}

\newtheorem{example}{Example}[section]
\newtheorem{lemma}{Lemma}[section]

\newtheorem*{remark}{Remark}

\newcommand{\supp}{\text{supp}}
\newcommand{\expand}[1]{#1}
\newcommand{\bb}[1]{\mathbb{#1}}
\newcommand{\bv}[1]{\boldsymbol{#1}}
\newcommand{\bvec}[1]{\boldsymbol{#1}}
\newcommand{\ca}[1]{\mathcal{#1}}

\newcommand{\dg}{\text{dg}}
\newcommand{\tr}{\text{tr }}
\newcommand{\av}{\text{av}}
\newcommand{\Diff}{\text{Diff}}
\newcommand{\Spec}{\text{Spec}}
\newcommand{\iso}{\text{iso}}
\newcommand{\tvec}{\text{vec}}

\title{Measure, Manifold, Learning, and Optimization: A Theory Of Neural Networks}


\author{Shuai Li
  \thanks{Research done during the stay at South China University of
Technology, Chinese University of Hong Kong, Microsoft Research Asia, and
University of Science and Technology of China, in reverse chronological order.}
\\
\texttt{lishuai918@gmail.com} \\
}

%

\begin{document}

\maketitle

\begin{abstract}
We present a formal measure-theoretical theory of neural networks (NN) built on
{\it probability coupling theory}.
Our main contributions are summarized as follows.
\begin{itemize}
\item Built on the formalism of probability coupling theory, we derive an algorithm
framework, named Hierarchical Measure Group and Approximate System (HMGAS),
nicknamed S-System, that is designed
to learn the complex hierarchical, statistical dependency in the physical world, of which
the hierarchical structure is formulated as measure-theoretical assumptions.
\item We show that NNs are special cases of S-System when the probability
kernels assume certain exponential family distributions. Activation Functions
are derived formally. We further endow geometry on NNs through {\it information
  geometry}, show that intermediate feature spaces of NNs are stochastic
manifolds, and prove that ``distance'' between samples is contracted as
layers stack up.
\item S-System shows NNs are inherently stochastic, and under a set of {\it
realistic} boundedness and diversity conditions, it enables us to prove that for
{\it large size nonlinear deep} NNs with a class of losses, including the hinge
loss, all local minima are global minima with zero loss errors, and regions
around the minima are flat basins where all eigenvalues of Hessians are concentrated around
zero, using tools and ideas from {\it mean field theory}, {\it random matrix theory},
  and {\it nonlinear operator equations}.
\item S-System, the information-geometry structure and the optimization
behaviors combined completes the analog between {\it Renormalization Group} (RG) and
NNs. It shows that a NN is a complex adaptive system that estimates the
statistic dependency of microscopic object, e.g., pixels, in multiple
scales. Unlike clear-cut physical quantity produced by RG in physics, e.g.,
temperature, NNs renormalize/recompose manifolds emerging through
learning/optimization that divide the sample space into highly semantically
meaningful groups that are dictated by supervised labels (in supervised NNs).
\end{itemize}

However, the above contributions describe the theory present in a backward way,
for that it is easy for readers to relate to the theory by debriefing what open
problems are related. In a logical way, the paper describes {\it four} parts of
the theory of NNs.
\begin{itemize}
\item {\bf S-System}, a formal measure-theoretical framework that builds a hierarchical
  hypothesis space, of which NNs are special cases. It
  is motivated by the fact that nature is a complex system built
  hierarchically, and a mechanism is needed for any agents living in it to
  recognize and predict hierarchical events happening.
\item The {\bf geometry} of S-System. The objects in the hypothesis space are
  probability measures, thus have an information-geometry structure. It
  characterizes the phenomenon that NNs compose and recompose manifolds that have
  increasingly high level semantic meaning.
\item The {\bf learning framework} of S-System. It describes the objective
  functions to identify an element of the hypothesis space by
  learning the parameters. It gives a principled derivation of back propagation, and unifies
  supervised learning and unsupervised learning in NNs.
\item The {\bf optimization} landscape of S-System. It identifies principles and
  conditions that make the non-convex optimization of the learning problem of
  S-System benign; that is, all local minima are global minima.
\end{itemize}

Help is solicited in \cref{sec:call-help}.

\end{abstract}

\pagebreak
\tableofcontents

\pagebreak
\section{Introduction}
\label{sec:introduction}

\subsection{A Theory of Neural Networks}
\label{sec:s-system-geometry}

\subsubsection{Debrief on the four parts of the theory}
\label{sec:debrief-four-parts}

We present a measure-theoretical theory of NNs. We summarize the parts of the
theory present in this paper in this section. To the best of our knowledge, we
do not find works that are intimately close to ours, and each part of the
theory has its own related works, which is present at the {\it end} of each
part.

The theory is divided into the following {\it four} parts:
\begin{itemize}
\item {\bf S-System}, a formal measure-theoretical framework that builds a hierarchical
  hypothesis space, of which NNs are special cases. It
  is motivated by the fact that nature is a complex system built
  hierarchically, and a mechanism is needed for any agents living in it to
  recognize and predict hierarchical events happening.
\item The {\bf geometry} of S-System. The objects in the hypothesis space are
  probability measures, thus have an information-geometry structure. It
  characterizes the phenomenon that NNs compose and recompose manifolds that have
  increasingly high level semantic meaning.
\item The {\bf learning framework} of S-System. It describes the objective
  functions to identify an element of the hypothesis space by
  learning the parameters. It gives a principled derivation of back propagation,
  and unifies supervised learning and unsupervised learning in NNs.
\item The {\bf optimization} landscape of S-System. It identifies principles and
  conditions that make the non-convex optimization of the learning problem of
  S-System benign; that is, all local minima are global minima.
\end{itemize}

We provide a short introduction to the ideas of the theory present, and defer a
full informal introduction to \cref{sec:complex-system}. The theory is a
principle to assign probabilities to events, e.g., predicting the event that
today would be a rainy or sunny day, given past experience. To recall what a
principle to assign probabilities is, we recall the principle of symmetry, and
the law of large number (LLN). The example of the principle of symmetry is
ubiquitous in elementary probability theory, i.e., the equal probability
assigned to the event that front, or back side is obtained when flipping a
coin. So is the law of large number, which assigns normal probability
distribution to that of the mean of a large number of random variables. However,
the way that these principles assign probabilities grounds on a certain ensemble
of repeated trials: the equal probability assigned to coin flipping is mostly
grounded on the observation on thousands of repeated trials; and the LLN grounds on averaging
repeated trials (with, or without the i.i.d. assumption). However, how can we
assign probabilities to the events that are not as simple as coin flipping, and
is not the average of an ensemble, like the one of weather prediction?

\begin{figure}[h]
  \centering
  \includegraphics[width=0.363\textwidth]{./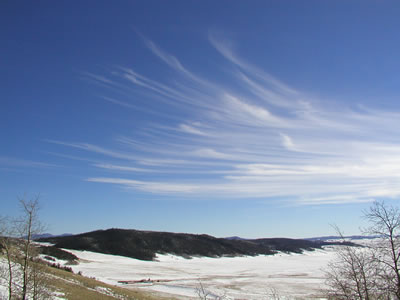}
  \includegraphics[width=0.4\textwidth]{./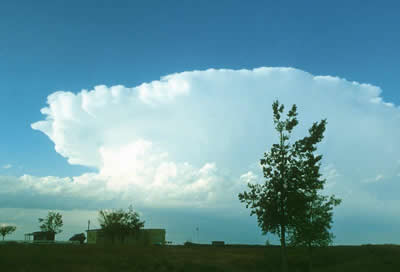}
  \caption[Cloud Types]{Cloud types: on the left is Cirrus, the type of cloud that
    normally wouldn't rain; on the right is the Cumulonimbus, the type of cloud
    that normally would lead to thunderstorms.}
  \label{fig:cloud}
\end{figure}

Given daily life experience, Cirrus (c.f. left picture of \cref{fig:cloud})
normally would not rain, but Cumulonimbus (c.f. right picture of
\cref{fig:cloud}) would lead to thunderstorms. The prediction from daily
experience comes from repeatedly observing the cloud shapes and
the weather afterwards. The physics behind the cloud formation and raining is a
complex interaction in the complex system of moisture, dust, gravity,
temperature and wind, of which the outcome is highly chaotic thus uncertain. The past
experience allows us to divide the cloud shapes into different groups by their
salient features that would have a high, or low chance to induce different
types of weather. THUS, in summary, we have a mechanism to approximate the
probability of outcomes of a complex physical system by observing certain
features of the system. That is, to estimate the plausibility of the raining
event based on the events that a certain group/type of cloud shape occurs.

The theory is a theory of the above mechanism. S-System is the mechanism to
construct feature events, e.g., the shapes of the cloud, that are divided into
groups/types and mirror the hierarchical relationship in the physical complex
system in a self-organized way, e.g., the interaction between water moleculars
and dust particles. The geometry part characterizes the information geometry
structure in the feature space. The relationship between it and S-System is
like the one between Hilbert Space and functions: it provides a manifold
structure. The learning framework characterizes how the past experience helps
learn the grouping of events: the objective function that divides events into
different cloud types/groups, and discovers feature events in a self-organized
way. The optimization part investigates how such a grouping is implementable
given objective functions in the learning framework through optimization. They
overall make up a theory of NNs (and S-System): the mechanism to
approximate/assign the probability of/to outcomes of events to estimate the
plausibility of events, e.g., the occurrence of rain or not.

\subsubsection{Debrief on insights}
\label{sec:insights}

In this section, we summarize insights of our results that are related to
existing research.

\paragraph{Inherent stochasticity of NNs.} Supervised NNs, such as Multiple
Layer Perceptron (MLP), has been normally taken as a deterministic model, a
function approximator that approximates a probability distribution. However,
underlying the superficial deterministicity, the deterministic forward
propagation maximizes the expected data likelihood, while the backward
propagation minimizes surrogates of KL divergence (details in
\cref{sec:learn-fram-htms}). Activation functions have principled derivation
(details in \cref{sec:theor-deriv-nn}), and activations in the intermediate
layers are a function of estimated realization of random variables, of which the
true probability measure is transported from the input data.  Activation
functions used in practice actually assume the exponential family
distributions, which explains why NNs learn templates, since the mean of an
exponential family distribution uniquely determines its distribution.(details
in \cref{sec:contr-effect-coarse}). S-System unifies and formalizes the
interpretations proposed in \cite{Bottou1996} \cite{Author2016}, \cite{Lin2017}
and \cite{Trevisanutto} --- the connection with these works is explained in
detail in \cref{sec:nn-hypothesis-space}.

\paragraph{Stochastic manifolds composed and recomposed by NNs.}
It is observed that NNs gradually build representations that function similar
with the ``grandmother'' cell found in a biological brain, and it has long been
hypothesized that NNs build certain kinds of manifolds. The stochasticity
discovered allows us to discover the information geometry structure in NNs, in
which a hierarchy of manifolds are composed and recomposed to build
increasingly high level semantic representations as layers stack up. More
specifically, a stochastic manifold structure is endowed on the intermediate
space of NNs, where the ``distance'' is defined to characterize the semantic
difference between events/samples. {\it As the layers go deeper, the semantic
difference potentially becomes gradually coarse-grained to reflect the higher
level semantic difference, e.g., dogs vs cats, while ignoring lower level
variations, e.g., textures (\cref{thm:cg}).} This is true due to the
information monotony phenomenon: by blocking half of the information from
propagating (using ReLU as an example), information related to irrelevant
variations could potentially be discarded. It also characterizes the phenomenon
that samples/images can be compared in different criteria in different contexts
(details in \cref{sec:contr-effect-coarse}), thus opening the new possibilities
of metric learning.

\paragraph{Symmetry in NNs}
 The event spaces of samples are the objects to study if one wants to study the
symmetry in NNs. For example, robustness to deformation in images can be
characterized as a close ``distance'' between two event collections where one
is obtained by deforming all events in the other. An example is provided in
example \ref{example:deformation}.

\paragraph{Optimization landscape of NNs.}

  The stochasticity identified enables us to analyze
NNs stochastically in its full complexity. It enables us to
characterize the optimization behaviors of NNs in \cref{thm:landscape}. It
explains the optimization myths of NNs that though being non-convex NNs can
optimize the loss to zero, and why learning progresses slowly when approaching
the minima. {\it Informally, a huge number of cooperative yet diverse neurons
can divide samples into arbitrary groups corresponding to labels.} The
assumptions made in the theorem are sufficient practice-guiding preconditions
instead of unrealistic assumptions made to make the proof work. It explains why
centering of neuron activation (\cite{Glorot2010}) is helpful, for it helps to
let the eigenvalues of the Hessian of the risk function be symmetric
w.r.t. y-axis (\cref{thm:symmetry}), thus guaranteeing the existence of
negative eigenvalues to provide loss-minimizing directions; why normalization
of neuron activation (\cite{Ioffe2015}) is helpful, for that the boundedness and
diversity conditions \ref{a:boundedness} \ref{a:diversity} ask the correlation
between neurons, formulated as cumulants, to be small, and normalization of
standard deviation possibly maintains the conditions throughout the training
(\cref{sec:cond-nice-behav}); why the larger the network is, the easier for it
to reach zero error, for our results on optimization is a
probably-approximately-correct type result where the error is controlled by the
size of the network --- the underlying reason is complicated, and we refer
readers to \cref{sec:eigenv-distr-symm}.

\paragraph{Renormalization group implemented by NNs.} NNs has been analogized
with Renormalization Group (RG) \citep{Mehta2014}. However, the analog is
incomplete for that it does not identify a large scale property, quantity, or
feature that is produced by such renormalization \citep{Lin2017}. RG is a
Nobel-prize-winning tool that bridges phenomenon across scale in complex
systems, e.g., spin glasses, in physics. In a similar vein, the theory completes the
analog, and show that NNs is a complex adaptive system that estimates the
statistic dependency of microscopic objects, e.g., pixels, in multiple
scales. Unlike clear-cut physical quantity produced by RG in physics, e.g.,
temperature, NNs renormalize/recompose manifolds emerging through
learning/optimization that divide the sample space into highly semantically
meaningful groups that are dictated by supervised labels (in supervised NNs).
 It also formalizes the folk wisdom that NNs only care
about a subset of the overall signal space, e.g., image space, in a measure
theoretical way (details in \cref{sec:phys-prob-meas}).

\paragraph{Discretization of continuous sample space done by NNs and
  formalization of semantics.}
  This is the most important one, but also perhaps the hardest one to get. {\it
S-System shows NNs work by grouping samples/events into groups --- which
emerges through optimizing an objective function --- and estimate/approximate
their true probability measure through empirical observations.} The group is a
formalization of semantics, and explains how discrete labels emerge from
continuous samples, i.e., a label identifies a group of events/samples. A
proper implementation of the above process applied hierarchically creates an
adaptive complex system that consists of a huge number of neurons. The neurons
represent different groups of events of low mutual correlation, which
preconditions the optimization results.  It gives a formalization of the
meaning of semantics in labels in supervised learning, which is a way to group
samples/events in a way meaningful to humans (details can be found in
\cref{sec:learn-fram-htms}). This is the aim of the whole paper, and is
described informally in \cref{sec:complex-system}.

\subsubsection{Roadmap of future research}
\label{sec:roadmap}

It is straightforward that ongoing efforts are working on further perfecting of
existing parts of the theory. Except for the first part, only the scaffold of the remaining three parts
is worked out. To name an example, the optimization part has worked out the
conditions for a NN with a binary loss to converge to global minima. The result
analogizes with the situation where numbers have been defined, while the algebra
on them has not. The understanding of the algebra would lead to results that
cover all kinds of losses. Besides these short term goals, we would
like to present a bigger picture that covers missing pieces, and future
extension of the theory.

\paragraph{Short term goals: adversarial robustness, generalization behaviors,
dynamical-sized self-generating networks, Lisp, C++, CUDA based NN library.}
1) The
adversarial samples phenomenon \citep{Szegedy2013} is caused by an uncontrolled
propagation of errors in a high dimensional space in an exponential way, a
proper characterization of errors may solve the phenomenon. 2) Though the
generalization myth discovered by \cite{Zhang2016b} has been solved by
\cite{Soudry2017} and \cite{Poggio2018}, a neat generalization error (GE) bound
like the one for Support Vector Machine has not been worked out. The insights
obtained to control the error of adversarial samples will help us formulate a
lean bound. 3) When the optimization and generalization behaviors are worked
out, it would make it possible to formulate quantitative criteria concerning
the training set performance and test set generalization solely based on
network parameters. The criteria would make the training of dynamically-sized,
self-generating networks possible. 4) Anticipating the coming of the
self-generating networks, and more importantly, for the long term goal
described below, we are designing a new NN library that can mix Lisp, C++ and
CUDA arbitrarily. The idea is that a difference exists in the mathematical
reasoning language (the front end), and the underlying hardware implementation
language (the back end). The front end is universal, since it is based
on mathematics. The back end is hardware-dependent, it could be GPU based for
now, yet other types of hardware are possible, e.g., Tensor Processing Unit of
Google. The back end idea has been around for a while, e.g., XLA in Tensorflow
\citep{Abadi2015}, or even earlier, intermediate representation in LLVM, but
our front end perhaps will be more than computational graph.

\paragraph{Long term goal: unification of logic and perception.} For the above
goals, we have a rough guess on how they could be achieved, thanks to the cracks
opened by S-System. However, our ultimate goal is to combine logic with
perception. What S-System achieves is an algorithm framework that builds a
representation of the physical world. In other words, it is the mathematics of
imagination. However, for now, it only characterizes rather basic hierarchical
composition relationship between the representation of objects. The next big
extension is to characterize how these representations are manipulated in a
more universal way, which we believe is the key to rational reasoning found in
human intelligence. That's one key reason why we are working on a new library, since we
believe the days of Lisp may come again when we reach this part of
the road map. {\it This is the reason why we name the framework S-System: we
envision one day S-Expression \citep{McCarthy1960} and S-System would become
dual views of a one.}

\paragraph{Temporary end goal: mathematics of languages.} Languages are
probabilistic logic rules to manipulate symbols that are representations of the
physical world. The idea probably dates from Wittgenstein, who said in each
sentence, there is a picture. The idea probably was taken very seriously in the
Vienna Circle \citep{sigmund2017exact} in logic empiricism/positivism. But they
missed an important piece from Phenomenology \citep{dreyfus2015computers},
which is what machine learning was set out to fill. The thread is rather
simplified, and we will do a better survey when we finally reach the end of the
road map. The unification of logic and perception would make possible the
mathematics of languages, and consequently making true understanding of books
possible. {\it It would lay the theoretical foundation to build a system that
would organize all human knowledge that serves as an infrastructure may help
humanities get out of the post-modern quandary.} More on this in
\cref{sec:artif-intell-human}.

\subsection{Why A Theory of Neural Networks?}
\label{sec:why-theory-neural}

\subsubsection{The unsatisfactory progress of theory research and the problems
  it induces}
\label{sec:unsat-progr-theory}

The recent development in the algorithm family of neural networks (NN)
(\cite{LeCun2015}) that aim to solve high dimensional perception problems, has
led to results that sometimes outperform humans in particular datasets, e.g.,
vision (\cite{He}). It is a computational imitation of biological NNs
(\cite{Rosenblatt1958} \cite{Fukushima1980} \cite{DavidE.Ruineihart1985}).
Researchers of various backgrounds have been intrigued by theoretical
understanding of NNs, and made important contributions to it. From the
perspective of physics, we have \cite{Mehta2014} \cite{Lin2017}
\cite{Choromanska2015}; from that of applied mathematics, we have
\cite{Mallat2016}; from that of information theory, we have \cite{Amari1995}
\cite{Shwartz-Ziv2017}; from that of theoretical computer science, we have
\cite{Arora2013}; and from the machine learning perspective, we have
\cite{Anselmi2015} \cite{Anselmi2015a} \cite{Pennington2017a} \cite{Author2016}
etc.

However, despite the above efforts, it has arguably progressed for six decades
as a blackbox function approximator. It lacks a formal definition
\citep{Candes2002}, and has opaque optimization behaviors, mysterious
generalization errors \citep{Zhang2016b}, and intriguing adversarial samples
\citep{Szegedy2013}. The confusion of definition leads to the confusion of the
capability, and more importantly the limitation of the algorithm family, which
has led to criticism for not being able to undertake tasks that perhaps outside
the domain of NNs \citep{Pearl}, the worry of technology wall in the industry and
another AI winter, and ungrounded fear of singularity in the general public. The
lack of understanding of the behaviors of NN leads to a proliferation of competing
techniques and tricks that are hard to assess their relative merits
\citep{Lipton2018}. It is clear, for the field of deep learning to progress
healthily, a theory underpinning is being called besides its intellectual
reason.

\subsubsection{ Artificial intelligence and humanities}
\label{sec:artif-intell-human}

Zooming out from the academic world, intellectually, what is really significant
about the success of artificial NNs, and the development of its science? It is
perhaps the Homo Sapiens' unusual opportunity to fix its own errors, drag
itself out of the quandary of post-modernity, and build a future that
has a better dual (Yin Yang) balance of liberty and equality.

The general public may
be appalled, or perplexed by futurists' attention-grabbing prediction on the
technology singularity, in which the emerging of artificial superintelligence
may disrupt human civilization, and make humans look like bugs. But
intellectuals and scientists, we believe, are long at peace with the picture
painted by science. The shock of the picture had been long ago manifested by
the painting {\it Philosophy} (\cref{fig:philosophy}) of Gustav Klimt, when he
was commissioned to depict {\it The Triumph of Light over Darkness} to
``deliver an optimistic glorification of progress''. Klimt instead paints ``On
the left a group of figures, the beginning of life, fruition, decay. On the
right, the globe as mystery. Emerging below, a figure of light: knowledge.'',
which presents an a dreamlike mass of humanity, referring neither to optimism
nor rationalism, but to a "viscous void" \citep{klimit}. The tension depicted by
{\it philosophy} dates back to the onset of the Romantic age \citep{barzun2001dawn},
for science's abstruseness and abstract, and perhaps more importantly, for that
it shows man is not``the center of the universe'' in primitive mythologies, not
even ``the measure of all things'' in the Renaissance, but animals developed from
stardust in a trial and error fashion in perhaps a corner of the vast, violent,
lonely, impersonal universe. The tension gradually developed into a disillusion
with science \citep{watson2002modern}. The last straws are Godel's incomplete
theorem and phenomenology. The two combined shows that logic reasoning cannot
reach truth, all laws are human invention, and their correctness are entirely
based on perceptual empirical observations. The uncertainty of truth, and the
bust of the Utopian dream where people are allocated what they need gradually
morphed the society into post-modernity, where ``postmodernism resigns to the
alienation, ephemerality, fragmentation and patent chaos of modern life and
places individualized aesthetics over science, rationality, politics and
morality'' \citep{harvey1989condition}.

\begin{figure}[h]
  \centering
  \includegraphics[width=0.4\textwidth]{./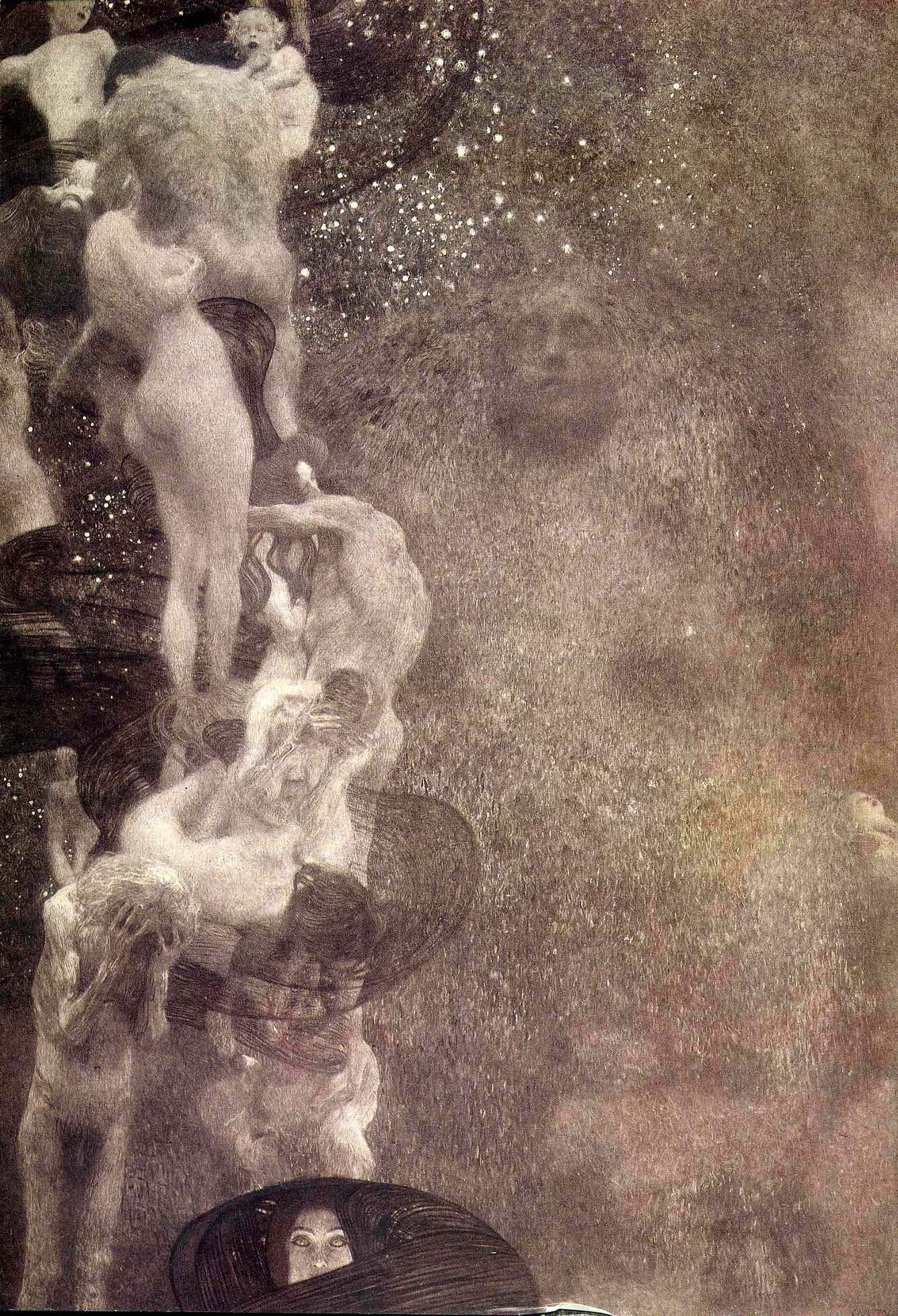}
  \caption{The painting {\it Philosophy} by Klimt.}
  \label{fig:philosophy}
\end{figure}

In the masterpiece of \cite{barzun2001dawn}, {\it From Dawn To Decadency}, in
stating the reason why he judged the Western culture is in decadency, he wrote
\begin{quotation}
  But why should the story come to an end? It doesn't, of course, in the
literal sense of stoppage or total ruin. All that is meant by Decadence is ``falling
off.'' It implies in those who live in such a time no loss of energy or talent or
moral sense. On the contrary, it is a very active time, full of deep concerns,
but peculiarly restless, for it sees no clear lines of advance. {\it The loss it faces is
that of Possibility.} The forms of art as of life seem exhausted, the stages of
development have been run through. Institutions function painfully.
Repetition and frustration are the intolerable result. Boredom and fatigue are
great historical forces.
\end{quotation}

It is a quite proper assessment, given our societies' situations. It even may
give a sense of relief for expressing the subconscious public mood. Even Hacker
Spirits \citep{Himanen:2001:HES:558235} has lost its moment \citep{levy1984hackers}, and was tamed by
capital \citep{turner2010counterculture}.

Yet, this is not the end of the story. The progress in modern biology, complex
system, nonlinear science has drawn another humanitarian epic narrative
\citep{edward1999consilience}, out of the Darwin jungle one --- indeed we are
stardust, but we are also children of the sun: ``Human
social existence, unlike animal sociality, is based on the genetic propensity
to form long-term contracts that evolve by culture into moral precepts and
law. They evolved over tens or hundreds of millennia because they conferred
upon the genes prescribing their survival and the opportunity to be represented
in future generations. We are not errant children who occasionally sin by
disobeying instructions from outside our species. We are adults who have
discovered which covenants are necessary for survival, and we have accepted the
necessity of securing them by sacred oath.''. The consilience of knowledge has
the potential to build a shared culture for all \citep{edward1999consilience},
beyond nationalism, tribalism, region fundamentalism, racism
(\cite{castells2000rise} \cite{castells2011power} \cite{castells2010end}):``We
are gaining in our ability to identify options in the political economy most
likely to be ruinous. We have begun to probe the foundations of human nature,
revealing what people intrinsically most need, and why. We are entering a new
era of existentialism, not the old absurdist existentialism of Kierkegaard and
Sartre, giving complete autonomy to the individual, but the concept that only
unified learning, universally shared, makes accurate foresight and wise choice
possible.\citep{edward1999consilience}''

However, the message is guarded in a high castle. To understand and act upon it
requires tremendous efforts. ``To which what one Crates’ said of the writings
of Heraclitus falls pat enough, 'that they required a reader who could swim
well,' so that the depth and weight of his learning might not overwhelm and
stifle him.''\citep{montaigne2004complete} If the enlightenment project of
consilience is to be successful, it has to be inclusive to all that are willing
to take the leap: the ones sense that something deep
in the contemporary culture is wrong, but do not have a thread or the knowledge to identify the
exact source, then initialize the work to fix it. We {\it lack the infrastructure} to give the opportunities to those
who want to be part of this Icarian flight: the bridge between the real
problems and the skills that are conducive to a solution, instead of a cannily
maintained tribalism elite culture that insecurely bars the door and
concentrates all resources \citep{deresiewicz2014excellent}. Without it, the
society in an optimistic guess may be gravitated towards self-perpetuating
tech-meritocracy parishes, in which the masses are lived by a certain
implementation of universal basic income without a meaningful goal for life as
depicted in the drama {\it The Expanse} \citep{expanse}. Or even worse, the
un-channeled negative emotion may build up, then explode to consume the earth
through radical populism, or radical religion fundamentalism, as having
happened repeatedly in the history. We believe the empowering is the Promethean
fire that the enlightenment project was set out to bring in the beginning: the
age that symbolically started by Francis Beacon.

There are reasons to believe that we are at the night before the dawn of a new
age, as the Renaissance is built on the ashes of the corrupt monastery order
\citep{mee2016life}, on the condition that we fix our errors --- we already
have a direction where our new philosophy may come from, thanks to forerunners
like \cite{edward1999consilience}. We have incrementally managed to build many
new experimental infrastructures. To name an incomplete list: Linux, an
operating system that is owned by all; Wikipedia, the open and free
encyclopedia accessible to all; search engines to access to the world's
knowledge freely and instantly as long as it is online; 3D printing to bring
the software power physical without heavy capitals to build a factory; RISC-V,
an open source hardware counterpart to Linux; digital libraries that open
research results to all, e.g., archive.org, and arxiv.org, where this paper is
submitted; interplanetary file system \citep{Benet}, and the distributed
Internet built on it; the possibilities of new content based business models
enabled by blockchain based currency \citep{Nakamoto2008} instead of the ad
based one that feeds on traffic and appeases to baser instincts.

In this particular niche domain of artificial NNs, it is not about a gold mine
for capitalistic opportunities, a boost in the efficiency of the current system,
potential dangerous superintelligence or ``smart'' cities. It is a thread to
decode how humans learn, so that if the science of learning is worked out,
imagine what that would bring? The enlightenment was initiated by the printing
press, which enables the massive dissemination of knowledge and information. In
the contemporary society, the situation is reversed. It is not the access to
information that becomes the bottleneck, but the capacity to process it: we are
drowning in information, but starving for knowledge. The internet has pushed the
situation to the extreme, and helped build polarized societies --- Google is
powerful on the condition that you ask the right question. If the science of
learning is worked out, we may build an upgraded version of Google Books, or in
other words, finish what Google Books set out to be: a system that organizes
all human knowledge, and serves as a guide to everyone who is willing to learn
and tackle challenges humanity facing, thus offers world-class education to
everyone, and continues the enlightenment project initiated by the printing
press, while fixing its errors.

Artificial Intelligence (AI) is part of the pieces mentioned above to build the
infrastructures that help build solve the tension between liberty and equality,
or laissez-faire and social welfare, by providing equal educational
opportunities, at least offering a low cost thread to navigate the ocean of
knowledge humanities has generated. Though how unconvincing the following
statement may be, AI could not be a threat. All tools have two edges. If we are
equipped the wisdom to wield it, it would not be a threat, and is fundamentally
controllable, at least the part based on the mathematics in this paper, which
covers deep NNs, the backbone of existing deep learning technology.

As a concluding remark, this section was initially planned as a short paragraph
explaining the deeper motivation for this paper that has been there for
years. It unexpectedly developed into a full section. The content is surely underdeveloped, and may
seem naive in the years to come. {\it This is the
first time the idea is on paper, feedback and help are welcome, by sending them
directly to the author, the email of which can be found at the author
information in the first page. The help part is described in details in
\cref{sec:call-help} below, and the {\bf footnote} of this page.}

\subsection{Call for Help}
\label{sec:call-help}

{\bf We have some trouble identifying appropriate journals, or conference to
publish this manuscript, in whole, or in parts.} The theory has drawn
techniques and ideas from a range of domains, and is rather
interdisciplinary. The author is a self-taught outsider who navigates the
domains involved all by himself. We are not deeply familiar with the
established manner of communication in the domains involved, and
trial-and-error submission would take too much time, which could be better used
in other areas. {\bf We very much appreciate experts in relevant domains to be
our guide.}\footnote{The author is willing to take visiting scholarship, or
research position to work on the publication of this manuscript. In addition,
the author is looking for a strong community to get a PhD that is working on the fundamental
theory of NNs, and is interested in the roadmap present in
\cref{sec:roadmap}. The author has been laboring in the darkness
to get this preliminary theory out in the past six years, and is in great need
of a community. If that is not an option, he is also seeking employment, for he
is not just a researcher, but also an engineer in the first
place --- the theory partly started with the hacker idea to build a AI, one needs to
understand it. For details, refer to
\href{http://shawnleezx.github.io/employment}{http://shawnleezx.github.io/employment}} {\bf In the
following, we will debrief the domains involved, and where we want guide.}

Expertise is needed in introducing {\it random matrix} techniques to the
machine learning community. We may manage to get \cref{sec:optim-landsc-s}
published on its own, since it solves a well-perceived open optimization
problem of NNs --- it does not mean that the rest does not solve open problems; it is
just that the problems are less perceived. What's left to be done is to write an
easy-to-follow introduction.

Expertise is needed in the boundary between {\it pure mathematics} and {\it
applied mathematics}, between {\it probability} and {\it statistics}. The
trickiest part is S-System in \cref{sec:coupling}.  Coupling theory is
traditionally a proof technique, here it is used as a computation technique. It
is a new way to assign probabilities, with deep philosophical implications (which
is not discussed in this paper), besides the symmetry principle and the law of
large number. So this makes it pure mathematics. However, it is also an
algorithm for concrete machine learning problems, which makes it applied
mathematics. It is a marriage between probability measure and statistics in a
way other than statistical learning theory (STL), or in other words, it is a
missing piece of the theory of {\it algorithmic modeling in statistics}
\citep{Donoho2000} \citep{Breiman2001}, along with STL. We are not familiar
with relevant venues, and are not sure where we should submit it: the feedback
from the machine learning community says the idea is abstruse, while the math
community seems not to have a clear community on the problem.

Expertise is needed in {\it information geometry} and {\it probabilistic
graphical modeling}. \Cref{sec:expon-family-neur} and
\cref{sec:learn-fram-htms} cannot be understood without \cref{sec:coupling},
though they also involve different communities. The former on information
geometry. The later perhaps is less clear --- it belongs to a community that it
sets out to prove ill-directed, i.e., probabilistic graphical modeling approach
in machine learning; the task is obviously difficult. We need to get the ideas
of \cref{sec:coupling} through before expanding on these two parts. To publish
them on their own, one approach is to demonstrate its usefulness in strong
experiment results without requiring the reviewers to understand the
nitty-gritty of the technical contents, but it would take some time to identify
the proper practical problem to solve.

Perhaps the best way is to publish it in somewhere like the {\it Philosophical
transactions. Series A, Mathematical, physical, and engineering sciences},
where a paper \citep{Mallat2016} similar in style published. But it would be
even harder than the previous routes perhaps: not only expertise is
needed, but also reputation; \cite{Mallat2016} also published in piecemeal first, and
took four years to reach the point to write a summary.

We are doing this in hackers' style: we work out an initial solution,
and present it to the community, like the situation of GNU/Linux in the
beginning. We hope the community would take notice, and help work on this in
joint efforts.

\subsection{Acknowledgement}
\label{sec:acknowledgement}

The work would be indefinitely delayed, or even impossible without the support
and tolerance of my past mentors. In reverse chronological order, I would like
to thanks Kui Jia at the South China University of Technology, without whom to
offer an extended stay in the academic world with a high degree of research
freedom when I graduated my Master of Philosophy degree and decided to return
to mainland China, I am not sure how long it would take, or even possible to
reach this paper if I went off the industry; Xiaogang Wang, who was my master
degree mentor at Chinese University of Hong Kong, and along with Kui to show me
the works of Stephane Mallat \citep{Mallat2012a} to introduce me to the theory
research in NNs, and helped me navigate the academic community; Jinhui Yuan, my
mentor at Microsoft Research Asia, to show me a broader world of machine
learning, and share his enthusiasm to research; Linli Xu, who was along the
first ones to introduce me to the field of machine learning, and be my
undergraduate thesis advisor at University of Science and Technology of China ---
it was around that time that I started to be cognizant of and ponder on the
theoretical problems of artificial intelligence, machine learning, and deep
neural networks.

I am also equally in great debt to past great men. Without an imagined community
built by them, and their stories to keep me company in the dark times, the
mental wear and tear would have me give up on the project a long time
ago. Without a particular order, they are Bertrand Russell, who spent ten years
on {\it Principles of Mathematics} to search for truth in logic, and had to
cover some of its publication expanse on its own after those ten years' work;
Nicola Telsa, who worked in a ditch to find the next meal when working on his
three-phase electric power, and who used up all his life force on his worldwide
wireless communication system; Albert Einstein, who spent seven years as a
patent officer when working on his theory of relativity, light quanta, and
Brownian motion; Voh Gogh, the father of modernism art who was so afflicted by
modernity, and battled with it on the edge of insanity; Desiderius Erasmus who
worked on the Latin and Greek translation of New Testament quite literally
against the world (for religion reform) in chronic poverty; Montaigne who
stayed a sane philosopher to write {\it Essays} in a world ripped apart by
religion factions. Yangming Wang, a great sage who cannot really be
characterized in one sentence without sufficient culture background. The list
can go on, but I think I have sent the message. If not, to illustrate what I
mean, Russell wrote in his autobiography:
\begin{quotation}
  At the time, I often wondered whether I should ever come out at the other end
of the tunnel in which I seemed to be.... I used to stand on the footbridge at
Kennington, near Oxford, watching the trains go by, and determining that
tomorrow I would place myself under one of them. But when the morrow came I
always found myself hoping that perhaps “Principia Mathematica” would be
finished some day.
\end{quotation}

\subsection{Notation}
\label{sec:notation}

 We note the notation used. All
scalar functions are denoted as normal letters, e.g., $f$; bold,
lowercase letters denote vectors, e.g., $\bvec{x}$; bold, uppercase letters
denote matrices, e.g., $\bvec{W}$; normal, uppercase letters denotes random
elements/variables, all the remaining symbols are defined when needed, and should be self-clear
in the context. r.e. and r.e.s are short for random element, random elements
respectively, so are r.v. and r.v.s for random variable and random variables.
To index entries of a matrix, following \cite{Erdos2017}, we
denote $\bb{J} = [N] := \{1, \ldots, N\}$, the ordered pairs of indices by $\bb{I} =
\bb{J} \times \bb{J}$. For $\alpha \in \bb{I}, A \subseteq \bb{I}, i \in \bb{J}$, given a matrix
$\bv{W}$, $w_{\alpha}, \bv{W}_{A}, \bv{W}_{i:}, \bv{W}_{:i}$ denotes the entry at
$\alpha$, the vector consists of entries at $A$, the $i$th row, $i$th column of
$\bv{W}$ respectively. Given two matrix $\bv{A}, \bv{B}$, the curly inequality
$\preceq$ between matrices, i.e., $\bv{A} \preceq \bv{B}$, means $\bv{B} - \bv{A}$ is a
positive definite matrix. Similar statements apply between a matrix and a
vector, and a matrix and a scalar. $\succeq$ is defined similarly. $\kappa$ denotes
cumulant, whose definition and norms, e.g., $|||\kappa|||$, are reviewed at
\cref{sec:eigenv-distr-symm}. $:=$ is the ``define'' symbol, where $x := y$
defines a new symbol $x$ by equating it with $y$. $\tr$ denotes matrix
trace. $\dg(\bv{h})$ denotes the diagonal matrix whose diagonal is the vector
$\bv{h}$.

\section{Neural Network, A Powerful Inference Machine of Plausibility of Events
in the Physical World}
\label{sec:complex-system}

This section aims to give an intuitive description of the formal definition of
NN given and its behaviors without delving into mathematical details; and
it also serves as the paper outline. {\it The rest of the paper characterizes the informal
description in this section formally.}

To avoid philosophical debates, we start with a metaphysics assumption that an
objective physical world exists independent of the perception/observation of
any agents, yet we do not make any assumptions on the nature of the physical
world, being it a simulation, or not. If an agent wants to interact with the
world, for whatever reasons, it first needs to perceive it. A way to perceive
the world is to measure certain physical objects of the world, which could be
implemented as sensors of the agent, e.g., a camera measuring the spatial
configuration of photon intensity. However, the measurement data (formalized in
\cref{cm:1}) record many events that happen in the same spatial and temporal
span, and those events are entangled in the measurements (formalized in
\cref{a:containment}). Let's see an example. A core drive of a living
creature, is to survive. When a lion is within 100m of a dear, the dear needs
to run in order to avoid being eaten. But how a dear is supposed to know a lion
is near by? A scheme could be through a photon sensor, i.e., eyes, of the dear
to measure the photons, probably generated from the sun, that are reflected
from the body surface of the lion. We call a lion exists nearby an {\it
event}. However, a great number of other events are also being measured by the
sensor --- the dark clouds that may rain, the delicious grass that makes food,
and the hyenas that lurks around for a leftover meal. {\it To perceive events
happening in the world, e.g., a lion is nearby, a mechanism is needed to
recognize it from measurement data, e.g., sensed spatial photon patterns.}

This leads to the problem of the structure of the physical world, which the
still unknown mechanism at least needs to relate to if it aims to recognize
events in the world well. This leads us to Complex System
(\cite{Bar-Yam:1997:DCS:331688} \cite{Nicolis:2012:FCS:2331101}
\cite{Newman2009}). \expand{It is a vast and diverse field, but a summary would not be
too misleading is that it is a science that studies the interaction of so many
events that their collective behavior manifests at a scale beyond their
characteristic scale.} The research in the field allows us to safely say that one of
the most important structures of the physical world is hierarchy
(\cite{Amderson1972}). Atoms form molecules; molecules form organism, and
inorganic material; organisms form creatures, which form ecology system;
inorganic material forms planets, then solar systems, then galaxies. {\it Of
the scales across the hierarchy of the world, the events at a lower scale interacting with
a particular way form events at a higher scale.} The hierarchy in nature is
formulated as assumptions \ref{a:physics} \ref{assumption:hierarchy} \ref{a:resolution}
\ref{a:measurability} \ref{a:containment}.

In \cref{sec:coupling}, S-System is introduced in \cref{def:s-system} to
recognize and represent the hierarchy of events from
measurement data, formulated measure-theoretically and based on probability
coupling theory (\cite{CIS-9904}). S-System formalizes the idea that a creature is not going to
reproduce an impartial representation of the world, it only captures the events
that cater to its need, e.g., the survival need to identify lion, and within
its reach and capacity, i.e., the amount of measurements it can gather. {\it
S-System recognizes a hierarchy of events from the measurements, not exactly in
the sense of physical reality --- if a creature never measured/saw a black
swan, it does not mean there aren't any --- but in the sense of a manually
created hierarchical groups of events, and each group is perhaps given a
tag/name.}  For examples, some low level object groups are named as edges;
groups in a relative higher level are named textures; groups in another higher
level are named body parts; groups in an even higher one are named body, e.g.,
lion.

In \cref{sec:expon-family-neur}, a NN is shown to be an implementation of an
S-system (shown in \cref{def:mlp_cge}), when the measure is being approximated
with compositional exponential family distributions (defined in
\cref{def:c_exp}). The unique properties of exponential family distributions
enable us to define a geometry structure on the intermediate feature space of
NNs based on information geometry. The most important discovery (for now) in such a
structure is that it shows that the ``distance'' between events (intermediate
features) is contracted as layers stack up, e.g., the intra-class difference
between an object and a slight deformed object (c.f. \cref{thm:cg}). In more details, Activation
Function is derived formally in \cref{def:relu}. The geometry structure of the
representation built by NNs is defined in \cref{def:nn_manifold} as stochastic
manifolds.  The section shows that a NN is an inference system to infer how
plausible groups of events forming a hierarchy have occurred, as S-System is
defined.

In \cref{sec:learn-fram-htms}, the learning framework of S-System (NNs) is
introduced. It formalizes how past repeated observations are summarized to
approximate the probability of groups of events given the current observation:
through optimizing an objective function.

The practicality to optimize the objective function is addressed in
\cref{sec:optim-landsc-s}. The hierarchical organization of NNs to recognize and
represent the hierarchical
physical world has made itself a complex system. {\it Contrary to existing
shallow models, or simple models, it is exactly the complexity built in
hierarchy that makes NNs the most powerful inference model.}  \expand{In a complex system, when the collective behavior of
events is stable, in the sense that it emerges on the overall interaction of
lower scale events, yet does not depends on any small subsets of them, it is
termed an {\it emergent} behavior. For instance, the magnetic force of a magnet
is a macroscopic phenomenon, yet it may emerge from the alignment of spin
magnetic moment of many elementary particles. A misalignment/disorder in a fraction of the
particles that does not exceed a critical points to cause phase transition
won't destroy the magnetic field of the magnet, only weakening it. More
examples and more thorough and in depth discussion about emergence could be
found in \cite{Kornberger2003} \cite{Nicolis:2012:FCS:2331101}
\cite{kadanoff2000statistical}. Contrary to the high constrained interaction in
the spinning particles, the neuron population in a NN is highly flexible and
adaptable.} A large collection of cooperative yet autonomous neurons,
formalized as assumptions \ref{a:boundedness} \ref{a:diversity}, gives NNs
the ability to partition events into arbitrary groups and infer the
plausibility of any groups of events (proved in
\cref{thm:landscape}), which is the emergent behavior emerging from
the disorder in the NN complex system. \expand{ More technically, formulated as
an optimization problem to minimize the error between the empirical probability
distribution and its parametric representation, though being non-convex, NNs
can reach zero error as long as NNs maintain its diversity while increasing its
neuron population.}

The four sections \ref{sec:coupling} \ref{sec:expon-family-neur}
\ref{sec:learn-fram-htms} \ref{sec:optim-landsc-s} respectively deal with the
definition of a hypothesis space, the geometry of the space, learning
objectives of the space, and the optimization landscape of the objective
functions. They combined form a preliminary theory of NNs and S-System, a
principle that assigns probability to physical events through learning on the
past experience, to infer the plausibility of events might happen in the
future.

{\bf Warning.} We will give a theory drawing techniques and ideas from a
range of fields --- probability coupling theory \cite{CIS-9904}, statistic
physics \cite{kadanoff2000statistical}, information geometry
\cite{amari2016information}, non-convex optimization \cite{Jain2017}, matrix
calculus \cite{magnus2007matrix}, mean field theory
\cite{kadanoff2000statistical}, random matrix theory \cite{taotopics}, and
nonlinear operator equations \cite{Helton2007}. As a word of warning, based on
the initial feedback obtained, it is very hard to understand respective parts
of the paper without having solid background in the above fields. We suggest
readers do prepare for a hard read.

\section{S-System: Physics, Conditional Grouping Extension and S-System}
\label{sec:coupling}

In this section, a mechanism, Hierarchical Measure Group and Approximate
System, nicknamed S-System, to
recognize and represent events in the physical world from measurements is
introduced with formalism from probability coupling theory (\cite{CIS-9904}) in a
measure-theoretical way.

\subsection{Physical Probability Measure Space and Sensor}
\label{sec:phys-prob-meas}

To begin with, we formally define the assumptions made on the physical world.
\begin{assumption}[Physics]
  \label{a:physics}
  All events in the physical world makes a probability measure space
$\mathcal{W} := (\Omega, \mathcal{F}^{\ca{W}}, \mu^{\ca{W}})$, where $\Omega$ denotes the event space,
$\mathcal{F}^{\ca{W}}$ is the $\sigma$-algebra on $\Omega$; $\mu^{\ca{W}}$ is the
probability measure on $\Omega$. We call $\ca{W}$ {\bf Physical Probability
  Measure Space (PPMS)}. To avoid confusion, we note that $\Omega$ denotes the event
space of PPMS throughout the paper.
\end{assumption}
\expand{
\begin{remark}
  For an example of the events, refer to \cref{sec:complex-system}; given $\omega,\omega'
  \in \Omega$ the intersection $\omega \cap \omega'$, union $\omega \cup \omega'$ and complement operations $\neg\omega$
  in $\mathcal{F}$ respectively denotes both $\omega, \omega''$ occurs, either one of $\omega,
  \omega''$ occurs, and $\omega$ does not occur. We shy away from giving an assumption of
  the meaning $\mu$, and merely rely on the axiomization by Kolmogorov, for it
is philosophical and metaphysical for now, though a reasonable
interpretation of the measure approximated by a NN is given later.
\end{remark}
}
\begin{assumption}[Hierarchy]
\label{assumption:hierarchy}
$\Omega$ has a hierarchical structure, which means $\Omega = \cup_{s \in
\mathcal{S}}\Omega_{s}$, where $s$ is named {\bf scaled parameter},
$\mathcal{S}$ is a {\bf poset}, i.e., a set with a partial order, $\Omega_{s}$ are
event spaces, and for $s, s''''' \in \mathcal{S}, s < s'$, $\Omega_{s'} \subseteq
\sigma(\Omega_{s})$, where $\sigma(\Omega_{s})$ is the $\sigma$-algebra
generated by $\Omega_{s}$.
\end{assumption}
  For $s, s' \in \mathcal{S}, s < s'$, we say $\Omega_{s'}$ is {\it composed} by $\Omega_{s}$. Furthermore, when $\omega_{s'''} \in
  \sigma(\cup_{s \in I_s}\{\omega_{s}\})$, where $I_s$ is an index set and for
  any $s \in I_s$, $\omega_{s} \in \Omega_{s}$,
we say $\omega_{s'}$ is composed by $\omega_{s}$.  As motivated in
\cref{sec:complex-system}, to perceive the events happening in the world,
measurements need to be collected, which is formalized as a r.e..
\begin{definition}[Measurement Collection]
  \label{cm:1}
  A measurement collection is a random function $X$ that supported on PPMS
$\mathcal{W}$ with an induced probability measure space $\mathcal{X} :=
(\Omega^{\ca{X}}, \mathcal{E}^{\ca{X}}, \mu^{\ca{X}})$, where $\Omega^{\ca{X}} := \{x | x: \mathbb{U} \to \mathbb{V}\}$ and
$\mathbb{U}, \mathbb{V}$ are unspecified the domain and codomain.
\end{definition}
 We make the following assumptions on $X$. It characterizes the capability and
limitation of a sensor and the phenomenon that for an event $\omega_{s'}$ composed
by a lower scale event $\omega_{s}$, the time/place/support where $\omega_{s'}$ happens
contains that of $\omega_{s}$.
\begin{assumption}[Resolution]
  \label{a:resolution}
  For any measurement collections, a lower bound of scale parameter $s_0$ exists, such that $\forall s
  \in \ca{S}$, $s$ is comparable with $s_0$, and $\forall \omega_{s} \in \Omega, s < s_0, \mu^{\ca{X}}(X(\omega_s))
  = 0$. \expand{In other words, part of measure of $\Omega$ in singular w.r.t. to
    $\mu$ of $\ca{X}$.} We call {\bf $\Omega_{s_0}$ the events of the lowest measurable scale}.
\end{assumption}
\begin{assumption}[Measurability]
  \label{a:measurability}
   measurements are physical, i.e., $\ca{E}^{\ca{X}} \subseteq \sigma(\Omega_{s_0})$.
\end{assumption}
\begin{assumption}[Containment]
  \label{a:containment}
  Given any two comparable scale parameter $s, s', s < s'$, $\omega_s \in \Omega_{s}, \omega_{s'} \in \Omega_{s'}$, where $\omega_{s'}$ is
composed by $\omega_{s}$, we have $\supp (X(\omega_{s})) \subseteq \supp (X(\omega_{s'}))$, where
  $\supp$ denotes the support of $X(\omega) \in E$, i.e., the domain of $X(\omega)$ where
  $X(w) \ne 0$ (we assume the zero element is defined, and indicates nothing
  has been measured).
\end{assumption}
\expand{
\begin{example}
  Suppose the sensor is an image sensor. $E = \mathcal{L}^{2}(\mathbb{R}^{2})$,
the square integrable function space defined on $\mathbb{R}$, and the scale
parameter could be interpreted as the distance $||x - y||_{2}$ between two
coordinates $x, y \in \mathbb{R}^{2}$ for a image $f \in E$ obtained by the image
sensor. In this case, a patch of the image is a lower scale event, which is
spatially contained in the whole image.
\end{example}
}

The containment phenomenon is troublesome, along with the phenomenon that it is
possible for any events $\omega, \omega' \in \Omega$ to have overlapping support $\supp(X(\omega)),
\supp(X(\omega'))$, even they do not have any composition relationship. This is an
inherent problem of measuring: it has collapsed all the events across scales
and within the same scale in the same measurement units of the sensor,
e.g., pixels at the same location in the image sensor. To perceive certain event
has occurred from $X$, a mechanism is needed to disentangle it from other events.

\subsection{S-System: Hierarchical Measure Group and Approximate System}
\label{sec:hier-meas-transp}

In this subsection, we introduce Hierarchical Measure Group and Approximate
System, nicknamed as S-System. Following the motivation described of S-System
in \cref{sec:complex-system}, \expand{a.k.a. to perceive is to select and group
events that serves certain needs of the creature but not to reconstruct
faithfully,} we create extensions of the probability measurable space
$\mathcal{X}$ that ``reproduce'' measure of higher scale events.  The
extensions are created hierarchically, by Conditional Grouping Extension
(CGE). For a review of the coupling theory and probability measure space
extension, please refer to \cref{sec:coupling-theory-1}.

\begin{definition}[Event Representation; (Partial) Conditional Grouping Extension]
  \label{def:cge}
  Let $T$ be a r.e. in measurable space $(E, \ca{E})$ defined on a
probability space $(F, \ca{F}, \mu)$, a {\bf Conditional Grouping Extension
(CGE)} of $T$ is created as the following by conditioning extension and splitting
extension.

First, a conditioning extension $(\Omega^{\hat{\ca{H}}^{e}},
\hat{\ca{H}^{e}}, \mu^{\hat{\ca{H}}})$ of $T$ is created with $((E, \ca{E}),
(\Omega^{\hat{\ca{H}}}, \hat{\ca{H}}))$ probability kernel $\hat{Q}(\cdot, \cdot)$, of which
an external r.e. $\hat{H}$ in measurable space $(\Omega^{\hat{\ca{H}}},
\hat{\ca{H}})$ is created with law
\begin{displaymath} \mu^{\ca{\hat{H}}}(\hat{H} | T) = \hat{Q}(T, \hat{H})
\end{displaymath} Then a splitting extension $(\Omega^{\ca{H}^{e}}, \ca{H}^{e},
\mu^{\ca{H}})$ of $(\Omega^{\hat{\ca{H}}^{e}}, \hat{\ca{H}}^{e}, \mu^{\hat{\ca{H}}})$ is
created with a $((E, \ca{E}) \otimes (\Omega^{\ca{H}}, \ca{H}), (\Omega^{\hat{H}},
\hat{\ca{H}}))$ probability kernel $Q(\cdot, \cdot)$ to support an external random
element $H$ in measurable space $(\Omega^{\ca{H}}, \ca{H})$ with law
$\nu$, of which
\begin{displaymath}
  \mu^{\ca{H}}(\hat{H} | H, T) = Q((T, H), \hat{H})\text{, and }\mu^{\hat{\ca{H}}}(\hat{H} | T) = \int Q((T,
  H), \hat{H})\nu(dH ; T)
\end{displaymath}
We assume that $\hat{Q}$ is a kernel parameterized by $W(T; \bv{\theta})$, a transport map
$W$ applied on $T$ parameterized by $\bv{\theta}$. The extension is well defined due
to \cite{CIS-9904} Theorem 5.1.

Let $\ca{M} := ((\Omega^{\ca{H}^{e}}, \ca{H}^{e}, \mu^{\ca{H}}), \{H, \hat{H}, T\})$, we
call $\ca{M}$ the event representation built on $T$ through a CGE ---
we define formally an {\bf event representation} is a pair, of which the first element is a
probability measure space, and the second element is a set of r.e.s
supported on the space, called {\bf random element set} of $\ca{M}$. When
absence of confusion, we just call $\ca{M}$ the event representation built on $T$.
$T$ is called the {\bf input random element} of $\ca{M}$; $H$ the {\bf group
indicator random element}; $\hat{H}$ the {\bf coupled random element};
$(H, \hat{H})$ the {\bf output random elements} when we would like to refer to
them in bunk; $\hat{Q}$ the {\bf coupling probability kernel}; $Q$ the {\bf
group coupling probability kernel}; $(\Omega^{\ca{H}^{e}}, \ca{H}^{e}, \mu^{\ca{H}})$
the {\bf coupled probability measure space}; $\mu^{\ca{H}}$ the {\bf coupled
probability measure}; $\nu$ the {\bf conditional group indicator measure}; $W(T;
\bv{\theta})$ the {\bf transport map} of $\ca{M}$. Given an $\omega \in \Omega^{\ca{H}}$, we
say $W^{-1}(\omega) \subset E$ is an {\bf event represented/indexed/grouped by} $H$.
Since CGE will be used recursively later, to emphasize, when $\ca{M}$ only
builds on a subset of output r.e.s of another event representation,
  to emphasize, $\ca{M}$ is called an event representation
built by a {\bf Partial CGE}.
\end{definition}
We explain why they are named as Conditional Grouping Extension and Event
Representation.  By assumption, $\hat{Q}$ is a probability kernel parameterized
by $W(T)$, e.g., the exponential family probability kernel $e^{\bv{w}^{T}\bv{x}
- \psi(\bv{w})}$, where $T := X, W(T) := \bv{w}^{T}X - \psi(\bv{w})$. Suppose
$X$, a measurement collection r.e., is supported by PPMS $\ca{W}$, from \cref{cm:1}. A
transport map $W$ applied on $X$ is a deterministic coupling $(X, W(X))$ that
transports the measure $\mu(A)$ of an event $A \in \ca{E}^{\ca{X}}$ to $W(A)$,
of which $W(X)$ is a r.e. on a measurable space $(\Omega^{\hat{\ca{H}}},
\hat{\ca{H}})$ with law $\mu^{\hat{\ca{H}}}$ supported on $\ca{W}$ where
\begin{displaymath}
  \mu^{\hat{\ca{H}}}(W(A)) = \mu^{\ca{W}}(X^{-1}(A)), A \in \ca{E}^{\ca{X}}, X^{-1}(A) \in \ca{F}^{\ca{W}}
\end{displaymath}
{\it That is to say $A$ is an event that are happening in the physical world,
and is being measured by $X$.} The goal of S-System is to estimate the
plausibility of the event $A$.  However, the problem is that we do not know
$\mu^{\ca{W}}$ (that's not to say we do not have an estimation of
$\mu^{\ca{W}}$ empirically). That's why CGE is needed. {\it CGE hypothesizes a
probability kernel $Q$ that approximates the probability $\mu(A)$ of events
being measured (conditioning extension) grouped by the r.e. $H$ created by
splitting extension.} Notice two key constructions to deal with two key
challenges here: for the enormity of the event space of PPMS, a.k.a. $\Omega$, only
events that happen along with current observation $X$ is estimated through
conditioning extension; for events happening along with $X$, {\it probability is
approximated in groups indexed by $H$} through splitting extension, which
physically could be broken down into countless smaller scale events that
compose $A$ and some of the sub-events won't be estimated. The design could be understood as
economic considerations, though probably it would be the only feasible solution
to reasonably approximate $\mu^{\ca{W}}$. Then, the r.e. set of $\ca{M}$ is the
manipulable object that directly connects with events in the physical world,
and is named event representation.

Yet, one more problem is looming around: how possibly $Q$ approximates
$\mu^{\ca{W}}(X^{-1}(A))$ reasonably? Suppose $A$ is a top scale event, by
\cref{assumption:hierarchy}, $A \in \sigma(\Omega_{s_L}) \subset
\sigma(\Omega_{s_{L-1}}) \subset \ldots \subset \sigma(\Omega_{s_0})$, where
$\ca{S}_{l} = \{s_l \}_{0\leq l \leq L, l \in \bb{N}}$ is a finite set of
scales. {\it Thus, to approximate $\mu^{\ca{W}}(W^{-1}(A))$ is to approximate the joint
distribution of events that compose $A$, which could be factorized into the
probabilities of events that compose $A$ and the probability of $A$ conditioning on
the sub-events}. This asks to apply CGE recursively, through which we get an S-system.

\begin{definition}[Hierarchical Measure Group and Approximate System]
  \label{def:s-system}
  A Hierarchical Measure Group and Approximate System (S-System) is a mechanism to extend the
  probability measure space of a measurement collection r.e. recursively
  according to a poset structure $\ca{S}^{\ca{X}}$ as described in \cref{alm:HMGAS}. The poset is
  called the {\bf scale poset} of the S-system. Ultimately,
  it creates an event representation $\ca{M}^{\bar{\ca{W}}} := (\bar{\ca{W}},
  \ca{O}^{\bar{\ca{W}}})$, where $\bar{\ca{W}} := (\bar{\Omega},
\ca{F}^{\bar{\ca{W}}}, \mu^{\bar{\ca{W}}})$ is the extended probability measure
space built and is called {\bf Approximated Probability Measure Space
  (APMS)}, and $\ca{O}^{\bar{\ca{W}}}$ is a r.e. set indexed by
elements of poset $\ca{S}^{\ca{X}}$. $\ca{M}^{\bar{\ca{W}}}$ is called the event representation built by S-System.
\end{definition}
\begin{algorithm}
  \caption{S-System. In the algorithm below, the \texttt{predecessor}(s) returns a index set $\bb{I}'$ that indexes
    elements in $\ca{S}^{\ca{X}}$ and $\forall i \in \bb{I}', s_i \leq s$ and
    \texttt{successor}(s) return a subset $\ca{S}'$ of $\ca{S}^{\ca{X}}$, where
    $\forall s_i \in \ca{S}', s_i \geq s$. For examples of the functions, refer to \cref{alm:func}.}
  \label{alm:HMGAS}
\begin{algorithmic}
  \Require $\ca{S}^{\ca{X}} := \{s_i\}_{i \in \bb{I}}$ is a poset with a
  minimal element $s_0$, whose elements are indexed by a set $\bb{I}$; $X$ is a
  measurement collection r.e. supported on $\ca{X} := (\Omega^{\ca{X}}, \ca{E}^{\ca{X}},
\mu^{\ca{X}})$, of which the events of the lowest measurable scale are $\Omega_{s_0}$
  \Ensure an event representation $\ca{M}^{\bar{\ca{W}}}$
  \State $\ca{M}_{s_0} \gets ((\Omega^{\ca{H}_{s_0}}, \ca{H}_{s_0}, \mu^{\ca{H}_{s_0}}) :=
  \ca{X}, \ca{O}_{s_0} := (X)), \ca{T}_{\text{out}} \gets \emptyset$,  $\ca{S}^{\ca{X}_t} \gets$ \Call{successor}{$s_0$}
  \While{$\ca{S}^{\ca{X}_t}$ is not empty}
    \State $\ca{S}^{\ca{X}_{t'}} \gets \emptyset$
    \For{$s \in \ca{S}^{\ca{X}_t}$}
      \State $\bb{I}' \gets$ \Call{predecessor}{s}
      \State $\ca{H} \gets \bigotimes_{i \in \bb{I}'}(\Omega^{\ca{H}_{s_i}},
      \mathcal{H}_{s_i}, \mu^{\ca{H}_{s_i}})$, $\ca{O} \gets \cup_{i\in \bb{I}'} \ca{O}_i$, $\ca{M} \gets (\ca{H}, \ca{O})$
      \State Build an event representation $\ca{M}_s := ((\Omega^{\ca{H}_{s}},
      \ca{H}_{s}, \mu^{\ca{H}_{s}}), \ca{O} \cup \{H_s, \hat{H}_s\})$ on
      $\ca{O}$ through conditional grouping extension that supports output
      r.e.s $(H_s, \hat{H}_{s})$
      \If {\Call{successor}{s} is empty}
        \State $\ca{T}_{\text{out}} \gets \ca{T}_{\text{out}} \cup \{\ca{M}_s\}$
      \Else
        \State $\ca{S}^{\ca{X}_{t'}} \gets \ca{S}^{\ca{X}_{t'}} \cup $\Call{successor}{$s$}
      \EndIf
    \EndFor
    \State $\ca{S}^{\ca{X}_{t}} \gets \ca{S}^{\ca{X}_{t'}} $
  \EndWhile
  \State  $\ca{M}^{\bar{\ca{W}}} \gets (\bigotimes_{i \in \bb{I}''}(\Omega^{\ca{H}_{s_i}},
  \mathcal{H}_{s_i}, \mu^{\ca{H}_{s_i}}), \cup_{i\in\bb{I}''}\ca{O}_i)$,
  where $\bb{I}''$ indexes all event representations now in the set $\ca{T}_{\text{out}}$
\end{algorithmic}

\end{algorithm}

\begin{algorithm}
  \caption{Example implementations of S-System functions {\tt successor} and {\tt predecessor}.}
  \label{alm:HMGAS_func}
\begin{algorithmic}
  \Function{successor}{$s$}
    \State
    \Return the set of elements in $\ca{S}^{\ca{X}}$ that are the immediate successors of
      $s$ (immediate successors of $s$ is the set of smallest elements that are
      comparable with $s$ and larger than $s$, though themselves are not comparable)
  \EndFunction
  \Function{predecessor}{$s$}
    \State
    \Return the set of indices of elements in $\ca{S}^{\ca{X}}$
    that are the immediate predecessors of $s$ (immediate predecessors of
    $s$ is the set of smallest elements that are comparable with $s$ and
    smaller than $s$, though themselves are not comparable)
  \EndFunction
\end{algorithmic}
\label{alm:func}
\end{algorithm}

\subsection{Related works}
\label{sec:related-works-1}

\subsubsection{Hierarchy}
\label{sec:hierarchy}

The idea that the data space that NNs process is hierarchically structured and
NNs are only operating in a rather small subset of the space, has been more or
less a folklore by the researchers in the neural network community. However,
the wide recognition of hierarchy has come late, mostly because the seminal work by
\cite{Krizhevsky2012} that proves the significance of hierarchy in NNs experimentally. The hierarchy
is mostly motivated by the imitation of biological neural networks (\cite{Fukushima1980}
\cite{Riesenhuber1999} \cite{Riesenhuber2000}), where neuroscience shows that it
has a hierarchical organization (\cite{Kruger2013}), and does not make the
connection to the hierarchy in nature, which is reasonable since at the time
NNs/Perceptron (\cite{Rosenblatt1958}) was invented, the Complex System
(\cite{Kornberger2003} \cite{Amderson1972}) that studies the hierarchy in nature
did not exist yet. The connection between hierarchy in nature and NNs has been
discussed qualitatively by physicists (\cite{Lin2017} \cite{Mehta2014}), though to
the best of our knowledge, a fully measure-theoretical characterization of the
hierarchy in the data space, described in \cref{sec:phys-prob-meas} does not
exist before. It gives a theoretical motivation of a hierarchically built
hypothesis space, i.e., S-System, contrary to the motivation of artificial NNs,
which is an imitation.

\subsubsection{Hierarchical Hypothesis Space of NNs}
\label{sec:nn-hypothesis-space}

Many works have been studying the hierarchical structure of the hypothesis space
of NNs. Though perhaps surprisingly, an informal idea similar with S-System has been underlying the
design of CNN (\cite{Lecun1998}) at the beginning, where in the unpublished
report \cite{Bottou1996}, they describe that it is better to defer
normalization as much as possible since it ``delimiting a priori the set of
outcomes'', and pass scores as unnormalized log-probabilities. However, perhaps
due to a lack of rigor, they removed the discussion in the formal
publication. The passing of scores corresponds to the deterministic coupling
that transports true measure in the PPMS, while normalization corresponds to
assuming a probability kernel to approximate the true measure
transported.

Further analysis on the hierarchical behavior of NNs waited for two
decades. Early pioneers analyzes from the perspective of kernel space and
harmonics. At the end of the dominant era of support vector machine (SVM),
\cite{Smale2009} seeks to give NNs a theoretical foundation in Reproducible
Kernel Hilbert Space (RKHS) (\cite{Vapnik1999} \cite{Scholkopf:2001:LKS:559923}),
which is an analogy but may only give limited insights. We will discuss
how RKHS relates to S-System later when we discuss the difference
between S-System and RKHS based nonlinear algorithms. Many works in this direction
have been done, either taking NNs as a recursively built RKHS
(\cite{Daniely2016a}), or applying the recursion idea to existing kernel methods
(\cite{Mairal2014a}). We do not aim to cover all kernel works. We envision it as a tool to aid
analysis, and design probability kernels in S-System, yet not as the fundamental
underpinning. A work (\cite{Anselmi2015a}) in the line of RKHS has
also sought foundation in probability measure theory, though its focus is the invariance and
selectivity of the one layer representation built by NNs. It studies the measure
transport due to compact group transformations, and points out that the output
of the activation function of NNs could be the probability distribution of low
dimensional projection of the measure of data and its transformations, which is
similar to the case where S-System only couples group indicator r.e. --- they
both analyze the grouping of measure transported by transport maps --- though
when taking on the hierarchical behavior, it falls back to RKHS, and think
recursion as ``distributions on distributions'' instead of coarse grained
probability coupling. We believe the work could be inspirational to further
refined analysis on r.e.s created by S-System. Under the umbrella of computational
harmonics, \cite{Mallat2012a} \cite{Mallat2016} understand NNs as a technique
that learns a low dimensional function $\Phi(x), x \in \ca{X}$ that
linearizes the function $f(x)$ to approximate on complex hierarchical symmetry
groups from a high dimensional domain $\ca{X}$. It achieves this by progressively contracting
space volume and linearizing transformation that consists of groups of local
symmetries layer by layer. However, the group formalism used is an
analogy that only rigorously characterizes Scattering Network (\cite{Mallat2012a}), a
hierarchical hypothesis space simplified from NNs, and does not characterizes
NNs. The group formalism is referred as the ``mathematical ghost'' in
\cite{Mallat2016}. We believe these works are important to further incorporate
symmetry structure in nature in S-System in future works.

More recently, \cite{Author2016} interprets NNs in Probabilistic Graphical Model (PGM).
It takes activation as
log-probabilities that propagate in the net. As the description suggests, it
confuses the transported measure to be approximated, and the approximated
probability obtained by a probability kernel. Thus, it has to rely on the Gaussian
assumption to justify the interpretation, of which the mean serves as
templates, and the noise free assumption to justify ReLU activation function. Also,
the assumption makes it a generative model that has to make assumptions on the
data distribution, while an S-system is able to only make assumptions on how
measure is supposed to group. From the spline theory perspective,
\cite{Balestriero2018} understands NNs as a composition of max-affine spline
operators, which implies NNs construct a set of signal-dependent,
class-specific templates against which the signal is compared via an inner
product. From S-System point of view, it is an analysis on
the functional form of coupled r.e.s of an S-system that assumes compositional
exponential probability kernels and does maximal estimation on
group indicator r.e.s. It connects more with the function
approximation results, that takes ``signal-dependent'' as a fact to see what that
implies, than the goal of S-System, i.e., giving a theoretical formal
definition and interpretation to NNs. We think it may contribute to the refined analysis of
decision boundaries in S-System in the future. Analogizing with statistical mechanics, \cite{Trevisanutto}
takes the group indicator r.e.s. with binary values as gates, of which the
expectation will multiply with the coupled r.e.s. to decide how much the
``computation'' done should be passed on to next layers. However, what is being
computed is left unspecified. As in the definition of S-System, the computation
is to extend the probability measure space of the measurement collection r.e. that aims to
approximate probability measure of events in the event space of PPMS. The group
indicator r.e.s. is not a gate, but serves to group measure. It behaves like a
gate when its value is binary, yet underlying it serves to create further
coupling of grouped measure. Thus, the analog does not unveil the deeper
principles underlying, e.g., probability measure space extension and the probability
estimation/learning happening in S-System (refer to \cref{sec:coupling}
\cref{sec:learn-fram-htms}).

\subsubsection{Machine Learning Algorithm Paradigm}
\label{sec:mach-learn-parad}

We envision S-System as an attempt that tries to investigate a
measure-theoretical foundation of algorithmic modeling methods
(\cite{Breiman2001}) for designing machine learning algorithms. Now we can see
NNs as an implementation of S-System, which is a way to transport, group and
approximate probability measure. From S-System, we can see that we do not need
to make assumptions on the distribution of data to justify that our model is
probabilistic --- the randomness comes from the data source itself, and it is the
probability measure space that a model is manipulating, not the probability
values. Thus, we can break from statistics methods developed ever since Ronald
Fisher that has to make assumptions on data, and proceed from there. This
measure manipulation paradigm may be a promising candidate to the theoretical
issue facing high dimensional data analysis (\cite{Donoho2000}). Thus, we
discuss current major algorithm paradigms in machine learning/high dimensional
data analysis, i.e., Support Vector Machine with Kernels (SVMK) and
Probabilistic Graphical Model (PGM).

It is well known that SVMK can be analogized to a NN with one hidden layer. The
hypothesis $f$ of SVM can be expressed as a linear combination of inner product
between test samples $\bv{x}_i$ and support vectors $f(\cdot) = \sum_{i} \alpha_i k(\cdot,
\bv{x}_i)$, where $k$ is the kernel function, and $\alpha_i$ scalars. Writing
$f$ in the form of $f(X) = \sum_{i} \alpha_i k(X, x_i)$, it can be seen that
the hypothesis is actually a deterministic coupling, where $f$ is the transport
map. As happening in \cref{sec:learn-fram-htms}, the training of SVM is also
minimizing a surrogate risk between the true data probability measure and the
transported measure, though no probability distributions are ever
introduced. The probability kernels in S-System is replaced by a positive
semi-definite (PSD) kernel, whose output value is a real number indicating
something similar with the coupled probability measure of S-System. This observation may seem
surprising, however, it makes much sense when we notice the fact that
probability is just a function. SVM is a function approximation techniques
designed specifically for the case where the data are of high dimensional, yet
the number of samples available is small. To combat the curse of dimensionality, it
uses a PSD integral operator (\cite{Partial1994}) that maps the sample to a high
dimensional space, which can be taken as templates, and only approximates
measure that is in the vicinity of those templates and ignores the rest of the
space. The kernel can also be built hierarchically, which is discussed in
\cref{sec:nn-hypothesis-space}. For the time being, S-System does not contain SVMK
as a special case, while we envision by properly generalizing the probability
kernels in CGE, a large class of algorithms may include SVM.

As for PGM, it is a special case of S-System. As mentioned repeatedly throughout
the paper, S-System merely makes assumptions on how measure is supposed to group,
without making assumptions on the actual distribution of the data. The learning
framework of S-System described in \cref{sec:learn-fram-htms} is actually the same
as PGM when only considering the unsupervised case, where assumptions on data
distribution have to be made. Thus, S-System is a superset of algorithms including
PGM. The graph in PGM is actually a poset. However, the insight comes from
where they differ. Relying heavily on the assumptions on the distribution of
data, which is in reality unknown, it introduces large model biases, which
perhaps is the reason why
it alone cannot compete with NNs on complex high dimensional data. Furthermore,
S-System is naturally compatible with supervised labels, since hidden
variables/group indicator r.e.s map one-to-one to labels, which dictates how
measure should be grouped. This point is discussed more thoroughly in
\cref{sec:learn-fram-htms}, where supervised and unsupervised learning are
taken as dual perspectives on the same object.

\section{Geometry of NNs (and S-System), and Neural Networks From the First Principle}
\label{sec:expon-family-neur}

In the previous section, a mechanism S-System is introduced to transport, group
and approximate probability measures that are of interest. It focuses on
deriving a mechanism to recognize events through measurements from the first
principle.  In this section, we will show that Multiple Layer Perceptrons (MLP)
(\cite{DavidE.Ruineihart1985}) is an implementation of an S-system. {\it The derivation serves as a proof
of concept, and as an example of S-System}, though we note that all existing NN architectures, e.g., Residual
Network (\cite{He2015}), Convolutional Neural Network (\cite{Simard2003}),
Recurrent Neural Network (\cite{Hochreiter1997}), Deep Belief Network \citep{Hinton2006}
(\cite{Hinton2006}) could be derived by using different measurable spaces, posets,
probability kernels and successor, predecessor functions, along with manifold
possibilities of new architectures. In the derivation, we will see classical
activation functions emerging naturally. Then, we go further to endow geometry
on event representations by defining the proper manifold structure on S-System
using information geometry. It enables us to quantitatively prove the benefits
of hierarchy that MLPs implement coarse graining that contracts the variations in
the lower scale event spaces when creating higher scale event extensions, which
plays the same role as RG in physics.

\subsection{Theoretical Derivation of Activation Functions and MLPs}
\label{sec:theor-deriv-nn}

Let the CGE in \cref{def:s-system} be MLPCGE (\cref{def:mlp_cge}), the
$\tilde{\bv{t}}$ in MLPCGE be obtained by transport map ReLU (\cref{def:relu}),
and the scale poset $\ca{S}^{\ca{X}}$ be a chain, i.e., a poset where all
elements are comparable. By \cref{alm:HMGAS}, we would obtain a MLP. The
definitions are given in the following.

\begin{definition}[MLP Conditional Grouping Extension]
  \label{def:mlp_cge}
  An MLP Conditional Grouping Extension (MLPCGE) is a CGE with the
  following measurable space and parametric forms of probability kernels
  \begin{gather*}
    (E, \ca{E}) = (\bb{D}^{n}, \ca{D}^{n}) \bigotimes (\bb{R}^{n}, \bb{B}(\bb{R}^{n})),
    \nu(\bv{h}|\bv{t}) = e^{\bv{h}^{T}\bv{W}^T\tilde{\bv{t}}}/\sum_{\bv{h}} e^{\bv{h}^{T}\bv{W}^T\tilde{\bv{t}}}\\
    \hat{Q}(T, \hat{H})
    = \hat{q}(\bv{t}, \hat{\bv{h}})
    := e^{\bv{1}^{T}\hat{\bv{h}}(\bv{t}) }
    /(\int e^{\bv{1}^{T}\hat{\bv{h}}(\bv{t}) }
    d\mu(\bv{t}))
    =  e^{\bv{1}^{T}\bv{W}^{T}\tilde{\bv{t}} }
    /(\int e^{\bv{1}^{T}\bv{W}^{T}\tilde{\bv{t}} }d\tilde{\mu}(\tilde{\bv{t}}))\\
    Q((T, H), \hat{H})
    = q((\bv{t}, \bv{h}), \hat{\bv{h}})
    := e^{\bv{h}^{T}\hat{\bv{h}}(\bv{t}) }
    /(\int e^{\bv{h}^{T}\hat{\bv{h}}(\bv{t})}
    d\nu(\bv{h}|\bv{t})d\mu(\bv{t})
    = e^{\bv{h}^{T}\bv{W}^{T}\tilde{\bv{t}} }
    /(\int e^{\bv{h}^{T}\bv{W}^{T}\tilde{\bv{t}}}
    d\nu(\bv{h}| \bv{t})d\tilde{\mu}(\bv{\tilde{t}}))
  \end{gather*}
  where $\bb{D}^{n}$ is a $n$-dimensional discrete-valued field, i.e., $\{0, 1\}^{n}$ or $\{-1, 1\}^{n}$, $\ca{D}^{n}$
is the $\sigma$-algebra generated by $\bb{D}^{n}$, $\bv{W}$ is a matrix (in this case, the
transport map $W(T; \bv{\theta})$ is the matrix $\bv{W}$ and parameters
$\bv{\theta}$ are $\bv{W}$), $\bv{t}, \hat{\bv{h}}, \bv{h}$ are realizable values of
r.e.s $T, \hat{H}, H$, and $\tilde{\bv{t}}$ is obtained by
applying a yet unspecified transport map on $\bv{t}$ --- for now, it could be
just taken as the output of an identity mapping and other possible forms are
introduced when discussing activation functions --- and
$\tilde{\mu}(\tilde{\bv{t}})$ is the law on $\tilde{\bv{t}}$ induced by the law
$\mu(\bv{t})$ on the input r.e. of MLPCGE. The meaning of the rest of the symbols is
same with those in \cref{def:cge}.
\end{definition}

Note that it is not possible to compute $\hat{q}(\bv{t}, \hat{\bv{h}})$, for
$\mu(\bv{t})$ is unknown. However, we can compute $\nu$ faithfully! This is because $H$
is a manual creation/grouping instead of inherent events in
PPMS Here, with some further reasoning, we will have the marvelous trick done
by NNs, i.e., the Activation Function (AF). The key is only to build a full, or
partial CGE upon r.e.s created by a previous CGE, using an
estimated value of $H$. The deeper principles of the estimation are described in
\cref{sec:learn-fram-htms}, which is the maximization of expected data log
likelihood, and is part of the learning framework of S-System. When a full CGE
is created upon output r.e.s. of a previous CGE, $\bb{D}$ is $\{0, 1\}^{n}$, and the
estimation is done through expectation or maximum, we recover the currently
best performing activation function Swish (\cite{Ramachandran2017}) or ReLU
(\cite{Glorot2011}) respectively; when a partial CGE is created on the group
indicator r.e.s, the estimation is done through expectation, and $\bb{D}$ is
$\{0, 1\}$ or $\{1, -1\}$, we recover classical activation functions Sigmoid or
Tanh respectively.

We derive ReLU as an example. The group indicator r.e.s $H$ divides the
measure transported from the event space of input r.e. $T$ to the event space of
$\hat{H}$ into groups. Intuitively, if $H$ divides the measure into two groups indexed
by elements of $\bb{D}$, and we assume $1$ collects the measure corresponds to an event
collection while $0$ collects the complement of the event collection (meaning
the event collection does not occur), given a
realization $\bv{t}$ of $T$, to recognize higher scale events composed by lower
scale events represented by $H$, we would like to {\it estimate} what events
are present in $\bv{t}$, and create {\it further coupling} with another CGE on
the events that are present. Formally,

\begin{definition}[Rectified Linear Unit (ReLU)]
  \label{def:relu}
   Let $T := (H, \hat{H})$ be r.e.s created by a CGE, an estimation
   $\tilde{\bv{h}}$ of a realization of the r.e. $H$ is obtained by
   \begin{displaymath}
     \tilde{h}_i = \argmax_{h \in \bb{D}} \nu(h | \bv{t}) = e^{h\bv{W}_{:i}^{T} \tilde{\bv{t}}} / \sum_{h}e^{h\bv{W}_{:i}^{T} \tilde{\bv{t}}}
   \end{displaymath}
   A further coupling is created by MLPCGE upon $T$, of which $\tilde{\bv{t}}$ is
   the estimated realization of the r.v. obtained by applying a transport map $\text{ReLU}$ on $T$
   \begin{displaymath}
     \text{ReLU}(T) := H \odot \hat{H}
   \end{displaymath}
    where $\odot$ denotes Hadamard product. Operationally, $\text{ReLU}$ is a binary
    mask $\bv{h}$ applying on the outputs (preactivation) $\hat{\bv{h}}$ of the transport map
    $\bv{W}$. \expand{The $i$th dimension $h_i$ of $\bv{h}$ can be estimated separately
    because $h_i, h_j$ are conditionally independent conditioning on $\bv{t}$
(\cite{Hinton2006}).}
\end{definition}

As can be seen, AFs is not a well defined
object, which is actually a combination of operations from two stages of
computation.

\subsection{NN Manifold and Contraction Effect of Conditional Grouping Extension}
\label{sec:contr-effect-coarse}

In \Cref{sec:coupling}, motivated by the hierarchy assumption
\ref{assumption:hierarchy}, we designed S-System. Here, using MLP as an
example, and also a proof of concept, we quantitatively show the benefits of
hierarchical grouping done in S-System by showing that irrelevant/uninterested
variations in the lower scale events could be gradually dropped by repeatedly
applying CGE, characterized by ``shrinking distance'' between events.

To characterize the distance, we need a geometry structure on event representations. We
give an initial construction built on information geometry
(\cite{amari2016information}). For a review of manifold and information geometry,
please refer to \cref{sec:manifold} \ref{sec:information-geometry}. To begin with, we define
\begin{definition}[Compositional Exponential Family of Distributions]
  \label{def:c_exp}
  The form of {\bf compositional probability distribution of exponential
family} is given by the probability density function
  \begin{displaymath}
    p(\bv{x}, \bv{h}; \bv{\theta}) d\bv{x}d\bv{h}
    = e^{(k(\bv{h}, \bv{x}; \bv{\theta}) - \psi(\bv{\theta}))}d\mu(\bv{x}) d\nu(\bv{h}),
    k(\bv{h}, \bv{x}; \bv{\theta}) = \langle \bv{f}(\bv{\theta}; \bv{h}), \bv{g}(\bv{x}) \rangle
  \end{displaymath}
  where $\bv{x}$ is realizable values of a multivariate random variable, $k$
is a function called {\bf compositional kernel} that for a given $\bv{h}$, $k$ is
the inner product between certain vector function $\bv{g}(\bv{x})$, called {\bf
sufficient statistic}, (of
which the component functions are linearly independent) and certain vector
function $\bv{f}(\bv{\theta}; \bv{h})$, called {\bf composition function}, $\psi(\bv{\theta})$ is
the normalization factor, and $\mu, \nu$ ares the laws on r.v. $\bv{x}, \bv{h}$,
respectively.
\end{definition}
Conditioning on $\bv{h}$, $p(\bv{x} | \bv{h}; \bv{\theta}) =
e^{k(\bv{h}, \bv{x} ; \bv{\theta})}d\mu(\bv{x})$ is of the exponential family. Actually,
it is of Curved Exponential Family (\cite{Amari1995}). The
parametric form of kernel $Q$ of MLPCGE is of the compositional exponential
family, where $k(\bv{h}, \bv{x}; \bv{\theta}) = \langle \bv{h}^{T}\bv{W}, \bv{x\rangle}$,
$\bv{f}(\bv{\theta}; \bv{h}) = \bv{h}^{T}\bv{W}, \bv{g}(\bv{x}) = \bv{x}$.

\begin{definition}[Neural Network Manifolds]
  \label{def:nn_manifold}
  Let $\ca{M}$ be an event representation built on a measurement collection r.e. $X$
through an S-system. If probability kernels $Q, \hat{Q}$ of all CGE in the S-system are of the compositional
exponential family, of which the composition kernel is parameterized by the
CGE transport map, then the function space $M_s$ of measure
$\mu^{\ca{H}}_s(\hat{H}, T | H), s \in \ca{S}^{\ca{X}}$, where $\ca{S}^{\ca{X}}$ is
the scale poset of the S-system, is a Riemannian manifold with the
following properties:
\begin{itemize}
\item $M_s$ has a coordinate system $\bv{\eta}_s|_{\bv{h}} = (\eta_1, \ldots, \eta_n)$ that is the dual
affine coordinate system of an exponential family distribution,
where
  \begin{displaymath}
    \bv{\eta}_s|_{\bv{h}}
    := \nabla_{\bv{f}(\bv{\theta}; \bv{h})} \psi(\bv{\theta})
    = \bb{E}[\bv{t} | \bv{h}]
    = \int\bv{t}\,d \mu^{\ca{H}}_s(\hat{H}, T | H)
    = \int\bv{t}\,q((\bv{t}, \bv{h}), \hat{\bv{h}})d\mu(\bv{t})
  \end{displaymath}
  $\bv{\theta}$ is the parameters of the transport map $W(T; \bv{\theta})$,
  $\nabla_{\bv{f}(\bv{\theta}; \bv{h})}$ takes derivatives w.r.t. composition function of
  $k$ and $\bv{t}$ is realizations of $T$. We call the
  coordinates {\bf neuron coordinates}.
\item $M_s$ has a Riemannian metric derived by the second order Taylor
  expansion of the dual Bregman divergence defined by
  \begin{displaymath}
    D_{\psi^{*}}[\bv{\eta}_s'|_{\bv{h}}:\bv{\eta}_s|_{\bv{h}}] := \psi^{*}(\bv{\eta}_s'|_{\bv{h}}) - \psi^{*}(\bv{\eta}_s|_{\bv{h}}) - \nabla\psi^{*}(\bv{\eta}_s|_{\bv{h}})^{T}(\bv{\eta}_s'|_{\bv{h}} - \bv{\eta}_s|_{\bv{h}})
  \end{displaymath}
  where $\psi^{*}$ is the Legendre dual of $\psi$. We call the divergence defined
  {\bf neuron divergence}.
\end{itemize}
\end{definition}

In the above definition, the stochastic manifold is defined by conditioning on
group indicator r.e.s $H$. To appreciate the definition, let's return back to
MLP. Let $\ca{M}$ be the event representation built on a measurement collection r.e. $X$ by a
MLPCGE, i.e., the measure of output r.e.s being $\mu^{\ca{H}} =
e^{\bv{h}^{T}\bv{W}^{T}\bv{x} - \psi(\bv{W})}\mu(\bv{x})\nu(\bv{h}|\bv{x})$. When $\bv{h}$ is fixed, letting
$\bv{f}_0 = \bv{h}^{T}\bv{W}^{T}$ we have $\mu^{\ca{H}}(\bv{x}|\bv{h}) =
e^{\bv{f}_{0}^T\bv{x} - \psi(\bv{W})}\mu(\bv{x})$. It is known (\cite{Nielsen2009}) that the
expectation statistics, i.e., $\bv{\eta}|_{\bv{h}} =
\nabla_{\bv{f}_0}\psi(\bv{f}_0(\bv{W}))$, uniquely determines
$\mu^{\ca{H}}(\bv{x}| \bv{h})$. It implies that given $\bv{h}$,
$\mu^{\ca{H}}$ is a probability distribution, of which the most ``salient''
feature is the expectation. This explains why the visualization of NN
representations tends to be templates (\cite{Mahendran2014} \cite{Zhang2018}),
and the template based theories (\cite{Riesenhuber1999} \cite{Author2016}
\cite{Balestriero2018}) are partially right. Thus, the group indicator $\bv{h}$
represents the events of $\ca{M}$, of which the expectation is the representative. Let the transport map of $\ca{M}$ be $W: \bv{x}
\mapsto \bv{h}^{T}\bv{W}^{T}\bv{x}$, $\mu^{\ca{H}}(\bv{x} |
\bv{h})$ approximates the measure $\mu^{\ca{W}}(X^{-1}W^{-1}(A)), A \subseteq
\bb{R}^{n}$ in PPMS. That's why instead of using the canonical
coordinate of exponential family distribution, we use its dual affine
coordinate. Though essentially the two coordinate systems are dual views on
the same object, we define the manifold this way to characterize the fact that
for a given NN, $D_{\psi^{*}}$ characterizes the degree
of separation between two events $A, A' \in \Omega$ of which the probability
measures $\mu^{\ca{W}}(A), \mu^{\ca{W}}(A')$ are transported by $W(X(A)),
W(X(A'))$ and approximated by $\mu^{\ca{H}}$. Furthermore, the divergence is
defined by conditioning reflects the fact events can be compared using multiple
criteria, though to evaluate its implication more works are needed. For
how the above definitions relate to classical definitions on NNs
in information geometry, please refer to \cref{sec:related-works-2}.

\expand{As a last remark, in the above definition, we only define the function
  space of $\mu^{\ca{H}}_s(\hat{H}, T | H)$ is a manifold without
touching the overall structure of their interaction, which we believe a much
richer structure lies in, and is an important future direction.}

By a directed application of theorem 14 of \cite{Liese2006}, which is called
information monotony in \cite{amari2016information}, we have

\begin{theorem}[Contraction of divergence between events]
  \label{thm:cg}
  Let $\ca{A} := \ca{A}' \cup \ca{A}''$ be an event collection in
  event space $\Omega$ of PPMS consisting of two event collections, and two
  measurement collection r.e.s $X', X''$ are created for $\ca{A}',
  \ca{A}''$ respectively.
  Let $\tt{S}$ be an S-System, $\ca{M}', \ca{M}''$ be event representations
  built on $X', X''$ by $\tt{S}$ respectively, and $\ca{S}^{\ca{X}}$ be the scale
  poset of $\tt{S}$. Then $\forall s_1, s_2 \in \ca{S}^{\ca{X}}, s_1 < s_2$, we have
  \begin{displaymath}
    D[\bv{\eta}'_{s_1}|_{\bv{h}_1}:\bv{\eta}''_{s_1}|_{\bv{h}_1}] \geq D[\bv{\eta}'_{s_2}|_{\bv{h}_2}:\bv{\eta}''_{s_2}|_{\bv{h}_2}]
  \end{displaymath}
  where $\bv{\eta}'_{s_1}, \bv{\eta}'_{s_2}$ are the neuron coordinates at scale $s_1, s_2$ of
  $\ca{M}'$ respectively; so are $\bv{\eta}''_{s_1}, \bv{\eta}_{s_2}''$ of those of $\ca{M}''$;
  $\bv{h}_{1}, \bv{h}_{2}$ are arbitrarily fixed realizations of group indicator
  r.e.s at scale $s_1, s_2$ respectively.
\end{theorem}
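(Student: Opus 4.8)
The plan is to identify the neuron divergence of \cref{def:nn_manifold} with a Kullback--Leibler divergence between two members of the conditional exponential family, to observe that the passage from scale $s_1$ to scale $s_2$ inside a \emph{fixed} S-System is realized by a common Markov kernel, and then to invoke monotonicity of $f$-divergences under Markov kernels --- Theorem 14 of \cite{Liese2006}, the ``information monotony'' of \cite{amari2016information} (in particular for $\KL$, the case $f(t)=t\log t$).

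First I would record the standard exponential-family identity: for a (curved) exponential family with log-partition $\psi$, natural coordinate $\bv{\theta}$ and dual affine (expectation) coordinate $\bv{\eta}=\nabla\psi(\bv{\theta})$, one has $D_{\psi^{*}}[\bv{\eta}_1:\bv{\eta}_2]=D_{\psi}[\bv{\theta}_2:\bv{\theta}_1]=\KL\!\left(p_{\bv{\theta}_1}\,\|\,p_{\bv{\theta}_2}\right)$. Applying this to $(\mu^{\ca{H}}_{s})(\hat{H},T\mid H=\bv{h})=e^{\langle\bv{f}(\bv{\theta};\bv{h}),\bv{g}(\bv{t})\rangle-\psi(\bv{\theta})}d\mu(\bv{t})$, which by the discussion following \cref{def:c_exp} is a curved exponential family with natural parameter carried by the composition function and sufficient statistic $\bv{g}$, identifies
\[
  D[\bv{\eta}'_{s}|_{\bv{h}}:\bv{\eta}''_{s}|_{\bv{h}}]=\KL\!\left((\mu')^{\ca{H}}_{s}(\,\cdot\mid\bv{h})\,\|\,(\mu'')^{\ca{H}}_{s}(\,\cdot\mid\bv{h})\right),
\]
where $(\mu')^{\ca{H}}_{s},(\mu'')^{\ca{H}}_{s}$ are the coupled measures of $\ca{M}',\ca{M}''$ at scale $s$.

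Next I would exhibit the scale-$s_1$-to-scale-$s_2$ transition as a Markov kernel. Since $s_1<s_2$ are comparable in $\ca{S}^{\ca{X}}$, there is a finite chain $s_1=\sigma_0<\cdots<\sigma_m=s_2$ of immediate-successor steps along which \cref{alm:HMGAS} proceeds; each step builds $\ca{M}_{\sigma_{j+1}}$ from r.e.s already produced by a conditional grouping extension, i.e.\ by composing the transport map $W(\cdot;\bv{\theta})$ with the probability kernels $\hat{Q}$ and $Q$, each of which is a Markov kernel. Composing these along the chain (with the scale-$s_2$ group indicator frozen at $\bv{h}_2$ and the scale-$s_1$ one at $\bv{h}_1$) gives a single kernel $K$ with $(\mu')^{\ca{H}}_{s_2}(\,\cdot\mid\bv{h}_2)=K[(\mu')^{\ca{H}}_{s_1}(\,\cdot\mid\bv{h}_1)]$ and likewise for the double-primed measures. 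The key point is that $\ca{M}'$ and $\ca{M}''$ are produced by the \emph{same} S-System --- same poset, transport maps and probability kernels --- so $K$ is \emph{common} to both; only the input laws differ, being the measures transported from $X'(\ca{A}')$ and $X''(\ca{A}'')$. Applying Theorem 14 of \cite{Liese2006} to the common $K$ yields
\[
  \KL\!\left((\mu')^{\ca{H}}_{s_2}(\,\cdot\mid\bv{h}_2)\,\|\,(\mu'')^{\ca{H}}_{s_2}(\,\cdot\mid\bv{h}_2)\right)\le\KL\!\left((\mu')^{\ca{H}}_{s_1}(\,\cdot\mid\bv{h}_1)\,\|\,(\mu'')^{\ca{H}}_{s_1}(\,\cdot\mid\bv{h}_1)\right),
\]
and re-translating both sides via the first step gives $D[\bv{\eta}'_{s_1}|_{\bv{h}_1}:\bv{\eta}''_{s_1}|_{\bv{h}_1}]\ge D[\bv{\eta}'_{s_2}|_{\bv{h}_2}:\bv{\eta}''_{s_2}|_{\bv{h}_2}]$, as claimed.

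The step I expect to be the main obstacle is presenting the scale-to-scale transition as a \emph{genuine} Markov kernel acting on the \emph{conditional} laws. Conditioning on a group-indicator value is a Bayesian reweighting by $\nu(\bv{h}\mid\cdot)$ rather than a linear operation on measures, so with $\bv{h}_1,\bv{h}_2$ fixed I would need to verify that the composite ``reweight, push through $\hat{Q}$ and $Q$, restrict'' is affine in the measure and hence covered by the hypotheses of Theorem 14 --- most cleanly by taking the fixed-$\bv{h}$ conditionals themselves as the objects being processed and checking that the induced between-scales map is a bona fide transition probability. A secondary, routine point is pinning down the orientation of the Bregman/KL identity (as above) so that the final inequality comes out in the stated direction.
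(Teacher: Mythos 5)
Your proposal takes essentially the same route as the paper: the paper's entire proof of \cref{thm:cg} is the single sentence invoking Theorem 14 of \cite{Liese2006} (``information monotony''), i.e.\ monotonicity of $f$-divergences under Markov kernels, which is exactly the engine of your argument, applied after translating the neuron (dual Bregman) divergence into a KL divergence between the conditional exponential-family laws. You in fact supply strictly more detail than the paper does --- in particular, the obstacle you flag, namely whether the scale-$s_1$-to-scale-$s_2$ transition acting on the \emph{conditional} (fixed-$\bv{h}$) laws is a bona fide Markov kernel given that conditioning on a group indicator is a nonlinear reweighting, is a genuine gap that the paper's one-line ``directed application'' never acknowledges or resolves.
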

\begin{example}[Contraction of divergence induced by deformation]
  \label{example:deformation}
  Let $g$ be a diffeomorphism group, and $X' = g. X$, the deformed r.e.
  created by applying $g$ on a r.e. $X$. By the above theorem,
  for event representations created by an S-system, coordinated as $\bv{\eta}|_{\bv{h}},
\bv{\eta}'|_{\bv{h}}$, their distance is gradually contracted in term of neuron
divergence. For a review of diffeomorphism group, refer to \cref{sec:manifold}.
\end{example}
The theorem has twofold significance. First, it shows that a recursive
application of CGE would shrink the discrepancy between events, thus possessing
the capacity to contract irrelevant variations in the events, though further
characterizations are needed to give operational guidance. It completes
the incomplete analog between NNs and RG (\cite{Lin2017}), which lacks a
physical quantity to renormalize to produce large scale properties. The
physical quantity is shown as the neuron divergence between event
representations, or more informally, semantic difference between samples, {\it
  though we
note essentially the large scale quantity is group indicator r.e.s of high
scales that represent events that gradually have semantic meaning, of which the
neuron divergence is a property.} Contrary to clear-cut physical quantities like temperature
emerging in physics through RG, {\it a meaningful event group emerges through
learning, which leads us to the section \ref{sec:learn-fram-htms} \ref{sec:optim-landsc-s}.} Second, along with
\cref{def:nn_manifold}, it identifies the proper object if the geometry of NNs is
to be studied. For example, to study the symmetry in the geometry, the object
to investigate is the symmetry in the event space, of which the diffeomorphism
group is a type of symmetry, and invariance is the mapping of events to the
same neuron coordinates. This is in contrast with existing works that study
symmetry by studying the equivariance (\cite{Cohen2016} \cite{Dieleman2016}),
or invariance (\cite{Anselmi2015a}), or linearization of diffeomorphism
(\cite{Mallat2016}) in NNs through studying the changes induced by group
actions in feature maps in the intermediate layers of a sample, which is a
rather ad hoc object. The event space perhaps is the ``mathematical ghost''
lurking in \cite{Mallat2016}.

\subsection{Related works}
\label{sec:related-works-2}

In the related works on hierarchical hypothesis space discussed earlier, all of
them have their own geometry, we only discuss related works in this subsection
that are related to the geometry defined in \cref{sec:contr-effect-coarse}.

Most of the works we are aware of that try to endow a geometry on NNs through
information geometry (IG) were done before the deep learning era, not
surprisingly, by Amari, who developed IG. All the works study NNs with a single hidden
layer. \cite{amari2007methods} formulates the manifold parameterized by all parameters
of a NN as neuromanifold, while in \cref{sec:contr-effect-coarse}, the manifold
we defined focuses on the submanifolds indexed by a scale poset, which will be
discussed more in the next paragraph. Actually, the neuromanifold is the
stochastic manifold consisting of possible probability measure on the random
element set of the event representation built by S-System. Two directions of
analysis have been made. The first is to study the behavior of the curved exponential families
obtained by conditioning, which is done in
\cite{Amari1995}, and falls in the category of generative training. The other is
to study supervised trained NNs, and study the neuromanifold, with a focus on
the impact of singularities on training dynamics (\cite{Amari2006}). The later
proposed the Natural Gradient Descent methods, and many works have been working
on it thereafter, which we will not discuss. The study on the hierarchy has
been limited on decomposition of high order interactions in a single hidden
layer NN (\cite{Amari2001}) without attacking the recursion in NNs, though we
tend to think NNs with more layers unroll higher order interactions, but we do
not find that they pursue this path. As mentioned, the study of hierarchical
behaviors of NNs has been absent, which is the focus of this paper, and is
emphasized in the paragraph below.

The geometry defined in \cref{sec:contr-effect-coarse} is to investigate the
hierarchical geometry of NNs. The compositional exponential family gives the
definition of Neural Network Manifolds that properly identifies the curved exponential
families, or in other words, submanifolds, in a probability family built by
the overall parameter space of NNs, which is complicated, e.g., containing
singularities (\cite{Amari2006}). Note that we do not differentiate submanifolds
with manifolds in the main content to avoid clutters. As discussed, the submanifolds are well
represented by their expectation statistics, and the definition identifies how
coarse graining in divergence happens in \cref{thm:cg}. Thus, definitions
given are distinctive in characterizing the hierarchical geometry of NNs, which
is absent in previous works, which either stay in the realm of single hidden
layer (\cite{Amari1995} \cite{Amari2001}), or take the whole parameter space as
the parameterization of a manifold that contains singularities (which
rigorously is not a manifold) (\cite{Amari2006}), though we are well aware that the
works present in this paper are merely scratching the surface. Our focus for
now is merely to show the coarse graining contraction effect of CGE quantitatively, and
much more works are to be done, e.g., the hierarchical and within-layer
interactions between these submanifolds. As a concrete example, it is known that
the EM algorithm has an IG interpretation (\cite{Amari1995}). The expectation, KL
divergence minimization interpretation of the back propagation algorithm in
\cref{sec:learn-fram-htms} can be interpreted similarly from the IG
perspective. Thus, \cite{Amari1995} can be generalized to NNs with arbitrary
number of layers, and in generative or supervised training settings. It implies the two
directions mentioned in the previous paragraph can be unified, though further
analysis on its impact, e.g., the analysis of singularities, needs more works.

Very recently, at the time of writing this paper, a few reports have been
submitted on the archive that try to attack the supervised deep NNs
(\cite{Amari2018} \cite{Amari2018a}). But again, they follow their old idea that
analyzes the whole neuromanifold. It assumes weights and biases of NNs to be
Gaussian, and study how properties related to the distribution of activations
of each layer change, e.g., fisher information matrix, without trying to
formally define the geometry, or the submanifold structure in the intermediate layers
of NNs.

Lastly, we note that \cite{Lin2017} also tries to discuss the coarse graining
effect in term of the information monotony phenomenon as ``information
distillation'', but it does that rather generally and qualitatively, does not put
the phenomenon in an exact NN context, and not make the connection between it
and the geometry in information.

\section{Learning Framework of S-System (and NNs)}
\label{sec:learn-fram-htms}

\subsection{Learning framework}
\label{sec:learning-framework}

The previous sections defines S-System $\tt{S}$ as a mechanism that given a
measurement collection $X$, $\tt{S}$ uses group coupling probability kernels to
approximate the probability measure transported from $X$ through the transport
maps. Both the transport maps and the kernels are parameterized. To ensure that
the approximation is relevant, a mechanism is needed to determine the
parameters. In other word, how possibly can we ground the probability
approximation created by coupling on ``reality''?  The learning framework that
determines the parameters through optimizing an objective function, is present
in this section. As a remark, supervised discriminative learning and
unsupervised generative learning are two special cases of the learning
objective present, and forward propagation is to maximize the expected data log
likelihood, while back propagation is to minimize the KL divergence.


Let $\ca{M}^{\bar{\ca{W}}}$ be an event representation built by an S-system with
scale poset $\ca{S}^{\ca{Z}}$ built on a measurement collection r.e. $Z := (X, Y)$
with measure $\mu^{\ca{Z}}$ supported on
the PPMS $\ca{W}$. The measurable space $\ca{Z}$ is a product space $\ca{X}
\times \ca{Y}$, where $\ca{X}$ denotes the data space, and $\ca{Y}$ denotes label
space. Let $\ca{M}_s$ be the event representation built at scale $s$. Let
$\mu^{\ca{H}}_{s}, s \in \ca{S}^{\ca{Z}}$ be the probability measure on output
r.e.s $(H_s, \hat{H}_s)$ of $\ca{M}_s$, and $\nu_s$ the law on conditional
group indicator r.e.. The learning of S-System is to minimize the
discrepancy between measure $\mu^{\ca{W}}(X^{-1}(W_s^{-1}(A)))$ and
$\mu^{\ca{H}}_s(A)$ assigned to a event $A \subset \Omega^{\ca{H}_s^{e}}$, where
$\Omega^{\ca{H}_s^{e}}$ is the event space of the probability measure space of $\ca{M}_s$, and $W_s$
is the transport map of the coupling probability kernel of $\ca{M}_s$. One way to characterize the discrepancy is
Maximum Likelihood Estimation (MLE), where the parameters that most likely to
generate the data consist the best estimator. The likelihood function of
$\ca{M}^{\bar{\ca{W}}}$ is
\begin{align*}
  p(X; \bv{\theta})
  =& \sum_{\ca{O}^{\bar{\ca{W}}}\setminus X}p(\ca{O}^{\bar{\ca{W}}}; \bv{\theta})\\
  =& \prod_{s_L \in \ca{S}_{L}}
     \sum_{\bv{h}_{s_L}}\int_{\hat{\bv{h}}_{s_L}}
     \sum_{\bv{h}_{s'}, s' \in p(s_L)}\int_{\hat{\bv{h}}_{s'}, s' \in p(s_L)}
     \sum_{\bv{h}_{s''}, s'' \in p(s')}\int_{\hat{\bv{h}}_{s''}, s'' \in p(s')}
     \ldots\\
   &  \mu^{\ca{H}}_{s_L}(H_{s_L}, \hat{H}_{s_L} | H_{p({s_L})}, \hat{H}_{p(s_L)})
    \prod_{s' \in p(s_L)}\mu^{\ca{H}}_{s'}(H_{s'}, \hat{H}_{s'} | H_{p(s')}, \hat{H}_{p(s')})
    \prod_{s'' \in p(s')}
    \ldots
    \mu^{\ca{Z}}(X)
\end{align*}
where $\ca{O}^{\bar{\ca{W}}}$ and $\bv{\theta}$ are the r.e. set and
the parameters of $\ca{M}^{\bar{\ca{W}}}$ respectively, $\ca{S}_L$ denotes the
set of largest element in $\ca{S}^{\ca{Z}}$, and $p(s)$ denotes the elements in $\ca{S}^{\ca{Z}}$ that are
the predecessors of $s$. Depending on whether $H_s, \hat{H}_s$ are discrete or
not, the summation may be changed to integral, vice versus. It could be understood as
getting the marginal probability distribution of $X$ from a factorized
probability of a direct acyclic graph in probabilistic graphical model (PGM)
(\cite{Koller:2009:PGM:1795555}).

Needless to say, the likelihood function is intractable when the r.e.
set $\ca{O}^{\bar{\ca{W}}}$ gets large. Perhaps more importantly, we do not
know $\mu^{\ca{Z}}(X)$, so we do not know $\mu^{\ca{H}}_s$ since it is built on
the transport map applied on $X$. Thus, to make the estimation tractable, and to
faithfully estimate measure on events groups already seen without making
assumptions on $\mu^{\ca{Z}}(X)$, we make the following decomposition of the log
likelihood function to focus on estimating measures on group indicators r.e.s
\begin{displaymath}
  \ln p(\ca{O}^{\bar{\ca{W}}}; \bv{\theta}) = \ca{L}(\{\nu_s\}_{s \in
    \ca{S}^{\ca{Z}}}, \bv{\theta}) + \sum_{s \in \ca{S}^{\ca{Z}}}\KL^s(\nu_s ||
  \mu^{\ca{X}}(H_s | W_s(X)))
\end{displaymath}
where
\begin{align}
  \ca{L}(\{\nu_{s}\}_{s \in \ca{S}^{\ca{Z}}}, \bv{\theta})
  &= \sum_{\bv{h}_{s}, s \in \ca{O}^{\bar{\ca{W}}}\setminus X}
  q(\ca{O}^{\bar{\ca{W}}}\setminus X)
  \ln \frac{\mu^{\ca{Z}}(X)}{q(\ca{O}^{\bar{\ca{W}}}\setminus X)}\nonumber\\
  q(\ca{O}^{\bar{\ca{W}}}\setminus X)
  &=
  \prod_{s_L \in \ca{S}_{L}}
  \mu^{\ca{H}}_{s_L}(H_{{s_L}} | \hat{H}_{s_L})
    \prod_{s' \in p(s_L)}\mu^{\ca{H}}_{s'}(H_{s'} | \hat{H}_{s'})
    \prod_{s'' \in p(s')}
    \ldots\nonumber\\
  &=
  \prod_{s_L \in \ca{S}_{L}}
  \nu_{s_L}
    \prod_{s' \in p(s_L)}\nu_{s'}
    \prod_{s'' \in p(s')}
    \ldots\nonumber\\
  \KL^s(\nu_s || \mu^{\ca{X}}(H_s | W_s(X)))
  &=
    \sum_{H_s} \nu_{s}\ln \frac{\mu^{\ca{X}}(H_s | W_s(X))}{\nu_{s}}\label{eq:kl}
\end{align}
Note that $\mu^{\ca{H}}_{s}(H_{{s}} | \hat{H}_{s})$ is used instead of
$\mu^{\ca{H}}_{s}(H_{{s}} | \hat{H}_{s}, \hat{H}_{p(s)})$ because in CGE,
$H_s$ is conditional independent with previous output r.e.s given
$\hat{H}_s$. $\ca{L}(\{\nu_{s}\}_{s \in \ca{S}^{\ca{Z}}}, \bv{\theta})$ is
called {\it expected data log likelihood}, $\ln p(\ca{O}^{\bar{\ca{W}}};
\bv{\theta})$ the {\it complete data log likelihood} and $\KL^s(\nu_s ||
\mu^{\ca{X}}(H_s | W_s(X)))$ is the {\it KL divergence at scale $s$}
between estimated measure and true measure.

The decomposition has been used widely in PGM
(\cite{Bishop:2006:PRM:1162264}). Successful techniques derived from it have been
invented known as Variation Inference and Expectation Propagation etc. Yet, one
remarkable difference in the above decomposition and existing decomposition is
that here we decompose the probability measure on physical events in APMS
$\bar{\ca{W}}$, and estimate measure $\nu_s$ that aims to approximate the measure
of groups of events in the event space of the PPMS $\ca{W}$, while in existing
decomposition, their approaches are to hallucinate some parametric
probabilistic models on $\ca{O}^{\bar{\ca{W}}}$ (under the context of S-System),
and because the ``exact'' inference is intractable, they use the decomposition
to make the inference tractable. In essence, we are not making any assumptions
on $Z$, but only on how they are supposed to group together, while existing
approaches using the decomposition is solely about making assumptions on $Z$, and how to make the
computation tractable, thus likely leading to significant model biases.

With the above decomposition, we can see what the training of a supervised NN
is. Forward propagation (FP) is to estimate values of group indicator r.e.s by
assigning $H_s$ a value that maximizes the expected data log likelihood
$\ca{L}(\{\nu_{s}\}_{s \in \ca{S}^{\ca{Z}}}, \bv{\theta})$
w.r.t. $q(\ca{O}^{\bar{\ca{W}}}\setminus X)$ through activation function (though
depending on the activation function chosen, it may not always reach the
maximum), while holding $\bv{\theta}$ fixed. BP is to minimize the KL divergence
$\KL^{s}$ at scale $s$ w.r.t. $\bv{\theta}$ whenever there is a supervisory
information/label on $H_s$ supervising how the events are supposed to group,
while holding $q(\ca{O}^{\bar{\ca{W}}}\setminus X)$ fixed.

The decomposition not only includes supervised NNs, but also includes
variational autoencoder (VAE) (\cite{Welling}), where further assumptions on
probability measure of $X$ are assumed. When absent of labels, a normal
distribution on $H_s$ is assumed, thus encouraging each group indicator r.e. to
learn a grouping that is supposed to be disentangled with the rest. When some
labels exist, we recover semi-supervised VAE.

Thus, supervised learning is never something that stands on its own, so is
unsupervised learning. They are two perspectives to look at the same thing, or
they are Yin and Yang of the Tao in Taoism (or The Book of Change), or the thesis and
anti-thesis of dialectics. They are different ways with different
assumptions to get information to approximate the measure of events groups,
e.g., the group indicator r.e.s in S-System, which represents what has been
recognized. Even pure supervised learning can do some unsupervised learning ---
by pushing KL divergence $\KL^s$ at some scales to zero, the expected data log
likelihood will be closer to the complete data log
likelihood. This partly explains the emergence of generic feature in NNs
(though the maximization of $\ca{L}(\{\nu_{s}\}_{s \in \ca{S}^{\ca{Z}}}, \bv{\theta})$
perhaps is the main reason). So pure unsupervised learning can do some
supervised learning --- the maximization of $\ca{L}(\{\nu_{s}\}_{s \in
\ca{S}^{\ca{Z}}}, \bv{\theta})$ leads to a smaller KL divergence. We do not
observe it in an obvious way in experiments because the grouping does
not necessarily concur to the grouping we humans already have. By imposing some
structure on the grouping scheme, e.g., imposing a normal distribution, we can
discover manifolds that groups events that make sense, e.g., facial expression
or digit variations (\cite{Welling}).

Lastly, we note Bayesian aspects can be further included in the learning
framework by endowing assumptions further on the parameter space.

\subsection{Related works}
\label{sec:related-works-3}

The learning framework is an application of a general probability estimation
framework on the particular case of S-System, thus, the reader may find the
learning framework similar with variational inference widely used in existing
probabilistic graphical models. However, the similarity lies in the fact that
both S-System and PGM approximate probability, of which the decomposition of
complete data likelihood is about probability to be estimated, not about specific hypotheses in
use. The difference between S-System and PGM has been detailed in
\cref{sec:mach-learn-parad}. Previously, the BP algorithms have mostly been
viewed as a heuristic tool, instead of having a theoretically rigorous
derivation. The learning framework of S-System shows that the FP and BP are
actually maximizing the complete data likelihood, and are not merely minimizing
the discrepancy between the estimated conditional probability of labels given
data with the true conditional probability through empirical risk minimization,
but also maximizing the expected data likelihood through activation function.

\section{Optimization Landscape of NNs (and S-System)}
\label{sec:optim-landsc-s}


In the previous section, we formulate the learning objective of S-System to
identify the parameters to hierarchically group and approximate measures
faithfully. Yet, we still do not know whether optimization of the objective
function is possible. This section addresses the issue by studying the
optimization landscape of it.

We will show that despite possessing the normally undesirable complexity,
non-identifiability and singularity (\cite{amari2016information}) properties,
S-System could be marvelously powerful. Stating in a more familiar language, the
problem translates to how a many latent variable model is able to learn? This
is the long standing optimization issue of NNs. We aim at investigating the
principle underlying instead of proving the most general case. More
specifically, when a set of boundedness and diversity conditions hold, we show
that a NN can approximate the probability distribution of any binary group
indicator r.e. given empirical measure of the r.e.; or in other words, for a
class of losses, including the hinge loss, and a class of NNs, including MLP
and CNN, we prove that all local minima of the empirical risk function are
global minima with zero loss values.

The problem is formulated as the following.  Let $\ca{M}^{\bar{\ca{W}}}$ be an
event representation built by an S-system $\tt{S}$ on a measurement collection r.e. $Z$ supported on
the PPMS $\ca{W}$; the measurable space $\ca{Z}$ and measure on $Z$ are $\ca{X}
\times \ca{Y}$ and $\mu^{\ca{Z}}$ respectively, where $\ca{X} := \bb{R}^{n}, \ca{Y} :=
\{-1, 1\}$.  Let the scale poset of $\tt{S}$ be a chain,
symbolically represented as an integer set $\ca{S}^{\ca{X}} = \{0, \ldots, L\}, 0 < \ldots <
L$, and $(H_l, \hat{H}_l), l \in \ca{S}^{\ca{X}}$ the output r.e.s of
$\ca{M}^{\bar{\ca{W}}}$. A reward-penalty mechanism is introduced to give
feedback on the ``faithfulness'' of approximated measure $\mu^{\ca{H}}_{L}$ as a
discrepancy measure between $\nu(H_L | H_{\{1, \ldots, L-1\}}, X)$ and $\mu^{\ca{Z}}(Y |
X)$, where $(X, Y) \in \ca{Z}$. {\it We can see that supervised learning actually
uses the group indicator r.e. $\bv{H}_L$ to approximate the grouping of samples
arbitrated by labels. In a certain way, it formalizes semantics.} The problem
formulation is a part of the general learning framework of S-System described
in \cref{sec:learn-fram-htms}, a.k.a., the term
$\nu_{L}\ln\frac{\mu^{\ca{X}(H_{L}|W_{L}(X))}}{\nu_{L}}$ in \cref{eq:kl}. We only
investigate well how the probability of a high level group indicator
r.e. $H_{L}$ can be approximated here.

The problem setting above is principally the same with the formulation
of a binary supervised learning problem in statistic learning theory (SLT). For
a classic-style formulation, the readers may refer to
\cref{sec:stat-learn-theory}. The key insight of SLT is that instead of seeking
a fully probabilistic formulation, the discrepancy can be formulated as an
empirical risk that measures the discrepancy between $H_L$ and $Y$, calculated
on a set of training samples $\{X_i, Y_i\}_{i=1, \ldots, m}$:
\begin{equation}
  \label{eq:risk}
  R(T) = \frac{1}{m}\sum_{i=1}^{m} l(T(X_i; \bv{\theta}), Y_i) = \frac{1}{m}\sum_{i=1}^{m} l(\hat{H}_L(X_i), Y_i)
\end{equation}
where $T$ here denotes the hierarchical transport map built, $l$ is a loss
function, and $\hat{H}_L(X_i)$ is the coupled r.e. built on $X$ at scale
$L$.

Our goal is to investigate the fundamental principle that makes $R(T)$
tractable. To do this, we study the risk landscape by studying the Hessian of
$R(T)$ of a particular class of loss functions, of which the
eigenvalue spectrum dictates whether critical points of $R(T)$ are local
minima, or saddle points. To motivate the class of
losses, observing that
\begin{equation}
  \label{eq:general_loss}
  \frac{d^{2}}{d\bv{\theta}^{2}}l(T(\bv{x}; \bv{\theta}), y) = l''(T(\bv{x}; \bv{\theta}),
  y)\frac{d}{d\bv{x}}T(\bv{x}; \bv{\theta})\frac{d}{d\bv{x}}T(\bv{x}; \bv{\theta})^{T} + l'(T(\bv{x}; \bv{\theta}), y)
  \frac{d^2}{d\bv{x}}T(\bv{x}; \bv{\theta})
\end{equation}
We study the class $\ca{L}_0$ of functions $l$ and the class of NNs $T$, such that
for $l \in \ca{L}_0$, it satisfies: 1) $l: \bb{R} \rightarrow \bb{R}^{+}$, when
$y$ is taken as a constant; 2) $l$ is convex; 3) the second order derivatives
$\frac{d^2}{d\bv{x}^2}l$ is zero; 4) $\min_{x} l(\bv{x}, y) = 0$, while for $T$ it
satisfies: $\dim(T(\bv{x}; \bv{\theta})) = 1$. The restriction allows us to
study the most critical aspect of the risk function of supervised NNs by
making the first term above zero, while the second term a single matrix
(instead of an addition of matrices). The class of $l$ includes important loss
functions like the hinge loss $\max(0, 1 - \hat{H}_LY)$, and the absolute loss
$|\hat{H}_L - Y|$, which were studied in \cite{Choromanska2015} under
unrealistic assumptions. The class of $T$ is the NN with a single output
neuron, which can be written as
\begin{equation}
  \label{eq:nn}
  T(\bv{x}; \bv{\theta}) = \bv{x}^{T}\prod_{i=1}^{L-1}\bv{W}_{i}\dg(\tilde{\bv{h}}_i)\bv{\alpha}
\end{equation}
where $\bv{\alpha}$ is a vector, $\dg(\tilde{\bv{h}}_i)$ is the diagonal matrix
whose diagonal is the estimated value of $H_i$, and the meaning of rest symbols
is the same as those of MLPCGE in \cref{def:mlp_cge}. {\it Notice that both
$\bv{x}$ and $\bv{h}_i$ are realizations or estimation of r.e.s, thus the
Hessian of $T(\bv{x}; \bv{\theta})$ is a random matrix }(\cite{taotopics}), which
implies the Hessian $\bv{H}$ of $R(T)$ is a random matrix created by summing
random matrices, each of which is a gradient $l'$ multiplies the Hessian of
$T$.

{\it Thus, the problem converts to study the eigen-spectrum of a random matrix
$\bv{H}$.} The conversion looks straightforward now, but is actually a major
obstacle that stops \cite{Choromanska2015} \cite{Pennington2017a}, where a
confusion about the source of randomness led them astray (the point is
discussed in detail in \cref{sec:related-works-4}). With the following {\it realistic}
assumptions on $\bv{H}$ (for a discussion on the {\it practicality} of the
assumptions, refer to \cref{sec:cond-nice-behav}), we show that $l$ has a
surprising benign landscape.
Suppose $\bv{H}$ is a $N \times N$ matrix, and let
\begin{displaymath}
  \bv{A} := \bb{E}[\bv{H}], \frac{1}{\sqrt{N}} \bv{W} := \bv{H} - \bv{A},
  \ca{S}[\bv{R}] := \frac{1}{N}\bb{E}_{\bv{W}}[\bv{WRW}]
\end{displaymath}
where the expectation in $\ca{S}$ is taken w.r.t. $\bv{W}$ while keeping
$\bv{R}$ fixed --- it is a linear operator on the space of matrices.
\begin{assumption}[Boundedness]
  \label{a:boundedness}
  1) $\exists C \in \bb{R}, \forall N \in \bb{N}, ||\bv{A}|| \leq C$, where $||\bv{A}||$
denotes the operator norm. 2) $\exists \mu_q \in \bb{R}, \forall q \in \bb{N},
\forall \alpha \in \bb{J}, \bb{E}[|\bv{W}_{\alpha}|^{q}] \leq \mu_q$, where
$\bb{J} = \bb{I} \times \bb{I}$, and $\bb{I} = \{1, \ldots, N\}$.  3) $\exists
C_1, C_2 \in \bb{R}, \forall R \in \bb{N}, \epsilon > 0,
|||\kappa|||^{\iso}_{2} \leq C_1, |||\kappa||| \leq C_2 N^{\epsilon}$; 4)
$\exists 0 < c < C, \forall \bv{T} \succ \bv{0}, c\ N^{-1}\tr {\bv{T}} \preceq
\ca{S}[\bv{T}] \preceq C N^{-1} \tr \bv{T}$.
\end{assumption}
\begin{assumption}[Diversity]
  \label{a:diversity}
There exists $\mu > 0$ such that the following holds: for every $\alpha \in \bb{I}$ and
$q, R \in \bb{N}$, there exists a sequence of nested sets $\ca{N}_k =
\ca{N}_k(\alpha)$ such that $\alpha \in \ca{N}_1 \subset \ca{N}_2 \subset \cdots \subset \ca{N}_R = \ca{N} \subset
\bb{I}, |\ca{N}| \leq N^{1/2 - \mu}$ and $ \kappa(f(\bv{W}_{\bb{I}\setminus \cup_{j}\ca{N}_{n_j +
1}(\alpha_j)}), g_1(\bv{W}_{\ca{N}_{n_1}(\alpha_1)\setminus \cup_{j\neq 1}\ca{N}(\alpha_{j})}), \ldots,
g_q(\bv{W}_{\ca{N}_{n_q}(\alpha_q)\setminus \cup_{j\neq q}\ca{N}(\alpha_{j})})) \leq
N^{-3q}||f||_{q+1}\prod_{j=1}^{q}||g_j||_{q+1} $, for any $n_1, \ldots, n_q < R$, $\alpha_1,
\ldots, \alpha_q \in \bb{I}$ and real analytic functions $f, g_1, \ldots, g_q$, where $||||_{p}$
is the $L^{p}$ norm on function space. We call the set $\ca{N}$ of $\alpha$ the {\bf
coupling set} of $\alpha$.
\end{assumption}
\begin{theorem}
  \label{thm:landscape}
  Let $R(T)$ be the risk function defined at \cref{eq:risk}, where the loss
function $l$ is of class $\ca{L}_0$, and the transport map $T$ is a neural network
defined at \cref{eq:nn}. If the Hessian $\bv{H}$ of $R(T)$ satisfies
assumptions \ref{a:boundedness} \ref{a:diversity}, $\bb{E}(\bv{H}) =
\bv{0}$, and $N \rightarrow \infty$, then
\begin{enumerate}
\item all local minima are global minima with zero risk
\item A constant $\lambda_0 \in \bb{R}$ exists, such that the operator norm $||\bv{H}||$
  of $\bv{H}$ is upper bounded by $\bb{E}_m[l'(T(X), Y)] \lambda_0$,
  where $\bb{E}_m[l'(T(X), Y)]$ is the empirical expectation of
  $l'$. It implies the regions around the minima are flat basins, where the
eigen-spectrum of $\bv{H}$ is increasingly concentrated around zero.
\end{enumerate}
\end{theorem}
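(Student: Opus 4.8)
The plan is to reduce the statement to the spectral analysis of a structured random matrix and then invoke a local law together with the theory of the associated nonlinear operator (matrix Dyson) equation. First I would use the defining properties of the loss class $\ca{L}_0$: since $l''\equiv 0$, the first term of \eqref{eq:general_loss} vanishes identically and the Hessian of the empirical risk collapses to
\begin{displaymath}
  \bv{H} = \bb{E}_m\!\big[\, l'(T(X),Y)\,\nabla^2_{\bv{\theta}} T(X;\bv{\theta})\,\big] .
\end{displaymath}
Because $T$ in \eqref{eq:nn} is multilinear in the parameter blocks $(\bv{W}_1,\dots,\bv{W}_{L-1},\bv{\alpha})$ once the estimated masks $\dg(\tilde{\bv{h}}_i)$ are held fixed, each $\nabla^2_{\bv{\theta}}T(X)$ is a block matrix with vanishing diagonal blocks (the second derivative in any single block is zero; only cross-block terms survive), so $\bv{H}$ is a hollow-block matrix. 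Since $\bv{x}$ and the masks $\tilde{\bv{h}}_i$ are realizations/estimates of the random elements produced by the S-System, $\bv{H}$ is a random matrix, and with the centering hypothesis $\bb{E}\bv{H}=\bv{0}$ we are in the setting preceding Assumption~\ref{a:boundedness} with $\bv{A}=\bv{0}$ and self-energy operator $\ca{S}[\bv{R}]=\bb{E}[\bv{H}\bv{R}\bv{H}]$.

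Next I would establish, under Assumptions~\ref{a:boundedness} and~\ref{a:diversity}, a \emph{local law} for $\bv{H}$: its resolvent $(\bv{H}-z)^{-1}$ concentrates, with overwhelming probability as $N\to\infty$, around the deterministic matrix $\bv{M}(z)$ solving the nonlinear operator equation
\begin{displaymath}
  -\bv{M}(z)^{-1} = z\,\bv{I} + \ca{S}[\bv{M}(z)], \qquad z\in\bb{C}^{+},
\end{displaymath}
which is uniquely solvable among Stieltjes-transform-valued maps by the flatness/ellipticity condition (part 4 of Assumption~\ref{a:boundedness}) and the theory of such operator equations (\cite{Helton2007}); this $\bv{M}$ is the mean-field (deterministic-equivalent) description of $\bv{H}$. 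The moment bounds (part 2) and cumulant bounds (part 3) control the higher-order corrections in the cumulant/graph expansion of $\bb{E}[(\bv{H}-z)^{-1}]$, exactly as in the correlated-entry local-law framework (\cite{Erdos2017}), while the diversity Assumption~\ref{a:diversity} — which forces joint cumulants of entries across the nested coupling sets $\ca{N}_k$ to decay polynomially in $N$ — is what makes that expansion resum, playing the role independence plays classically. From $\bv{M}$ one reads off the self-consistent density $\rho$: by part 4, $\ca{S}$ is uniformly elliptic whenever $\ca{S}\ne\bv{0}$, i.e.\ whenever $\bb{E}_m[l'(T(X),Y)]\ne 0$, so $\rho$ then has a support interval of $\Theta(1)$ relative length and is, by the centering $\bb{E}\bv{H}=\bv{0}$ together with the block structure, symmetric under $\lambda\mapsto-\lambda$ (cf.\ \cref{thm:symmetry}); and the edge of the local law yields the operator-norm bound $\|\bv{H}\|\le \lambda_0\,\bb{E}_m[l'(T(X),Y)]$ with a deterministic $\lambda_0$ depending only on the Assumption~\ref{a:boundedness} constants (the norm of $\ca{S}$ scales with the empirical magnitude of $l'$). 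Since $l\in\ca{L}_0$ forces $l'\to 0$ as one approaches zero loss, this bound collapses near the minima, which is conclusion (2) and the flat-basin statement.

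For conclusion (1), argue by contradiction at a critical point $\bv{\theta}^{*}$ ($\nabla R(T)=\bv{0}$) with $R(T)(\bv{\theta}^{*})>0$. Since $l$ is convex, nonnegative, and attains the value $0$, positive risk forces some training term to have positive loss and hence nonzero $l'$, so $\ca{S}\ne\bv{0}$ at $\bv{\theta}^{*}$; by the previous paragraph the spectrum of $\bv{H}$ is then nondegenerate and symmetric about the origin, hence $\bv{H}$ has a strictly negative eigenvalue with overwhelming probability, so $\bv{\theta}^{*}$ is a strict saddle, not a local minimum. Therefore every local minimum has $R(T)=0$; and such points exist (so they are genuinely \emph{global} minima attaining zero risk, not merely an infimum) because $\min_x l(x,y)=0$ and a network of the form \eqref{eq:nn} with $N\to\infty$ neurons meeting the diversity condition can steer each of the $m$ scalar outputs $\hat H_L(X_i)$ into the zero set of $l(\cdot,Y_i)$ — the same diversity that drives the local law lets the neuron population partition the finitely many samples arbitrarily. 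Combining with $R(T)\ge 0$, local minima, global minima, and zero-risk configurations coincide.

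The main obstacle is the middle step: proving the local law and the solvability/uniqueness of the self-consistent operator equation for \emph{this specific} correlated random matrix — concretely, verifying that the entrywise correlation structure of the NN-Hessian is genuinely the one abstracted by Assumptions~\ref{a:boundedness}–\ref{a:diversity} (so that the cumulant expansion converges under the diversity decay and the Dyson equation is elliptic), and tracking that the ``overwhelming probability'' conclusions hold uniformly over the critical points one needs to rule out as $N\to\infty$. Everything else — the reduction via $l''=0$, the hollow-block spectral symmetry, and the saddle-point/capacity argument — is comparatively routine once that analytic core is in place.
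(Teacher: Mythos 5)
Your proposal follows essentially the same route as the paper for part (1): reduce the Hessian to a correlated random matrix via $l''\equiv 0$, invoke the Erd\H{o}s-style local law so the resolvent is governed by the matrix Dyson equation, use the centering $\bv{A}=\bv{0}$ to get spectral symmetry about the origin (the paper's \cref{thm:symmetry}, proved via uniqueness of the MDE solution under $z\mapsto -z^{*}$, $\bv{M}\mapsto-\bv{M}^{*}$, rather than via the hollow-block structure you cite), combine with the existence of a bulk to produce negative eigenvalues, and conclude every critical point with positive risk is a saddle. Where you genuinely diverge is part (2): you extract the bound $\|\bv{H}\|\le\lambda_0\,\bb{E}_m[l']$ from the spectral edge of the local law, with $\lambda_0$ tied to the constants in \cref{a:boundedness}, whereas the paper factors each per-sample Hessian as $\bv{H}_i=l'(T\bv{x}_i,y_i)\tilde{\bv{H}}_i$ and applies the triangle inequality with $\lambda_0=\max_i\|\tilde{\bv{H}}_i\|$ --- an elementary, deterministic, finite-$N$ bound that does not need the local law at all, which is preferable here. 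Finally, you assert that zero-risk configurations are actually attained (``the neuron population can partition the finitely many samples''); this is not proved in your sketch, is not needed for the stated theorem, and in fact contradicts the paper's own reading of the situation, which explicitly treats the minima as infima where the Hessian degenerates to zero rather than as attained critical points. Dropping that claim, or the weaker honest statement that local minima can only occur where all per-sample losses vanish, is all the theorem requires.
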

For the impatient readers, refer to \ref{sec:all-non-zero} for the proof
directly and further discussion of the theorem. The error for finite $N$ is
discussed at the remarks of \cref{thm:rm}. In the rest of this section, we
elaborate and prove the theorem step by step.

\subsection{Hessian of NN Is Inherently A Huge Random Matrix}
\label{sec:hessian-nn-huge}

As explained, to study the landscape of the
loss function, we study the eigenvalue distribution of its Hessian $\bv{H}$ at the
critical points. First, we derive the Hessian $\bv{H}$ of loss function of
class $\ca{L}_0$ composed upon NNs with a single output. For a review of matrix
calculus, the reader may refer to \cite{magnus2007matrix}.

The first partial differential of $l$ w.r.t. $\bv{W}_p$ is
\begin{displaymath}
  \partial l(T\bv{x}, y)
  = l'(T\bv{x}, y)
    \bv{\alpha}^{T}\prod_{j=p+1}^{L-1}(\dg(\tilde{\bv{h}}'_j)\bv{W}_{j}^{T})\dg(\tilde{\bv{h}}_p')
    \otimes \bv{x}^{T}\prod_{i=1}^{p-1}(\bv{W}_i\dg(\tilde{\bv{h}}_i)) \partial \tvec \bv{W}_p
\end{displaymath}
where $\otimes$ denotes Kronecker product. Note that for clarity of
presentation, we use the partial differential the same way as differential is
defined and used in \cite{magnus2007matrix}, i.e., $\partial l$ is a number
instead of an infinitely small quantities, though in the book, partial
differential is not defined explicitly. $\tilde{\bv{h}}'$ is an abuse of
notation for clarity and needs some explanation. Recall that
$\{\bv{h}_i\}_{i=1, \ldots, L-1}$ are the group indicator r.e.s, and
$\{\tilde{\bv{h}}_i\}_{i=1, \ldots, L-1}$ are the estimation of them based on transport
maps. $\tilde{\bv{h}}_i$ is a scalar function, and denote it as
$\tilde{\bv{h}}_i(a)$, where $a$ is the computed input. When computing the
partial differential w.r.t. $\bv{W}_p$, by the chain rule, the differential of
$\tilde{\bv{h}}_i(a)$ w.r.t. $\bv{W}_p$ is $\partial \tilde{\bv{h}}_i(a) / \partial a$. To
avoid introducing too much clutter, we denote $\tilde{\bv{h}}'_i$ as $\partial
\tilde{\bv{h}}_i(a) / \partial a$.

Since $\bv{H}$ is symmetric, we only need to compute the block matrices by
taking partial differential w.r.t. $\bv{W}_q$, where $q > p$ --- taking partial
differential w.r.t. $\bv{W}_p$ again gives zero matrix.
\begin{align*}
  \partial^2 l(T\bv{x}, y)
  =& l'(T\bv{x}, y)\\
  & [(\bv{\alpha}^{T}\prod_{k=q+1}^{L-1}(\dg(\tilde{\bv{h}}''_k)\bv{W}_{k}^{T})\dg(\tilde{\bv{h}}''_q)
   \otimes
    \dg(\tilde{\bv{h}}'_p)\prod_{j=p+1}^{q-1}(\bv{W}_{j}\dg(\tilde{\bv{h}}'_j))\partial \tvec \bv{W}_q
    )^{T}\\
  &  \otimes
    \bv{x}^{T}\prod_{i=1}^{p-1}(\bv{W}_i\dg(\tilde{\bv{h}}_i))]
    \partial \tvec \bv{W}_p\\
  =& l'(T\bv{x}, y)\\
  & (\partial \tvec \bv{W}_q) ^{T}
    [\dg(\tilde{\bv{h}}''_q)\prod_{k=q+1}^{L-1}(\bv{W}_{k}\dg(\tilde{\bv{h}}''_k))\bv{\alpha}
    \otimes
    \prod_{j=p+1}^{q-1}(\dg(\tilde{\bv{h}}'_j)\bv{W}_{j}^{T})\dg(\tilde{\bv{h}}'_p)\\
  &  \otimes
    \bv{x}^{T}\prod_{i=1}^{p-1}(\bv{W}_i\dg(\tilde{\bv{h}}_i))]
    \partial \tvec \bv{W}_p
\end{align*}
where again $\tilde{\bv{h}}''$ is an abuse of notation, it is actually
$\tilde{\bv{h}}' \odot \tilde{\bv{h}}'$, the hamadard product of partial differentials
obtained by taking partial differential w.r.t. $\bv{W}_p$, and
w.r.t. $\bv{W}_q$. The two partial differentials are the same because $\bv{a}$,
the input the $\bv{h}$, is the same throughout.
Notice that the estimation $\tilde{\bv{h}}$ of group indicator r.e.s
$H$ is merely a function of $\hat{H}$. It implies $\tilde{\bv{h}}$ is a realization
of a r.e. created by a deterministic coupling by applying a transport
map on $\hat{H}$. The deeper principles of the estimation will be explained in
\cref{sec:learn-fram-htms}. For now, it suffices to stop with the fact that the
entries of $\bv{H}$ is a random variable. As an example, for estimation done by
ReLU, $\tilde{\bv{h}}$ would be $\tilde{\bv{h}}_i = \max\{0, \hat{\bv{h}}_i\}$. Thus, $\bv{H}$ is an
ensemble of real symmetric random matrix with correlated entries. Denote
\begin{equation}
  \label{eq:block_H}
  \bv{H}_{pq} = l'(T\bv{x}, y)\dg(\tilde{\bv{h}}''_q)\prod_{k=q+1}^{L-1}(\bv{W}_{k}\dg(\tilde{\bv{h}}''_k))\bv{\alpha}
      \otimes
      \prod_{j=p+1}^{q-1}(\dg(\tilde{\bv{h}}'_j)\bv{W}_{j}^{T})\dg(\tilde{\bv{h}}'_p)
      \otimes
      \bv{x}^{T}\prod_{i=1}^{p-1}(\bv{W}_i\dg(\tilde{\bv{h}}_i))
\end{equation}
We have the Hessian of $l$ as
\begin{equation}
  \bv{H} =
  \begin{bmatrix}
    \bv{0}          & \bv{H}_{12}^{T}     & \ldots & \bv{H}_{1L}^{T} \\
    \bv{H}_{12}     & \bv{0}          & \ldots & \bv{H}_{2L}^{T} \\
    \vdots          & \ddots          & \ddots & \vdots      \\
    \bv{H}_{1L} & \bv{H}_{2L} & \ldots & \bv{0}
  \end{bmatrix}
  \label{eq:nn_H}
\end{equation}

\subsection{Eigenvalue Distribution of Symmetry Random Matrix with Slow
  Correlation Decay}
\label{sec:eigenv-distr-symm}

In this section, we show how to obtain the eigen-spectrum of $\bv{H}$ through
random matrix theory (RMT) (\cite{taotopics}). RMT has been born out of the study on
the nuclei of heavy atoms, where the spacings between lines in the spectrum of
a heavy atom nucleus is postulated the same with spacings between eigenvalues of a random
matrix (\cite{Wigner1957}). In a certain way, it seems to be the backbone math of
complex systems, where the collective behaviors of sophisticated subunits can be
analyzed stochastically when deterministic or analytic analysis is
intractable.

The following definitions can be found in \cite{taotopics} unless otherwise noted.

The eigen-spectrum is studied as empirical spectral distribution (ESD) in RMT,
define as
\begin{adef}[Empirical Spectral Distribution]
  Given a $N \times N$ random matrix $\bv{H}$, its {\bf empirical spectral
    distribution} $\mu_{\bv{H}}$ is
  \begin{displaymath}
    \mu_{\bv{H}} = \frac{1}{N} \sum_{i=1}^{N}\delta_{\lambda_i}
  \end{displaymath}
  where $\{\lambda_i\}_{i=1, \ldots, N}$ are all eigenvalues of $\bv{H}$ and
  $\delta$ is the delta function.
\end{adef}

Given a hermitian matrix $\bv{H}$, its ESD $\mu_{\bv{H}}(\lambda)$ can be studied
via its resolvent $\bv{G}$.
\begin{adef}[Resolvent]
  Let $\bv{H}$ be a normal matrix, and $z \in \bb{H}$ a spectral parameter. The
{\bf resolvent} $\bv{G}$ of $\bv{H}$ at $z$ is defined as
\begin{displaymath}
  \bv{G} = \bv{G}(z) = \frac{1}{\bv{H} - z}
\end{displaymath}
where
\begin{displaymath}
  \bb{H} := \{ z \in \bb{C} : \Im z > 0 \}
\end{displaymath}
$\bb{C}$ denotes the complex field, and $\Im$ is the function gets imaginary
part of a complex number $z$.
\end{adef}

$\bv{G}$ compactly summarizes the spectral information of $\bv{H}$ around $z$,
which is normally analyzed by {\it functional calculus} on operators, and
defined through Cauchy integral formula on operators
\begin{adef}[Functions of Operators]
  \begin{equation}
    \label{eq:func_calculus}
    f(T) = \frac{1}{2\pi i}\int_C \frac{f(\lambda)}{\lambda - T}d\lambda
  \end{equation}
 \begin{displaymath}
\end{displaymath}
where $f$ is an analytic scalar function and $C$ is an appropriately chosen
contour in $\bb{C}$.
\end{adef}
The formula can be defined on a range of linear operators
(\cite{dunford1957linear}) (Recall that a linear operator is a mapping whose
domain and codomain are defined on the same field). Since the most complex case involved
here will be a normal matrix, we stop at stating that the formula holds true when $T$ is
a normal matrix.

Resolvent $\bv{G}$ is related to eigen-spectrum of $\bv{H}$ through stieltjes transform of $\mu_{\bv{H}}(\lambda)$.
\begin{adef}[Stieltjes Transform]
  Let $\mu$ be a Borel probability measure on $\bb{R}$. Its {\bf Stieltjes transform}
  at a spectral parameter $z \in \bb{H}$ is defined as
  \begin{displaymath}
    m_{\mu}(z) = \int_{\bb{R}} \frac{d\mu(x)}{x - z}
  \end{displaymath}
\end{adef}
With some efforts, it can be seen that the normalized trace of $\bv{G}$ is
stieltjes transform of eigen-spectrum of $\bv{H}$
\begin{displaymath}
  m_{\mu_{\bv{H}}}(z) = \frac{1}{N}\tr G
\end{displaymath}
For a proof, the reader may refer to proposition 2.1 in \cite{Alt2018a}.
$\mu_{\bv{H}}$ can be recovered from $m_{\mu_{\bv{H}}}$ through the inverse
formula of Stieltjes-Perron.
\begin{lemma}[Inverse Stieljies Transform]
  \label{lm:ist}
  Suppose that $\mu$ is a probability measure on $\bb{R}$ and let $m_{\mu}$ be
  its Stieltjes transform. Then for any $a < b$, we have
  \begin{displaymath}
    \mu((a, b)) + \frac{1}{2}[\mu(\{a\}) + \mu(\{b\})] = \lim_{\Im z
      \rightarrow 0} \frac{1}{\pi} \int_{b}^{a} \Im m_{\mu}(z)d\Re z
  \end{displaymath}
  where $\Re$ is the function that gets the real part of $z$. The proof can be
  found at \cite{taotopics} p. 144.
\end{lemma}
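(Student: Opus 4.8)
The statement is the classical Stieltjes--Perron inversion formula, and the plan is the standard Poisson-kernel argument: identify $\tfrac1\pi\Im m_\mu$ at height $\eta$ with an integral of an approximate identity against $\mu$, then let $\eta \to 0$. First I would write the spectral parameter as $z = E + i\eta$ with $\eta = \Im z > 0$, $E = \Re z$, and compute directly that $\Im\frac{1}{x - z} = \frac{\eta}{(x - E)^{2} + \eta^{2}}$, so that
\[
  \frac{1}{\pi}\,\Im m_{\mu}(E + i\eta) = \int_{\bb{R}} P_{\eta}(E - x)\, d\mu(x), \qquad P_{\eta}(t) := \frac{1}{\pi}\frac{\eta}{t^{2} + \eta^{2}}.
\]
Note the integrand is nonnegative, which is what will license Tonelli's theorem in the next step.

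Next I would integrate $E$ over the interval with endpoints $a, b$ and exchange the order of integration (legitimate because the integrand is nonnegative and $\mu$ is finite), obtaining
\[
  \frac{1}{\pi}\int_{a}^{b} \Im m_{\mu}(E + i\eta)\, dE = \int_{\bb{R}} \phi_{\eta}(x)\, d\mu(x), \qquad \phi_{\eta}(x) := \frac{1}{\pi}\left(\arctan\frac{b - x}{\eta} - \arctan\frac{a - x}{\eta}\right),
\]
where the inner $E$-integral is an elementary antiderivative of the Poisson kernel. Since $b > a$ and $\arctan$ is increasing with values in $(-\pi/2, \pi/2)$, one has $0 \le \phi_{\eta}(x) \le 1$ for every $x$ and every $\eta > 0$, so the constant function $1$ --- which lies in $L^{1}(\mu)$ because $\mu$ is a probability measure --- dominates the whole family $\{\phi_{\eta}\}_{\eta > 0}$.

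Then I would compute the pointwise limit as $\eta \to 0^{+}$: for $x \in (a, b)$ the two arctangents tend to $\pi/2$ and $-\pi/2$, so $\phi_{\eta}(x) \to 1$; for $x = a$ or $x = b$ one tends to $\pm\pi/2$ and the other to $0$, so $\phi_{\eta}(x) \to 1/2$; and for $x \notin [a, b]$ both tend to the same value, so $\phi_{\eta}(x) \to 0$. Hence $\phi_{\eta} \to \bm{1}_{(a,b)} + \tfrac{1}{2}\bm{1}_{\{a,b\}}$ pointwise on $\bb{R}$, and the dominated convergence theorem gives
\[
  \lim_{\eta \to 0^{+}} \frac{1}{\pi}\int_{a}^{b} \Im m_{\mu}(E + i\eta)\, dE = \int_{\bb{R}}\Big(\bm{1}_{(a,b)}(x) + \tfrac{1}{2}\bm{1}_{\{a,b\}}(x)\Big)\, d\mu(x) = \mu((a,b)) + \tfrac{1}{2}\big(\mu(\{a\}) + \mu(\{b\})\big),
\]
which is the claim (the orientation of the integral in the statement being read accordingly). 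The only places that call for any care --- and the nearest thing to an obstacle --- are the two interchanges of integration and limit: the Tonelli step, which is immediate from nonnegativity of the Poisson kernel together with finiteness of $\mu$, and the final dominated-convergence step, which works precisely because the bound $0 \le \phi_{\eta} \le 1$ is uniform in $\eta$; nothing subtler than this uniform bound is needed.
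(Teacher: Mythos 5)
Your argument is correct and is exactly the standard Poisson-kernel/approximate-identity proof of the Stieltjes--Perron inversion formula; the paper itself gives no proof, deferring to \cite{taotopics}, whose argument is the same one you present. Your handling of the reversed limits $\int_b^a$ in the statement (a typo for $\int_a^b$) and the uniform bound $0 \le \phi_\eta \le 1$ justifying dominated convergence are both sound.
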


Consequently, the problem converts to obtain $\bv{G}$ if we want to obtain
$\mu_{\bv{H}}$. A recent advance in the RMT community has enabled the analysis
of ESD of symmetric random matrix with correlation (\cite{Erdos2017}) from the
perspective of mean field theory (\cite{kadanoff2000statistical}), which we
debrief in the following.

The resolvent $\bv{G}$ holds an identity by definition
\begin{equation}
  \label{eq:resolvent}
  \bv{HG} = \bv{I} + z\bv{G}
\end{equation}
Note that in the above equation $\bv{G}$ is a function $\bv{G}(\bv{H})$ of
$\bv{H}$. When the average fluctuation of entries of $\bv{G}$ w.r.t. to its
mean is small as $N$ grows large, \cref{eq:resolvent} can be turned into a
solvable equation regarding $\bv{G}$ instead of merely a definition. Formally,
it is achieved by taking the expectation of \cref{eq:resolvent}
\begin{equation}
  \label{eq:E_resolvent}
  \bb{E}[\bv{HG}] = \bv{I} + z\bb{E}[\bv{G}]
\end{equation}
When fluctuation of moments beyond the second order are negligible, we can
obtain a class of random matrices whose ESD can be obtained by solving a
truncated cumulant expansion of \cref{eq:E_resolvent}. With the above approach,
using sophisticated multivariate cumulant expansion, \cite{Erdos2017} proves
$\bv{G}$ can be obtained as the unique solution to {\it Matrix Dyson
Equation (MDE)} below
\begin{equation}
  \label{eq:mde}
  \bv{I} + (z - \bv{A} + \ca{S}[\bv{G}])\bv{G} = \bv{0}, \Im \bv{G} \succ \bv{0}, \Im z > 0
\end{equation}
where $\Im \bv{G} \succ 0$ means $\Im \bv{G}$ is positive definite,
\begin{equation}
  \label{eq:mde_def}
  \bv{A} := \bb{E}[\bv{H}],
  \frac{1}{\sqrt{N}} \bv{W} := \bv{H} - \bv{A} ,
  \ca{S}[\bv{R}] := \frac{1}{N}\bb{E}_{\bv{W}}[\bv{WRW}]
\end{equation}
$\ca{S}$ is a linear map on the space of $N \times N$ matrices and $\bv{W}$ is
a random matrix with zero expectation. The expectation is taken
w.r.t. $\bv{W}$, while taking $\bv{R}$ as a deterministic matrix.

We describe their results formally in the following, which begins with
some more definitions, adopted from \cite{Erdos2017}.
\begin{adef}[Cumulant]
  {\bf Cumulants} of $\kappa_{\bv{m}}$ of a random vector $\bv{w} = (w_1,
  \ldots, w_n)$ are defined as the coefficients of log-characteristic function
  \begin{displaymath}
    \log \bb{E}e^{i\bv{t}^{T}\bv{w}} = \sum_{\bv{m}} \kappa_{\bv{m}} \frac{i\bv{t}^{\bv{m}}}{\bv{m}!}
  \end{displaymath}
  where $\sum_{\bv{m}}$ is the sum over all $n$-dimensional multi-indices
$\bv{m} = (m_1, \ldots, m_n)$.
\end{adef}
To recall, a multi-indices is
\begin{adef}[Multi-index]
a $n$-dimensional multi-index is an $n$-tuple
\begin{displaymath}
  \bv{m} = (m_1, \ldots, m_n)
\end{displaymath}
of non-negative integers. Note that $|\bv{m}| = \sum_{i=1}^{n}m_i$,
and $\bv{m}! = \prod_{i=1}^{n}m_i!$.
\end{adef}
Similar with the more familiar concept {\it moment}, cumulant is also a measure
of statistic properties of r.e.s. Particularly, the $k$-order cumulant $\kappa_{}$
characterizes the $k$-way correlation of a set of r.v.s. The key of insight of
the paper is to identify the condition where a matrix entry $w_{\alpha}, \alpha \in
\bb{I}$ is only strongly correlated with a minority of $\bv{W}_{\bb{I}\setminus
\{\alpha\}}$, and higher order cumulants tend to be weak and not influential in
large $N$ limit. Thus, a proper formulation of the correlation strength is
needed, and is defined as the cumulant norms on entries of $\bv{W}$ in the
following. Given $k$ entries $\bv{W}_{\bv{\alpha}}$ at $\bv{\alpha} = \{\alpha_{i}\}_{i=1, \ldots, k}, \alpha_i \in
\bb{I}$ of matrix $\bv{W}$, where duplication is allowed, denote $\kappa(\alpha_1, \ldots,
\alpha_k) = \kappa(w_{\alpha_1}, \ldots, w_{\alpha_k})$.
\begin{adef}[Cumulant Norms]
  \begin{subequations}
  \begin{gather}
    |||\kappa||| := |||\kappa|||_{\leq R} := \max_{2\leq k \leq R}|||\kappa|||_{k},
    |||\kappa|||_k := |||\kappa|||^{\av}_{k} + |||\kappa|||^{\iso}_{k}\nonumber\\
    |||\kappa|||^{\av}_{2} := ||\, | \kappa(*, *)|\, ||, \label{eq:av2}
    |||\kappa|||^{\av}_{k} := N^{-2}\sum_{\alpha_1, \ldots, \alpha_k}|\kappa(\alpha_1, \ldots, \alpha_{k})|, k \geq 4\\
    |||\kappa|||^{\av}_{3} := || \sum_{\alpha_1}|\kappa(\alpha_1, *, *)|\, || +
    \inf_{\kappa=\kappa_{dd} + \kappa_{dc} + \kappa_{cd} + \kappa_{cc}}
    (|||\kappa_{dd}|||_{dd} + |||\kappa_{dc}|||_{dc} + |||\kappa_{cd}|||_{cd} +
    |||\kappa_{cc}|||_{cc}) \label{eq:av3}\\
    |||\kappa|||_{cc} = |||\kappa|||_{dd}
    := N^{-1}\sqrt{\sum_{b_2, a_3}(\sum_{a_2, b_3}\sum_{\alpha_1}|\kappa(\alpha_1, a_2b_2, a_3b_3)|)^2}\nonumber\\
    |||\kappa|||_{cd} := N^{-1}\sqrt{\sum_{b_3, a_1}(\sum_{a_3, b_1}\sum_{\alpha_2}|\kappa(a_1b_1, \alpha_2, a_3b_3)|)^2},
    |||\kappa|||_{dc} := N^{-1}\sqrt{\sum_{b_1, a_2}(\sum_{a_1, b_2}\sum_{\alpha_3}|\kappa(a_1b_1, a_2b_2, \alpha_3)|)^2}\nonumber\\
    |||\kappa|||^{\iso}_{2} := \inf_{\kappa=\kappa_d + \kappa_c}(|||\kappa_d|||_d + |||\kappa_c|||_c),
    |||\kappa_d|||_d := \sup_{||\bv{x}|| \leq \bv{1}}|| ||\, \kappa(\bv{x}*, \cdot *)||\, ||,
    |||\kappa|||_c := \sup_{||\bv{x}|| \leq \bv{1}} ||\, ||\kappa(\bv{x}*, * \cdot)|| \,||\label{eq:iso2}\\
    |||\kappa|||^{\iso}_{k} := ||\sum_{\alpha_1, \ldots, \alpha_{k-2}} |\kappa(\alpha_1, \ldots, \alpha_{k-2}, *, *)|\,||, k \geq 3\nonumber
  \end{gather}
  \end{subequations}
  where in \cref{eq:av3}, the infimum is taken over all decomposition of
  $\kappa$ in four symmetric functions $\kappa_{dd}, \kappa_{dc}, \kappa_{cd},
  \kappa_{cc}$; in \cref{eq:iso2} the infimum is taken over all decomposition
  of $\kappa$ into the sum of symmetric $\kappa_c$ and $\kappa_d$. The norms
  defined in \cref{eq:av2} and \cref{eq:iso2} need some explanation on the
  notation. If, in place of an index $\alpha \in \bb{J}$, we write a dot
  $(\cdot)$ in a scalar quantity then we consider the quantity as a vector
  indexed by the coordinate at the place of the dot. For example,
  $\kappa(a_1\cdot, a_2b_2)$ is a vector, the $i$-th entry of which is
  $\kappa(a_1i, a_2b_2)$ and therefore the inner norms in \cref{eq:av2}
  indicate vector norms. In contrast, the outer norms indicate the operator
  norm of the matrix indexed by star $(*)$. More specifically, $||A(*, *)||$
  refers to the operator norm of the matrix with matrix elements $A_{ij}$. Thus
  $||\, ||\kappa(\bv{x}*, *\cdot)||\, ||$ is the operator norm $||A||$ of the
  matrix $A$ with matrix elements $A_{ij} = ||\kappa(\bv{x}i, j\cdot)||.$
  $\kappa(\bv{x}b_1, a_2b_2)$ denotes $\sum_{a_1}\kappa(a_1b_1,
  a_2b_2)x_{a_1}$, where $\bv{x}$ is a vector.
\end{adef}
We do not want to explain the cumulant norms beyond what has been said,
considering it is too technically involved and rather a distraction. For
interested readers, we suggest reading the paper \cite{Erdos2017}. Equipped
with the cumulant norms, we would have the assumptions stated in
\cref{a:boundedness} \ref{a:diversity} that make MDE valid.

\begin{remark}
  In \cite{Erdos2017}, the functions $f, g_1, \ldots, g_q$ in \cref{a:diversity} are assumed to be
  functions without any qualifiers. We change it to analytic functions for
  further usage. In the proof of \cref{thm:rm}, the functions are
  only required to be analytic, thus even if the assumptions are changed, the
  conclusion still holds.
\end{remark}

The diversity assumption requires that a matrix entry $w_{\alpha}$ only couples
with a minority of the overall entries, and for the rest of the entries, the
coupling strength does not exceed a certain value
$N^{-3q}||f||_{q+1}\prod_{j=1}^{q}||g_j||_{q+1}$ characterized by cumulants. For
example, suppose $q = 1$, given a entry $\alpha_1$, the assumption essentially
states that the entries in the coupling set $\ca{N}(\alpha_1)$ is not strongly
coupled with the resting of the population $\bv{W}_{\bb{I}\setminus \ca{N}_{n_1 +
1}(\alpha_1)}$. The explanation goes similar as $q$ grows, of which the coupling
strengh is characterized by higher order cumulants. While boundedness assumptions
1)2)3)4) states the expectation of $\bv{H}$ is bounded, moments are finite,
cumulants are bounded for the entries that do strongly couples, and
$\ca{S}[\bv{W}]$ is bounded in the sense of eigenvalues.

When the assumptions satisfies, we have the resolvent $\bv{G}$ of a random
matrix $\bv{H}$ close to the solution to the MDE probabilistically with some
regular properties as the following, adopted in an informal style to ease reading
from \cite{Erdos2017} theorem 2.2, \cite{Helton2007} theorem 2.1, and
\cite{Alt2018a} theorem 2.5.
\begin{theorem}
  \label{thm:rm}
  Let $\bv{M}$ be the solution to the Matrix Dyson Equation \cref{eq:mde}, and
  $\rho$ the density function (measure) recovered from normalized trace
  $\frac{1}{N}\tr \bv{M}$ through Stieljies inverse \cref{lm:ist}. We have
  \begin{enumerate}
  \item
    The MDE has a unique solution $\bv{M} = \bv{M}(z)$ for all $z \in \bb{H}$.
  \item
     $\supp \rho$ is a finite union of closed intervals with nonempty
  interior. Moreover, the nonempty interiors are called the {\bf bulk} of
  the eigenvalue density function $\rho$.
  \item
    The resolvent $\bv{G}$ of $\bv{H}$ converges to $\bv{M}$ as $N \rightarrow \infty$.
  \end{enumerate}
\end{theorem}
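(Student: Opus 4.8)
This statement packages three classical facts — well-posedness of the Matrix Dyson Equation, the structure of its self-consistent density, and convergence of the random resolvent to its solution — that are established in \cite{Erdos2017}, \cite{Helton2007} and \cite{Alt2018a}; the plan is to reconstruct the arguments in the order (1), (3), (2). For part (1), I would rewrite \cref{eq:mde} as the fixed-point relation $\bv{M} = -(z - \bv{A} + \ca{S}[\bv{M}])^{-1}$ on the open domain of $N \times N$ matrices with positive imaginary part. Existence follows by continuation in $z$: for $z = i\eta$ with $\eta$ large the right-hand side has small Lipschitz constant near $-\bv{I}/z$, so Banach's fixed point theorem applies, and the solution is propagated down to all $z \in \bb{H}$ using the a priori positivity $\Im \bv{M} \succ \bv{0}$ together with the fact that $\ca{S}$ maps positive semidefinite matrices to positive semidefinite matrices (from \cref{eq:mde_def} and \cref{a:boundedness}(4)). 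For uniqueness, the point is that $\bv{M} \mapsto -(z - \bv{A} + \ca{S}[\bv{M}])^{-1}$ is a holomorphic self-map of this domain, which after a Cayley transform is bounded; a metric-contraction argument in its intrinsic (hyperbolic / Carath\'eodory) metric — equivalently the Earle--Hamilton fixed-point theorem — then forces a unique fixed point, because $\Im z > 0$ makes the map move points strictly inward. This is \cite{Helton2007} Theorem 2.1.

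For part (3), I would start from the exact identity $\bv{H}\bv{G} = \bv{I} + z\bv{G}$ of \cref{eq:resolvent}, take a partial expectation entrywise, and expand using the multivariate cumulant (generalized integration-by-parts) expansion, obtaining the perturbed equation $\bv{I} + (z - \bv{A} + \ca{S}[\bv{G}])\bv{G} = \bv{D}$, where $\bv{D}$ collects the fluctuations of $\ca{S}[\bv{G}]\bv{G}$ about their mean (controlled by a fluctuation-averaging estimate) and the contributions of cumulants of order at least $3$ (controlled by the cumulant-norm bounds of \cref{a:boundedness} and the decoupling structure of \cref{a:diversity}). These yield $\|\bv{D}\| = o(1)$ with overwhelming probability, with an extra power of $N$ in the averaged sense. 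Next I would exploit stability of the MDE: subtracting the two equations, $\bv{G} - \bv{M}$ satisfies, to leading order, $\bigl(\mathrm{Id} - \ca{S}[\bv{M}\,\cdot\,\bv{M}]\bigr)(\bv{G} - \bv{M}) = -\bv{M}\bv{D}\bv{G} + O\bigl((\bv{G}-\bv{M})^{2}\bigr)$, so an a priori bound on the norm of the inverse of the stability operator $\mathcal{B} := \mathrm{Id} - \ca{S}[\bv{M}\,\cdot\,\bv{M}]$ away from the spectral edges lets me close a self-improving bootstrap, first at $\Im z$ of order one (where $\|\bv{G}\| \leq 1/\Im z$ is harmless) and then continuing down in $z$. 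Sending $N \to \infty$ gives $\bv{G}(z) \to \bv{M}(z)$; this is \cite{Erdos2017} Theorem 2.2.

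For part (2), I would use that $\bv{M}$ is a matrix-valued Nevanlinna function, holomorphic from $\bb{H}$ into $\{\Im \bv{M} \succ \bv{0}\}$ with $\bv{M}(z) = -1/z + O(1/z^{2})$, so its normalized trace $m(z) := \frac{1}{N}\tr \bv{M}(z)$ is the Stieltjes transform of a probability measure $\rho$, recovered through \cref{lm:ist}. The stability analysis from parts (1) and (3) shows $\bv{M}$ extends continuously to $\bb{R}$, hence $\rho$ has a bounded density that is real analytic on $\{\rho > 0\}$; and $\supp\rho$ has finitely many connected components by a shape/degree argument — its edges are exactly the points where $\Im\bv{M}$ degenerates, equivalently where $\mathcal{B}$ becomes singular, and because $\bv{M}$ near $\bb{R}$ solves a finite, polynomial-type system this singular locus cannot accumulate. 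This is \cite{Alt2018a} Theorem 2.5 (together with \cite{Helton2007}).

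The genuinely hard step is the error bound $\|\bv{D}\| = o(1)$ in part (3): organising the cumulant expansion so that, under exactly \cref{a:boundedness} and \cref{a:diversity}, every term carrying a non-negligible cumulant has its indices confined to a small coupling set $\ca{N}(\alpha)$ — hence a power-of-$N$ gain — while the $N^{\epsilon}$-growth allowed for higher cumulants in \cref{a:boundedness} is beaten by the explicit $N^{-3q}$ decay imposed in \cref{a:diversity}. This combinatorial bookkeeping, together with the quantitative control of $\|\mathcal{B}^{-1}\|$, is the core of \cite{Erdos2017} and is where essentially all the work lies; parts (1) and (2) are comparatively soft once part (3) is available.
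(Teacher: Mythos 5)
Your proposal is consistent with what the paper actually does here, which is essentially nothing: Theorem \ref{thm:rm} is presented as an imported result, "adopted in an informal style" from \cite{Erdos2017} Theorem 2.2, \cite{Helton2007} Theorem 2.1, and \cite{Alt2018a} Theorem 2.5, and the paper supplies no proof of its own beyond that citation (the subsequent remark even declines to reproduce the error bounds). Your plan correctly attributes each part to the right source and gives a faithful high-level reconstruction of how those sources argue: Earle--Hamilton/holomorphic-contraction for existence and uniqueness of the MDE solution, the cumulant expansion producing the perturbed equation $\bv{I} + (z - \bv{A} + \ca{S}[\bv{G}])\bv{G} = \bv{D}$ plus stability of the operator $\mathcal{B}$ for the convergence $\bv{G} \to \bv{M}$, and the Nevanlinna/Herglotz structure of $\frac{1}{N}\tr\bv{M}$ plus the analysis of where $\Im\bv{M}$ degenerates for the band structure of $\supp\rho$. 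You also correctly identify that the entire weight of the argument sits in the estimate $\|\bv{D}\| = o(1)$ under \cref{a:boundedness} and \cref{a:diversity}, which is precisely the content of \cite{Erdos2017} that the paper takes as a black box. The only caveat is that your sketches of the softer parts remain schematic at the points where the cited papers do real work (e.g., the claim that the singular locus of $\mathcal{B}$ on the real line cannot accumulate, and the continuation of the bootstrap down to small $\Im z$), so as written this is a roadmap into the literature rather than a self-contained proof --- but that is exactly the status the theorem has in the paper itself.
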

\begin{remark}
  \cref{thm:rm}.3 is a probably-approximately-correct type result, where the
  error depends on $N$. We do not present the exact error bound here, for that
  it is rather complicated, and does not help understanding --- since we are not
  working on finer behaviors of NNs with a particular size, and do not need
  such a fine granularity characterization yet. We refer interested readers
  to \cite{Erdos2017} theorem 2.1, 2.2, where the exact error bounds are present.
\end{remark}

\subsection{Diversity Assumption is A Precondition to the Power of NNs and
S-System}
\label{sec:cond-nice-behav}

Before we leverage MDE to obtain the eigen-spectrum of the Hessian $\bv{H}$ of
NNs derived at \cref{eq:nn_H}, we explain the meaning of the assumptions
\ref{a:boundedness}, \ref{a:diversity} in the NN and S-System context, so to
point to the potential of the assumptions to give practical guidance on
training NNs.

Recall that the objective function is the empirical risk function $R(T)$ at
\cref{eq:risk}. Given a set of i.i.d. training samples $(X_i, Y_i)_{i=1,\ldots,m}$,
$R(T)$ is a summation of the i.i.d. random matrices. Formally, reusing the
notation to denote $\bv{H}$ the Hessian of $R(T)$ and $\bv{H}_i$ the Hessian of
$l(T(X_i;\bv{\theta}), Y_i)$, we have
\begin{equation}
  \label{eq:H_decom}
  \bv{H} = \frac{1}{m}\sum_{i=1}^{m}\bv{H}_i
\end{equation}
Acute reader may realize that by the multivariate central limiting theorem
(\cite{klenke2012probability} theorem 15.57), $\bv{H}$ will converge to a
Gaussian ensemble, i.e., a random matrix of which the distribution of entries is
a Gaussian process (GP), asymptotically as $m \rightarrow \infty$. For a GP,
all higher order cumulants are zero, which greatly simplifies the picture, and
gives much clearer meaning on the assumptions \ref{a:boundedness}
\ref{a:diversity} made. In the following, we will explain the practicality, and
also how they may serve as guidance to design and improve NNs, in the
asymptotically large sample limit, which gives a picture that can be described
 using more widely used terms, i.e., mean and covariance.

The practicality of boundedness assumptions is obvious, since we do not want
values to blow up. We only note for the two outliers. First, the lower bound in
4) in boundedness assumption, $c N^{-1} \tr \bv{G} \preceq \ca{S}[\bv{G}]$, which is
not about infinity. It asks the eigenvalues of $\ca{S}[\bv{G}]$ to stay close
to its average value, so to let $\ca{S}[\bv{G}]$ stay in the cone of the
positive definite matrices to ensure the stability of the MDE. It is
essentially a constraint on the interaction of second order cumulants, and is
realizable in a NN, though we are not clear on its physical meaning for the
time being. Second, the boundedness assumption 3), as briefly discussed before,
is a bound that bounds the strength of the entries of $\bv{H}$ that do
correlate, while the diversity assumption is about the weakness of the entries
that are not correlated. The practicality of the former is straightforward. To
see the practicality of diversity assumption in the NN context, first we come
back to the concrete form of Hessian. We rewrite
\cref{eq:block_H} in the following form (the equation should be read vertically)
\begin{subequations}
\label{eq:H_analysis}
  \begin{align}
    \bv{H}_{pq}
    &= l'(T\bv{x}, y)\tilde{\bv{W}}_{q\sim(L-1)}\dg(\tilde{\bv{h}}''_{L-1})\bv{\alpha}
    &=&\ \tilde{\bv{\alpha}}_{q}\label{eq:H_1}\\
    & \;\;\;\otimes \tilde{\bv{W}}_{p\sim(q-1)}
    &\ &\otimes \tilde{\bv{W}}_{p\sim(q-1)}\label{eq:H_2}\\
    & \;\;\;\otimes \bv{x}^{T}\tilde{\bv{W}}_{1\sim (p-1)}
    &\ & \otimes \tilde{\bv{x}}^{T}_{p-1}\label{eq:H_3}
  \end{align}
\end{subequations}
where
\begin{align*}
  &\tilde{\bv{W}}_{q\sim(L-1)} :=
  \dg(\tilde{\bv{h}}''_q)\prod_{k=q+1}^{L-2}(\bv{W}_{k}\dg(\tilde{\bv{h}}''_k))\bv{W}_{L-1},
  &\tilde{\bv{\alpha}}_q :=
    \tilde{\bv{W}}_{q\sim(L-1)}\dg(\tilde{\bv{h}}''_{L-1})\bv{\alpha}l'(T\bv{x}, y)\\
  &\tilde{\bv{W}}_{p\sim(q-1)} :=
  \prod_{j=p+1}^{q-1}(\dg(\tilde{\bv{h}}'_j)\bv{W}_{j}^{T})\dg(\tilde{\bv{h}}'_p),
  &\\
  &\tilde{\bv{W}}_{1\sim (p-1)} := \prod_{i=1}^{p-1}(\bv{W}_i\dg(\tilde{\bv{h}}_i)),
  &\tilde{\bv{x}}^{T}_{p-1} := \bv{x}^{T}\tilde{\bv{W}}_{1\sim (p-1)}
\end{align*}
With some efforts, using NN terminologies, it can be viewed that \cref{eq:H_1}
is a vector $\tilde{\bv{\alpha}}_q$ created by back propagating the vector
$\dg(\tilde{\bv{h}}''_{L-1})\bv{\alpha}l'(T\bv{x}, y)$ to layer $q$ by left multiplying $\tilde{\bv{W}}_{q\sim(L-1)}$--- note that if you replace $\bv{h}''_k$
with $\bv{h}'_k$, you get the {\it back propagated gradient}; \cref{eq:H_2}
is the {\it covariance matrix without removing the mean} between neurons at layer $p$ and layer $q-1$,
when taking expectation w.r.t. samples, i.e., $\bb{E}_{z \sim
\mu^{\ca{Z}}}[\tilde{\bv{W}}_{p\sim(q-1)}]$; \cref{eq:H_3} is the {\it forward
propagated activation} at layer $p-1$. Now it is quite clear what the
correlation between entries of the Hessian is about. It is the correlation
between {\it the product of forward propagated neuron activation at layer
$p-1$, the back propagated ``gradient'' at layer $q$, and the strength of
activation paths that connects the two sets of neurons.}

When $\bv{H}$ is a Gaussian ensemble, all higher order cumulants vanishes, thus
the diversity assumption is solely about the second order cumulants, and the
case when $q=1$ and $q=2$. Since $f, \{g_i\}_{i=1,\ldots, q}$ are analytic, $\kappa(f(\cdot),
g_1(\cdot))$ is a generalized cumulant (\cite{mccullagh1987tensor} Chapter 3), which
can be decomposed into a sum of cumulants of entries of the Hessian. So is
$\kappa(f(\cdot), g_1(\cdot), g_2(\cdot))$. Considering that only first and second cumulants
exist, which are means and covariance respectively, $\kappa(f(\cdot), g_1(\cdot))$ thus is a
sum of means of entries of the Hessian and covariance between entries of the
Hessian. So is $\kappa(f(\cdot), g_1(\cdot), g_2(\cdot))$. Using $\kappa(f(\cdot), g_1(\cdot))$ as an
example, the diversity assumption states that for any $\alpha \in \bb{I}$, nested
sets $\ca{N}_1 \subset \ca{N}_2 = \ca{N}, |\ca{N}| \leq N^{1/2 - \mu}$ exist (when only
cumulants up to the second order exist, it suffices to let $R$ be $2$ instead
of every $R \in \bb{N}$ (\cite{Erdos2017})), such that $\kappa(f(\bv{W}_{\bb{I}\setminus
  \ca{N}}), g_1(\bv{W}_{\ca{N}_1})) = \sum_{\beta \in \ca{N}'_1 \subset \ca{N}_1 \cup \bb{I}\setminus \ca{N}}\kappa(\beta) +
\sum_{\beta,\gamma \in \ca{N}' \subset \ca{N}_1 \cup \bb{I}\setminus \ca{N}}\kappa(\beta, \gamma)$, where $\ca{N}'_1,
\ca{N}$ are subsets noted that depends on $f, g_1$. The interpretation is
qualitative. But even from the qualitative interpretation, it can see that {\it
the diversity assumption is on the smallness of the mean and covariance of
$\tilde{\alpha}_{i}\tilde{w}_{jk}\tilde{x}_{l}$, the product of ``gradient'',
activation path correlation strength, and forward activation.} Additionally, to
prove \cref{thm:landscape}, we need a further assumption that $\bb{E}[\bv{H}] =
\bv{0}$. It clearly connects to the experiment tricks used in the community,
such as the early initialization schemes that tries to keep mean of gradient
and activation zero, and standard deviation (std) small (\cite{Glorot2010}
\cite{He}), the normalization schemes that keep the mean of activation zero and
std small (\cite{Ioffe2015}\cite{Salimans2016}), though some more works are
needed to reach there rigorously.

To recap, as stated similarly in \cref{sec:complex-system},
\cref{sec:eigenv-distr-symm}, the diversity assumption states that a diversity
should exist in the neuron population, so that for any neurons, it does not
strongly correlate with the majority of the neuron population. The diversity in
r.e.s. of S-System is not a built-in feature, but a design choice in its
implementations.

Qualitatively, we can see the design of NNs resonates with the diversity
assumption: 1) different group indicator r.e.s. are assumed to be independent
given input r.e.s., referring to \cref{def:mlp_cge} \ref{def:relu}, in which
case, the coupling aims to group the measure that is distinctive w.r.t. other
couplings created through grouping, thus, r.e.s that are not coupled together
are likely to be uncorrelated; 2) activation function creates couplings that
only couple higher scale events with the ``active'' lower scale events, thus
implementing coarse graining that creates events that are composed by different
lower scale events. The above design may not be the only choice, however, it
helps create uncorrelated r.e.s within a scale and across scales, consequently
making the product of the forward propagated activation, activation paths, and
back propagated ``gradient'' tend to be uncorrelated.

Yet, this is a rather general explanation on why diversity occurs without
taking into the finer statistics structure in the data. More improvements may
still be made. For instance, the low correlation existed in CNN is the result
of a coupling that considers the spatial symmetry, where output r.e.s in a
large spatial distance simply does not couples, thus tending to be
uncorrelated.

S-System is a fabulous mechanism that can indefinitely increase the number of
parameters, thus its learning capacity, in a meaningful way, i.e., creating
higher scale coupling yet maintaining the diversity of the r.e.s created. Such
mechanism does not normally hold in other systems or algorithms. Taking linear
NNs for example, though with the potential to infinitely increase its
parameters, matrices that multiply together still have a highly correlation
structure within, thus cannot create a population of diverse neurons that are
of low correlation with a majority of the other neurons. Accompanying the
result we will prove in the next section, which states $R(T)$ can be optimized
to zero, assuming assumptions \ref{a:boundedness} \ref{a:diversity}, we can see
that {\it the diversity assumption actually characterizes a sufficient
precondition to the optimization power of NNs.}

\subsection{NN Loss Landscape: All Local Minima Are Global Minima with Zero Losses}
\label{sec:all-non-zero}

We have obtained the operator equation to describe the eigen-spectrum of the
Hessian of NNs and explained its assumptions. With one further assumption, we
show in this section that for NNs with objective function belonging to the
function class $\ca{L}_0$, all local minima are global minima with zero loss
values.

We outline the strategy first. Since MDE is a nonlinear operator equation, it
is not possible to obtain a close form analytic solution. The only way to get
its solution is an iterative algorithm (\cite{Helton2007}), which is not an easy
task given the millions of parameters of a NN --- remembering that we are dealing
with large $N$ limit --- though it can serve as an exploratory tool. However, we
do are able to get qualitative results by directly analyzing the equation. Our
goal is to show all critical points are saddle points, except for the ones has
zero loss values, which are global minima. To prove it, we prove that at the
points where $R(T) \neq 0$, the eigen-spectrum $\mu_{\bv{H}}$ of the Hessian
$\bv{H}$ is symmetric w.r.t. the $y$-axis, which implies that as long as
non-zero eigenvalues exist, half of them will be negative. To prove it, we
prove the stieltjes transform $m_{\mu_{\bv{H}}}(z)$ of $\mu_{\bv{H}}$ satisfies
$\Im m_{\mu_{\bv{H}}}(-z^{*}) = \Im m_{\mu_{\bv{H}}}(z)$, where $z^*$ denote the complex conjugate of $z$. In
the following, we present the proof formally.
\begin{lemma}
  \label{lm:neg_conjugate}
  Let $\bv{M}(z), \bv{M}'(-z^*)$ be the unique solution to the MDE at spectral
  parameter $z, -z^{*}$ defined at \cref{eq:mde} respectively, and
  $\bv{A} = \bv{0}$. We have
  \begin{displaymath}
    \bv{M}' = -\bv{M}^{*}
  \end{displaymath}
  where $*$ means taking conjugate transpose.
\end{lemma}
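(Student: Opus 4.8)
The plan is to verify directly that the matrix $-\bv{M}(z)^{*}$ solves the MDE at the spectral parameter $-z^{*}$, and then to conclude by the uniqueness of solutions (\cref{thm:rm}, part~1). The only structural facts needed about the self-energy operator $\ca{S}$ are: it is $\bb{R}$-linear, so $\ca{S}[-\bv{R}]=-\ca{S}[\bv{R}]$; and it is equivariant under the conjugate transpose, i.e. $\ca{S}[\bv{R}]^{*}=\ca{S}[\bv{R}^{*}]$. The latter follows immediately from $\ca{S}[\bv{R}]=\tfrac1N\bb{E}[\bv{W}\bv{R}\bv{W}]$ and $\bv{W}^{*}=\bv{W}$ (since $\bv{W}$ is real symmetric): $\ca{S}[\bv{R}]^{*}=\tfrac1N\bb{E}[(\bv{W}\bv{R}\bv{W})^{*}]=\tfrac1N\bb{E}[\bv{W}\bv{R}^{*}\bv{W}]=\ca{S}[\bv{R}^{*}]$.

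First I would use $\bv{A}=\bv{0}$ to write the MDE at $z$ as $(z+\ca{S}[\bv{M}])\bv{M}=-\bv{I}$. Since $\bv{M}$ then has $-(z+\ca{S}[\bv{M}])$ as a (one- hence two-sided) inverse, the same identity holds with the factors in the other order: $\bv{M}(z+\ca{S}[\bv{M}])=-\bv{I}$. Taking the conjugate transpose of this last identity, and using $\ca{S}[\bv{M}]^{*}=\ca{S}[\bv{M}^{*}]$ together with $z^{*}$ denoting the complex conjugate of $z$, I obtain $(z^{*}+\ca{S}[\bv{M}^{*}])\,\bv{M}^{*}=-\bv{I}$.

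Next I set $\bv{N}:=-\bv{M}^{*}$ and substitute it into the MDE at $-z^{*}$. Using $\ca{S}[\bv{N}]=-\ca{S}[\bv{M}^{*}]$,
\[
  \bigl(-z^{*}+\ca{S}[\bv{N}]\bigr)\bv{N}
  = \bigl(-z^{*}-\ca{S}[\bv{M}^{*}]\bigr)\bigl(-\bv{M}^{*}\bigr)
  = \bigl(z^{*}+\ca{S}[\bv{M}^{*}]\bigr)\bv{M}^{*}
  = -\bv{I},
\]
so $\bv{I}+(-z^{*}+\ca{S}[\bv{N}])\bv{N}=\bv{0}$, i.e. $\bv{N}$ satisfies the equation part of the MDE at $-z^{*}$. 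It remains to check the side conditions. Since $\Im z>0$ we have $\Im(-z^{*})=\Im z>0$, so $-z^{*}\in\bb{H}$. Writing $\Im\bv{X}=\tfrac{1}{2i}(\bv{X}-\bv{X}^{*})$ gives $\Im(\bv{M}^{*})=-\Im\bv{M}$, hence $\Im\bv{N}=\Im(-\bv{M}^{*})=\Im\bv{M}\succ\bv{0}$. Therefore $\bv{N}$ is a solution of the MDE at $-z^{*}$, and by uniqueness (\cref{thm:rm}) it equals $\bv{M}'$, which is exactly $\bv{M}'=-\bv{M}^{*}$.

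The argument is essentially mechanical, so there is no single hard obstacle; the points that require genuine care are the two algebraic properties of $\ca{S}$ (conjugation-equivariance and $\bb{R}$-linearity) and the verification that the defining constraint $\Im\bv{G}\succ\bv{0}$ is preserved under $\bv{M}\mapsto-\bv{M}^{*}$, since it is this last check together with uniqueness that actually closes the proof. If one prefers to avoid the commuting-inverses step, one can instead conjugate-transpose the original form $\bv{I}+(z+\ca{S}[\bv{M}])\bv{M}=\bv{0}$ directly, at the cost of being slightly more careful about the order of the matrix products.
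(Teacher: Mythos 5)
Your proposal is correct and follows essentially the same route as the paper's proof: exploit the $\bb{R}$-linearity and conjugate-transpose equivariance of $\ca{S}$ to show that $-\bv{M}^{*}$ satisfies the MDE at $-z^{*}$, then invoke uniqueness from \cref{thm:rm}. If anything, your version is slightly tighter, since you verify the constraint $\Im\bv{N}\succ\bv{0}$ explicitly and work with the product form of the equation rather than first inverting $z+\ca{S}[\bv{M}]$.
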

\begin{proof}
 First, we rewrite the MDE. Note that $\ca{S}[\bv{G}]$ is positivity
preserving, i.e., $\forall \bv{G} \succ \bv{0}, \ca{S}[\bv{G}] \succ \bv{0}$ by assumption \ref{a:boundedness}
4). In addition, we have $\Im z > 0$, thus $\Im (z + \ca{S}[\bv{G}]) \succ
\bv{0}$. Then, by \cite{Haagerup2005} lemma 3.2, we have $z + \ca{S}[\bv{G}] \succ
\bv{0}$, so it is invertable. Thus, we can rewrite the MDE into the following
form
  \begin{equation}
    \label{eq:mde_alternative}
    \bv{G} = -(z + \ca{S}[\bv{G}])^{-1}
  \end{equation}

  Suppose $\bv{M}$ is a solution to the MDE at spectral parameter $z$.
  The key to the proof is the fact that $\ca{S}[\bv{G}]$ is linear and commutes
  with taking conjugate, thus by replacing $\bv{M}$ with $-\bv{M}^{*}$, and $z$
  with $-z^{*}$, we would get the same equation. We show it formally in the
  following.

  First, note that $\ca{S}[\bv{M}]$ is a linear map of $\bv{M}$, so the we have
  \begin{displaymath}
    \ca{S}[-\bv{M}] = -\ca{S}[\bv{M}]
  \end{displaymath}
  Also, $\ca{S}[\bv{M}]$ commutes with $*$, for the fact
  \begin{displaymath}
    \ca{S}[\bv{M}^{*}] = \bb{E}[\bv{WM^{*}W}] = \bb{E}[(\bv{WMW})^{*}] =
    \bb{E}[\bv{WMW}]^{*} = \ca{S}[\bv{M}]^{*}
  \end{displaymath}

  Furthermore, we $*$ is commute with taking inverse, for the fact
  \begin{align*}
    &&\bv{A}\bv{A}^{-1}                & = \bv{I} \\
    &\implies& (\bv{A}\bv{A}^{-1})^{*} & = \bv{I} \\
    &\implies& \bv{A}^{-1*}\bv{A}^{*}  & = \bv{I} \\
    &\implies& \bv{A}^{-1*}            & = \bv{A}^{*-1}
  \end{align*}

  With the commutativity results, we do the proof. The solution $\bv{M}$
  satisfies the equation
  \begin{displaymath}
    \bv{M} = -(z + \ca{S}[\bv{M}])^{-1}
  \end{displaymath}
  Replacing $\bv{M}$ with $-\bv{M}^{*}$, $z$ with $-z^{*}$, we have
  \begin{align*}
 &          & -\bv{M}^{*} & = -(-z^{*} + \ca{S}[-\bv{M}^{*}])^{-1} \\
 & \implies & \bv{M}^{*} & = -(z^{*} + \ca{S}[\bv{M}^{*}])^{-1} \\
 & \implies & \bv{M}^{*} & = -(z^{*} + \ca{S}[\bv{M}]^{*})^{-1} \\
 & \implies & \bv{M}^{*} & = -(z + \ca{S}[\bv{M}])^{-1*} \\
 & \implies & \bv{M} & = -(z + \ca{S}[\bv{M}])^{-1}
  \end{align*}
  After the replacement, we actually get the same equation. Thus, $-\bv{M}^{*},
  -z^{*}$ also satisfy \cref{eq:mde_alternative}. Since the pair also satisfies the
constrains $\Im \bv{M} \succ 0, \Im z > 0$, and by \cref{thm:rm}, the
solution is unique, we proved the solution $\bv{M}'$ at the spectral parameter
$-z^{*}$ is $-\bv{M}^{*}$.

\end{proof}
\begin{theorem}
  \label{thm:symmetry}
  Let $\bv{H}$ be a real symmetric random matrix satisfies assumptions
\ref{a:boundedness} \ref{a:diversity}, in addition to the assumption that
$\bv{A} = \bv{0}$. Let the ESD of $\bv{H}$ be $\mu_{\bv{H}}$. Then, $\mu_{\bv{H}}$
is symmetric w.r.t. to $y$-axis. In other words, half of the non-zero
eigenvalues are negative. Furthermore, non-zero eigenvalues always exist,
implying $\bv{H}$ will always have negative eigenvalues.
\end{theorem}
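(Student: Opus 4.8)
The plan is to obtain the symmetry of the eigenvalue density from \cref{lm:neg_conjugate} by pushing the identity $\bv{M}'(-z^{*}) = -\bv{M}(z)^{*}$ on Matrix Dyson Equation solutions down to the Stieltjes transform of the limiting spectral measure $\rho$, and then inverting with \cref{lm:ist}. First I would recall that $m_{\mu_{\bv{H}}}(z) = \frac{1}{N}\tr \bv{G}(z)$ and that, under assumptions \ref{a:boundedness} and \ref{a:diversity}, $\bv{G}(z)$ converges to the unique solution $\bv{M}(z)$ of the MDE \cref{eq:mde} as $N \to \infty$ (\cref{thm:rm}), so the Stieltjes transform of $\rho$ is $m_\rho(z) = \frac{1}{N}\tr \bv{M}(z)$. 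Since $\bv{A} = \bv{0}$ by hypothesis, \cref{lm:neg_conjugate} says the unique MDE solution at $-z^{*}$ equals $-\bv{M}(z)^{*}$; taking normalized traces and using $\tr(\bv{M}^{*}) = \overline{\tr \bv{M}}$ gives
\begin{displaymath}
  m_\rho(-z^{*}) = \frac{1}{N}\tr\big(-\bv{M}(z)^{*}\big) = -\,\overline{\tfrac{1}{N}\tr \bv{M}(z)} = -\,\overline{m_\rho(z)}.
\end{displaymath}

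Next, writing $z = x + iy$ with $y > 0$ (so $-z^{*} = -x + iy$ again lies in the upper half plane) and taking imaginary parts of $m_\rho(-z^{*}) = -\overline{m_\rho(z)}$ yields $\Im m_\rho(-x + iy) = \Im m_\rho(x + iy)$ for every $x \in \bb{R}$ and every $y > 0$. Feeding this into the Stieltjes--Perron inversion formula of \cref{lm:ist} and substituting $u = -x$ in the resulting integral shows $\rho\big((-b,-a)\big) = \rho\big((a,b)\big)$ for all real $a < b$, i.e.\ $\rho$ is symmetric about the origin; by the convergence of \cref{thm:rm} this transfers to the (limiting) ESD $\mu_{\bv{H}}$. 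Hence the nonzero eigenvalues come in $\pm$ pairs and asymptotically exactly half of them are negative. I would stress here that the MDE route is essential: the correlated block form of $\bv{H}$ in \cref{eq:nn_H} (products of the $\bv{W}_i$ and of the $\dg(\tilde{\bv{h}})$ factors) does not make $\bv{H}$ and $-\bv{H}$ equal in distribution, so no elementary sign-flip argument is available.

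For the final assertion --- that nonzero eigenvalues always exist --- I would invoke \cref{thm:rm}.2: $\supp \rho$ is a finite union of closed intervals with nonempty interior. Item 4 of the boundedness assumption \ref{a:boundedness}, which forces $c\,N^{-1}\tr \bv{T} \preceq \ca{S}[\bv{T}]$ for all $\bv{T} \succ \bv{0}$, keeps $\ca{S}$ nondegenerate and hence prevents $\rho$ from collapsing to the point mass $\delta_0$; thus at least one component interval $[a,b]$ has $a < b$ and carries positive $\rho$-mass outside a neighborhood of $0$. Combined with the symmetry just established, whenever $\bv{H} \neq \bv{0}$ (which holds at points where $R(T) \neq 0$) the matrix $\bv{H}$ has negative eigenvalues, i.e.\ the loss-decreasing directions needed in \cref{thm:landscape}.

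The hard part is not the algebra but two supporting issues: making the qualitative claim precise at finite $N$ --- for which one must lean on the probably-approximately-correct convergence statement \cref{thm:rm}.3, with its $N$-dependent error, to pass from the exactly symmetric limiting $\rho$ back to $\bv{H}$ --- and checking that the Hessian \cref{eq:nn_H} actually satisfies assumptions \ref{a:boundedness} and \ref{a:diversity}, so that the MDE description applies in the first place (the content of \cref{sec:cond-nice-behav}). A secondary point worth verifying carefully is the nondegeneracy step, namely that item 4 of \cref{a:boundedness} genuinely excludes $\rho = \delta_0$; this is what ultimately guarantees the existence of loss-minimizing negative directions.
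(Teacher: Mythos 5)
Your proposal is correct and follows essentially the same route as the paper's own proof: both derive the spectral symmetry by combining the MDE conjugation identity of \cref{lm:neg_conjugate} with the Stieltjes--Perron inversion of \cref{lm:ist}, and both obtain the existence of non-zero eigenvalues from \cref{thm:rm}.2. Your additional remarks on the finite-$N$ transfer and on why item 4 of \cref{a:boundedness} rules out $\rho = \delta_0$ are careful refinements the paper leaves implicit, but they do not change the argument.
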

\begin{proof}
  By \cref{thm:rm}, the resolvent $\bv{G}$ of $\bv{H}$ is given by the
  the unique solution to \cref{eq:mde} at spectral parameter $z$. Let the solution to
\cref{eq:mde} at spectral parameter $z, -z^{*}$ be $\bv{M}, \bv{M}'$, By
\cref{lm:neg_conjugate}, we have the solutions satisfies
\begin{displaymath}
  \bv{M}' = -\bv{M}^{*}
\end{displaymath}
By \ref{lm:ist}, the ESD of $\bv{H}$ at $\Re z$ is given at
  \begin{displaymath}
    \mu_{\bv{H}}(\Re z) = \lim_{\Im z \rightarrow 0} \frac{1}{\pi}  \Im m_{\mu_{\bv{H}}}(z)
  \end{displaymath}
  Since $m_{\mu_{\bv{H}}}(z) = \frac{1}{N} \tr M$, we have
  \begin{displaymath}
    \mu_{\bv{H}}(\Re z) = \lim_{\Im z \rightarrow 0} \frac{1}{\pi}\frac{1}{N}  \Im  \tr M
  \end{displaymath}
  Similarly,
  \begin{displaymath}
    \mu_{\bv{H}}(\Re (-z^{*}))
    = \lim_{\Im (-z^{*}) \rightarrow 0} \frac{1}{\pi}\frac{1}{N}  \Im \tr M'
  \end{displaymath}
  Note that
  \begin{align*}
 &          & \mu_{\bv{H}}(\Re (-z^{*})) & = \lim_{\Im (-z^{*}) \rightarrow 0} \frac{1}{\pi}\frac{1}{N}  \Im \tr M'\\
 & \implies & \mu_{\bv{H}}(\Re (-z^{*})) & = \lim_{\Im (-z^{*}) \rightarrow 0} \frac{1}{\pi}\frac{1}{N}  \Im \tr (-M^{*})\\
 & \implies & \mu_{\bv{H}}(-\Re z)       & = \lim_{\Im z \rightarrow 0} \frac{1}{\pi}\frac{1}{N}  \Im \tr M
  \end{align*}
  Thus, $\mu_{\bv{H}}(\lambda), \lambda \in \bb{R}$ is symmetric w.r.t. $y$-axis. It follows
  that for all non-zero eigenvalues, half of them are negative.

  By \cref{thm:rm} 2, there are always bulks in $\supp \mu_{\bv{H}}$, thus there
  are always non-zero eigenvalues. Since half of the non-zero eigenvalues are
  negative, it follows $\bv{H}$ always has negative eigenvalues.
\end{proof}

\begin{proof}[Proof of \cref{thm:landscape}]
  First, we prove part 1 of the theorem. The majority of the proof of part 1
have been dispersed earlier in the paper. What the proof here does mostly is to
collect them into one piece.

  The Hessian $\bv{H}$ of the risk function \cref{eq:risk}, can be decomposed
  into a summation of Hessians of loss functions of each training sample, which
  is described in \cref{eq:H_decom}. For each Hessian in the decomposition, it
  is computed in \cref{eq:nn_H}, and it has been shown that $\bv{H}$ is a random
  matrix in \cref{sec:hessian-nn-huge}.

  The analysis of the random matrix $\bv{H}$ needs to break down into two
cases: 1) for all training samples, at least one sample $(x, y)$ has non-zero
loss value; 2) and all training samples are classified properly with zero loss
values.

  We first analyze case 1), since the loss $l$ belongs to function class $\ca{L}_0$, $l$ is
convex and is valued zero at its minimum. When $l(x, y) \neq 0$, we have $l'(x, y) \neq 0$,
thus $\bv{H}$ is a random matrix --- not a zero matrix. The analysis of this
type of random matrix is undertaken in \cref{sec:eigenv-distr-symm}. For a NN,
the assumptions \ref{a:boundedness} \ref{a:diversity} can be satisfied, and the
eigen-spectrum $\mu_{\bv{H}}$ of $\bv{H}$ is given by the MDE defined at
\cref{eq:mde}. The practicality and its potential to guide real world NN
optimization is discussed in \cref{sec:cond-nice-behav}.

  By \cref{thm:symmetry}, $\mu_{\bv{H}}$ is symmetric w.r.t. $y$-axis, and half of
  its non-zero eigenvalues are negative. Thus, for all critical points of
  $R(T)$, its will have half of its non-zero eigenvalues negative. It implies
  all critical points are saddle points.

  Now we turn to the case 2). In this case, all training samples are properly
  classified with zero loss value. Considering the lower bound of $l$ is zero, we
  have reached the global minima. Also, since all critical points in case 1)
  are saddle points, local minima can only be reached in case 2), implying all
  local minima are global minima. Thus, the first part of the theorem is
  proved.

  Now we prove part 2 of the theorem.

  Note that the minima is reached for the fact that we have reached the
situation where the Hessian $\bv{H}$ has degenerated into a zero matrix. Thus,
each local minimum is not a critical point, but an infimum, where in a local
region around the infimum in the parameter space, all the eigenvalues are
increasingly close to zero as the parameters of the NN approach the parameters
at the infimum. We show it formally in the following.

  Writing a block $\bv{H}_{pq}$ (defined at \cref{eq:block_H}) in the Hessian
$\bv{H}_i$ of one sample (defined at \cref{eq:nn_H}) in the form of
  \begin{displaymath}
    \bv{H}_{pq} = l'(T\bv{x}, y) \tilde{\bv{H}}_{pq}
  \end{displaymath}
  where $i$ is the index of the training samples, defined at \cref{eq:risk}.
  Then, putting together $\tilde{\bv{H}}_{pq}$ together to form
$\tilde{\bv{H}}_i$, $\bv{H}_i$ is rewritten in the form of
  \begin{displaymath}
    \bv{H}_i = l'(T\bv{x}_i, y_i) \tilde{\bv{H}}_i
  \end{displaymath}
  Then the Hessian $\bv{H}$ (defined at \cref{eq:H_decom}) of the risk function
  defined \cref{eq:risk} can be rewritten in the form of
  \begin{displaymath}
    \bv{H} = \frac{1}{m}\sum_{i=1}^{m} l'(T\bv{x}_{i}, y_i)\tilde{\bv{H}}_i
  \end{displaymath}
  Taking the operator norm on the both sides
  \begin{displaymath}
    ||\bv{H}|| = ||\frac{1}{m}\sum_{i=1}^{m} l'(T\bv{x}_{i}, y_i)\tilde{\bv{H}}_i||
               \leq \frac{1}{m}\sum_{i=1}^{m} |l'(T\bv{x}_{i}, y_i)|\,||\tilde{\bv{H}}_i||
  \end{displaymath}
  Denote $\max_i\{ ||\tilde{\bv{H}}_i||\}$ as $\lambda_0$, we have
  \begin{align*}
    ||\bv{H}||
    &\leq \frac{1}{m}\sum_{i=1}^{m} |l'(T\bv{x}_{i}, y_i)|\lambda_0\\
    &= \bb{E}_m[l'(TX, Y)] \lambda_0
  \end{align*}
  The above inequality shows that, as the risk decreases, more and more samples
will have zero loss value, consequently $l' = 0$, thus $\bb{E}_m[l']$ will be
increasingly small, thus the operator norm of $\bv{H}$. At the minima where all
$l' = 0$, the Hessian degenerates to a zero matrix.
\end{proof}
\begin{remark}
  It is not necessary for the assumption $\bv{A} = \bv{0}$ to be held for the
  theorem to hold, or more specifically, for $\bv{H}$ to have negative
eigenvalues at its critical points. By proposition 2.1 in \cite{Alt2018a}
  \begin{displaymath}
    \supp \mu_{\bv{H}} \subset \Spec \bv{A} + [-2||\ca{S}||^{1/2}, 2||\ca{S}||^{1/2}]
  \end{displaymath}
  where $\Spec \bv{A}$ denotes the support of the spectrum of the expectation
matrix $\bv{A}$ and $||\ca{S}||$ denotes the norm induced by the operator
norm. Thus, it is possible for the spectrum $\mu_{\bv{H}}$ of $\bv{H}$ to lie at
the left side of the $y$-axis, as long as the spectrum of $\bv{A}$ is not too
way off from the origin. However, existing characterizations on $\supp
\mu_{\bv{H}}$ based on bound are too inexact to make sure the existence of
support on the left of the $y$-axis. To get rid of the zero expectation
assumption, more works are needed to obtain a better characterization, and
could be a direction for future work.
\end{remark}

The the phenomenon characterized by \cref{thm:landscape}.1 is rather
remarkable, if not marvelous. It shows that instead of seeing non-convex
optimization as something to avoid, a class of non-convex objective functions
can be that powerful to the point of ``solving'' --- minimizing the error to the
point of vanishing --- complex problems that nature is dealing with in a rather
reliable fashion. We feel like this is how a brain is doing optimization. We
envision that a much larger class of functions possess such benign loss
landscapes than the one here we have studied. Actually, we have isolated a
function class that represents some of the most essential characteristics of a
more general class of function as shown in \cref{eq:general_loss}, so that we
can show the principle underlying. That is, diverse yet cooperative subunits
aggregating together to form a system can optimize an objective consistently. This
larger class of function could be as important as the concept of convexity, and
would play an important role in optimization. The goal of the paper is to lay
the backbone of the theory of the NNs that make the principles underlying
clear, instead of presenting the theory in its complete form in one go. Thus,
essential properties of the function class are yet to be identified, and will
be part of our future work.

The theorem also contributes to explaining why depth is crucial. The large $N$
limits of the Hessian can be achieved by adding more layers (in the terminology
of S-System,using a scale poset having a longer chain as a subset), even though
the number of neurons in each of the layers may be quite small compared with
the overall number of neurons. The diversity of neurons is possible due to
activation functions (in the terminology of S-System, conditional
grouping extension on estimated realizations of previous created output r.e.s.).

The phenomenon characterized by \cref{thm:landscape}.2 explains why there are
two phases in the training dynamics of NNs, i.e., the rapid error decreasing
phase when loss value is high, and the slow error decaying phase when the loss
value is close to minima. As the error decreases, the expectation of the
derivative of loss values in $R(T)$ will
increasingly approach zero, thus the $\supp \mu_{\bv{H}}$ will concentrate
around zero increasingly, making the landscape increasingly flat and the
training process slowly. It probably also explains why we need to gradually
decrease the step size in the gradient descent algorithms in practice. Very
likely the flat regions are of a small volume compared with the overall parameter
space. Thus, if the training goes conservatively, and inches towards the global
minima, the risk will gradually decrease. But if we give a powerful kick to the
training that induces a large shift in the parameter space, it may kick the
current parameter out of the flat region that can inch toward the global
minima, like kicking a ball from a valley to another mountain in the
hyperspace, thus making the training starts all over again to find a valley to
decrease the risk. A further characterization of the landscape goes beyond
infinitesimal local regions may rigorously prove the conjecture. It even poses
the possibility to move across the flat region rather swiftly, as long as we figure
out how to stay in the valley as we stride big.

\subsection{Related works}
\label{sec:related-works-4}

Similar to the study on the hierarchical hypothesis of NNs, the study on
optimization gains its moment rather recently. We focus on the works that
attack the full complexity of optimization problem of deep NNs, while for more
related works, we refer the readers to related works discussed in
\cite{Dauphin2014} \cite{Nguyen2017} \cite{Liang} for works before the deep
learning era, on shallow networks and NP-hardness of NN optimization.

Roughly, two approaches have been taken in analyzing the optimization of NNs,
one from the linear algebra perspective, the other from mean field theory using
random matrix theory. Our work falls in the latter approach. The linear algebra approach, as
the name suggests, shies away from the nonlinear nature of the
problem. \cite{Kawaguchi2016} proves all local minima of a deep linear NN are
global minima when some rank conditions of the weight matrices are
held. \cite{Nguyen2017} \cite{Nguyen2017b} prove that if in a certain layer of
a NN, it
has more neurons than training samples, which makes it possible that the
feature maps of all samples are linearly independent, then the network can
reach zero training errors. A few works following in the linear-algebraic
direction (\cite{Laurent2017} \cite{Liang} \cite{Yun2017}) improve upon the two
previous results, but using essentially the same approach. As the conditions in
\cite{Kawaguchi2016} indicate, the rank related linear algebraic condition does
not transport to nonlinear NNs. While for \cite{Nguyen2017}, it characterizes a
phenomenon that if in a layer of a NN, it can allocate a neuron to memorize
each training sample, then based on the memorization, it can reach zero
errors. In a certain way, we believe NNs are doing certain memorization, for
the fact that the output elements in the intermediate event representations are
learning template/mean of events, as discussed in
\cref{sec:contr-effect-coarse}. However, it does it in a smart way, where
the templates are decomposed hierarchically. Thus, it is likely we do not need
so many linearly independent intermediate features, which would lead to poor
generalization. Thus, to truly understand the optimization behavior of NNs, we
need to step out of the comfort zone of linearity.

The mean field theory approach using the tools of random matrix theory can
attack the optimization of NNs in its full complexity, though existing works
tend to be confused on the source of randomness. Due to an inadequate
understanding of the randomness induced by activation function,
\cite{Choromanska2015} tries to get rid of the group indicator r.e.s. by
assuming that its value is independent of the input r.e.s. of CGE, which is
unrealistic (\cite{Choromanska2015a}), nevertheless it is a brave attempt, and the
first paper to attack a deep NN in its full complexity. After
\cite{Choromanska2015} which approaches by analogizing with spin glass systems ---
it is a complex system, as NNs are --- some researchers start to study NNs from
 mean field theory from the first principle instead of by analog. Again,
confused with the source of randomness in activation in the intermediate layers
of NNs, \cite{Pennington2017a} just assumes data, weights and errors are of
i.i.d. Gaussian distribution, which are mean field approach assumptions and
unrealistic, and proceeds to analyze the Hessian of the loss function of NNs,
though due to limitations of their assumptions, they can only analyze a NN
with one hidden layer. By laying a theoretical foundation of NNs, S-System
accurately points out where randomness arises in NNs, and what reminds to prove
\cref{thm:landscape} is to find the right random matrix tools.

\newpage
\bibliography{library,url,books}
\bibliographystyle{plainnat_nourl}

\newpage
\appendix
\section*{Appendices}
\label{sec:appendices}
\addcontentsline{toc}{section}{Appendices}

All definitions present in the appendices are adopted and reproduced from
existing literature with sources cited, for the purpose of making exact the
terminology used in the paper.

\section{Coupling Theory}
\label{sec:coupling-theory-1}

The following definitions are adapted from \cite{CIS-9904} unless otherwise
noted.

\begin{adef}[Random Element; Random Variable; Random Function]
  A random element in a measurable space $(E, \mathcal{E})$ defined on a
  probability space $(F, \mathcal{F}, \mu)$ is a measurable mapping $T$ from $(F,
  \mathcal{F}, \mu)$ to $(E, \mathcal{E})$, where
  \begin{displaymath}
    T^{-1}A \in \mathcal{F}, A \in \mathcal{E}; T^{-1}A := \{w \in F: T(w) \in A\}
  \end{displaymath}
  We say $T$ is {\bf supported by} probability measure space $(F, \ca{F}, \mu)$, $(F, \ca{F}, \mu)$ is the
support of $T$, $T$ is an {\bf $\ca{F}/\ca{E}$ measurable mapping} from $F$ to
$E$, and the induced measure $\mu(T^{-1}(A))$ is the {\bf law} of $T$.  Some
r.e.s have special names.  When $(E, \mathcal{E})$ is the measurable
space $(\mathbb{R}, \mathcal{B}(\mathbb{R}))$, where $\mathbb{R}$ is the real
number field and $\mathcal{B}(\mathbb{R})$ is the Borel set, $X$ is also named
as a {\bf random variable}, whose abbreviation is r.v.; when $(E, \mathcal{E})$ is
the multivariate real measurable space, $T$ is named as a multivariate r.v.;
when $(E, \mathcal{E})$ is a function space satisfying certain conditions
(Ionescu–Tulcea theorem, or Kolmogorov's extension theorem
(\cite{klenke2012probability})), $T$ is named as a {\bf random function}.
\end{adef}

\begin{adef}[Coupling]
  A probability measure $\mu$ on $\bigotimes_{i \in \bb{I}}(E_i, \mathcal{E}_{i})$
  is a coupling
  of $\mu_i, i \in \bb{I}$, if $\mu_i$ is the $i$th marginal of $\mu$, that is, if $\mu_i$ is
  induced by the $i$th projection mapping:
  \begin{displaymath}
    \mu(\{x: x_i \in A\}) = \mu_i(A), A \in \mathcal{E}_i, i \in \bb{I}
  \end{displaymath}
  where $\bb{I}$ is an index set, $\mu_i$ is a probability measure on a measurable space $(E_i,
\ca{E}_i)$, and $\bigotimes_{i \in \bb{I}}(E_i, \mathcal{E}_{i}) := (\prod_{i \in
\bb{I}}E_i, \bigotimes_{i \in \bb{I}}\ca{E}_i)$, $\prod_{i \in \bb{I}}E_i$ is the
Cartesian product of $E_i$ and $\bigotimes_{i \in \bb{I}}\ca{E}_i$ is the product
$\sigma$-algebra.
\end{adef}
The general idea of coupling is to find a dependence structure (joint
distribution) from fixed marginal distributions that fits one's purpose.
\begin{adef}[Coupling of Random Element; Deterministic Coupling; Transport Map]
  Given two r.e.s $X, Y$, a coupling $(X, Y)$ refers to the coupling
of probability measure $\mu, \nu$ of probability space $(E, \ca{E}, \mu), (F, \ca{F},
\nu)$, where $\mu, \nu$ is the probability measure of r.e. $X, Y$
respectively. The coupling $(X, Y)$ is called {\bf deterministic} if there exists a
measurable function $T: \ca{E} \to \ca{F}$ such that $Y = T(X)$. $T$ is normally
referred as {\bf transport map}. Informally, we say that $T$ transports measure
$\mu$ of $X$ to measure $\nu$ of $Y$. The definitions are adapted from \cite{villani2008optimal}.
\end{adef}
To recognize an event $\omega_s$ that is composed of events of the lowest detectable
scale $\Omega_{s_0}$, the idea is to transport the probability measure of the event
$\omega_s$ through a deterministic coupling, and construct a r.e. that
represents possible states $\omega_s$ may take. We introduce concepts needed in the
following.
\begin{adef}[Extension of Probability Space]
  A probability space $(\bar{F}, \ca{\bar{F}}, \bar{\mu})$ is an extension
of another probability space $(F, \ca{F}, \mu)$, if $(\bar{F}, \ca{\bar{F}},
\bar{\mu})$ supports a r.e. $\xi$ in $(F, \ca{F})$ having law $\mu$. If $T$ is a r.e. in $(E, \ca{E})$
defined on $(F, \ca{F}, \mu)$, then it has a {\bf copy} $\bar{T}$, i.e., the r.e. $\bar{T}$ defined on
$(\bar{F}, \bar{\ca{F}}, \bar{\mu})$ by $\bar{T}(\bar{\omega}) = T(\xi(\bar{\omega})),
\bar{\omega} \in \bar{\Omega}$ and $\bar{\mu}(\bar{T}(\cdot)) = \mu(T(\cdot))$. $\bar{T}$ is called
{\bf original} r.e.. New r.e.s may be created, which is called {\bf external} r.e.s.
\end{adef}
The goal of S-System is to create extensions of a measurement collection r.e. $\ca{X}$ to reconstruct $\Omega_{s}$ for
some $s \in \ca{S}$ in PPMS, such that $\mu^{\ca{X}}(H_s = h_s) =
\mu^{\ca{W}}(\omega_s)$, where $H_s$ is a r.e. created through extension, and
$h_s$ is a realized value of it.  However, we do not possess the p.d.f. of $X$,
which we have to rely on realizations of $X$. What we can do is to leverage and
only leverage available information through conditioning
extension. \expand{Furthermore, it is likely for limitation of capacity and
economy considerations, S-System has to group together events, thus discretizing
them.}
\begin{adef}[Probability Kernel]
  Given two measurable space $(E_1, \ca{E}_1), (E_2, \ca{E}_2)$, a function
  $Q(\cdot, \cdot)$ from $E_1 \times \ca{E}_2$ to $[0, 1]$ is an $((E_1, \ca{E}_{1}), (E_2,
  \ca{E}_2))$ probability kernel if 1) $Q(\cdot, A)$ is $\ca{E}_{1}/\ca{B}([0, 1])$
  measurable for each $A \in \ca{E}_2$ 2) and $Q(y, \cdot)$ is probability measure on
  $(E_2, \ca{E}_2)$ for each $y \in E_1$.
  A probability kernel  uniquely determines a probability measure on $(E_1,
  \ca{E}_1) \bigotimes (E_2, \ca{E}_2)$ (\cite{ash1972real} Section 2.6.2).
\end{adef}
\begin{adef}[Conditioning Extension]
  Let $T_1$ be an r.e. in $(E_1, \ca{E}_1)$ defined on a probability
  measure space $(F, \ca{F}, \mu)$, and let $Q(\cdot, \cdot)$ be an $((E_1,
\ca{E}_1), (E_2, \ca{E}_2))$ probability kernel. A conditioning extension
$(\bar{F}, \bar{\ca{F}}, \bar{\mu})$ of $(F, \ca{F}, \mu)$ is created by letting
  \begin{align*}
    & (\bar{F}, \bar{\ca{F}}) := (F, \ca{F}) \bigotimes (E_2, \ca{E}_2) & \bar{\mu}(A \times B) := \int_{A} Q(T(\omega), B)\mu(d\omega), A \in \ca{F}, B \in \ca{E}_2\\
    & \xi(\omega, t) := \omega, \omega \in \Omega, t \in E_2 & \bar{T}_1(\omega, t) := T_1(\omega), \omega \in \Omega, t \in E_2\;\; \bar{T}_2(\omega, t) := t, t \in E_2
  \end{align*}
  $\bar{T}_1$ is the original r.e.s in the new probability space,
  while $\bar{T}_2$ is a new external r.e. created. Conditioning extension can be
  repeated countably many times (\cite{ash1972real} Section 2.7.2).
\end{adef}

\begin{adef}[Splitting Extension]
  Let $T_1, T_2$ be r.e.s in $(E_1, \ca{E}_1), (E_2, \ca{E}_2)$
  respectively, defined on a probability space $(F, \ca{F}, \mu)$. Let $\nu$ be a
  probability measure on a Polish space $(E_3, \ca{E}_3)$, let $Q(\cdot, \cdot)$ be an
  $((E_3, \ca{E}_3), (E_2, \ca{E}_2))$ probability kernel, and suppose
  \begin{displaymath}
    \mu(T_2 \in
  A) = \int Q(t, A)\nu(dt), A \in \ca{E}_2
  \end{displaymath}
  Then a splitting extension of $(F,
  \ca{F}, \mu)$ is to create an extension to support a r.e. $T_3$ in
  $(E_3, \ca{E}_3)$ having distribution $\nu$, and such that
  \begin{displaymath}
    \mu(T_2 \in \cdot | T_3 = t) = Q(t, \cdot), t \in \ca{E}_3
  \end{displaymath}
  Furthermore, $T_1$ is conditionally independent of $T_3$ given $T_2$.
\end{adef}
\expand{
  \begin{remark}
    The extension is called a splitting extension for it splits the probability
measure on $(E_2, \ca{E}_2)$ into many groups of measure identified by $t$.
  \end{remark}
}

\section{Manifold and Diffeomorphism Group}
\label{sec:manifold}

The following definitions have been adapted from \cite{lee2012introduction}
unless otherwise noted.

\begin{adef}[Topological Manifold]
  Suppose $M$ is a topological space. We say that $M$ is a {\bf topological
    manifold of dimension n} or a {\bf topological n-manifold} or just a {\bf
n-manifold} if it has the following properties:
  \begin{itemize}
  \item $M$ is a {\bf Hausdorff space}: for every pair of distinct points $p, q
    \in M$, there are disjoint open subsets $U, V \subseteq M$ such that $p \in U$ and $q \in
    V$.
  \item $M$ is {\bf second-countable}: there exists a countable basis for the
    topology of $M$.
  \item $M$ is {\bf locally Euclidean of dimension n}: each point of $M$ has a
    neighborhood that is homeomorphic to an open subset of $\bb{R}^{n}$.
  \end{itemize}
\end{adef}

\begin{adef}[Smooth Mapping; Diffeomorphism]
  If $U, V$ are open subsets of Euclidean spaces $\bb{R}^{n}$ and $\bb{R}^{m}$,
respectively, a function $f: U \rightarrow V$ is said to be {\bf smooth} (or {\bf
  $C^{\infty}$}, or {\bf infinitely differentiable}) if each of its component
functions has continuous partial derivatives of all orders. If in addition $f$
is bijective and has a smooth inverse map, it is called a {\bf diffeomorphism}.
\end{adef}

\begin{adef}[Chart; Coordinate Chart; Smooth Compatible]
  Let $M$ be a topological $n$-manifold. A {\bf coordinate chart} (or just a
  {\bf chart}) on $M$ is a pair $(U, \psi)$, where $U$ is an open subset of $M$
  and $\psi: U \rightarrow \hat{U}$ is a homeomorphism from $U$ to an open subset $\hat{U} =
  \psi(U) \subseteq \bb{R}^{n}$. $U$ is called a {\bf coordinate domain}, or just {\bf
domain}, $\psi$ is called a {\bf (local) coordinate map}, and the component
functions $(x^{1}, \ldots , x^{n})$ of $\psi$, defined by $p \in M, \psi(p) = (x^{1}(p), \ldots ,
x^{n}(p))$, are called {\bf local coordinates} on $U$. Two $(U, \phi), (V, \psi)$ are
called {\bf smoothly compatible} if either $U \cap V = \emptyset$, or $\psi \circ \phi^{-1}$ is a diffeomorphism.

\end{adef}

\begin{adef}[Atlas; Smooth Atlas; Maximal Atlas]
  Let $M$ be a topological manifold. An {\bf atlas} $\ca{A}$ for $M$ is a
collection of charts whose domain cover $M$. If any two charts in $\ca{A}$ is
smoothly compatible with each other, it is called a {\bf smooth atlas}. A
smooth atlas $\ca{A}$ on $M$ is {\bf maximal} if it is not properly contained
in any larger smooth atlas.
\end{adef}

\begin{adef}[Smooth Manifold]
  A {\bf smooth manifold} is a pair $(M, \ca{A})$, where $M$ is a topological
manifold and $\ca{A}$ is a maximal smooth atlas on $M$. When no confusion
exists, we may just say ``$M$ is a smooth manifold''.
\end{adef}

\begin{adef}[Riemannian Metric]
  A {\bf Riemannian metric} $g$ of a smooth manifold $M$ is a symmetric
covariant 2-tensor field on $M$ that is positive definite at each point. It
defines an inner product on $M$, which informally, could be represented by a
quadratic form $\bv{\eta}^{T}\bv{G}\bv{\eta'}$, where $\bv{G} = (g_{ij})$ is a matrix
and $\bv{\eta}, \bv{\eta}'$ are the local coordinates of two points in $M$.
\end{adef}

\begin{adef}[Riemannian Manifold]
  A {\bf Riemannian manifold} is a pair $(M, g)$, where $M$ is a smooth
  manifold and $g$ is a Riemannian manifold. Or in short, we could say $M$ is a
  Riemannian manifold if $M$ is understood to be endowed with a specific
  Riemannian metric.
\end{adef}

The following definition is adopted from \cite{banyaga1997structure}.

\begin{adef}[$C^{\infty}$ Diffeomorphism Group]
  Let $C^{\infty}(M, N)$ denote the space of all $C^{\infty}$ mapping $f: M
  \rightarrow N$, where $M, N$ are smooth manifolds. The diffeomorphism group,
  denoted by $\Diff^{\infty}(M)$, is the set of all $C^{\infty}$
  diffeomorphisms of $M$, the group action of which is the mapping composition.
\end{adef}

To make the definition more concrete to help understanding, we provide an
example adopted from \cite{Mallat2016}.
\begin{example}
 The diffeomorphism group is the set of
deformation that may be applied to objects, e.g., images, of which we can define
a norm to characterize the deformation. A small diffeomorphism acting on a
function $x(u)$ defined on $\bb{R}^{n}$ can be written as a translation of $u$
by a $g(u)$:
\begin{displaymath}
  g. x(u) = x (u - g(u)), g \in \Diff^{\infty}(\bb{R}^{n})
\end{displaymath}
Note that the smooth condition is not necessary, and is only used to avoid introducing
further definitions.  The diffeomorphism translates points by at most
$||g||_{\infty} = \sup_{u \in \bb{R}^{n}}|g(u)|$. Small diffeomorphism
corresponds to $||\nabla g||_{\infty} = \sup_{u}|\nabla g(u)| < 1$, where
$|\nabla g|$ is the matrix norm of the Jacobian matrix of $g$ at $u$. Thus, an
element in a subset of $\Diff^{\infty}(\bb{R}^{n})$ can be understood as a
small deformation of images where the deformation is bounded.
\end{example}

\section{Information Geometry}
\label{sec:information-geometry}

The following definitions are adapted from \cite{amari2016information}.

\begin{adef}[Divergence]
  Suppose that $M$ is a $n$-manifold, of which the points have a local
  coordinates system $\eta$. Given two points $\bv{p}, \bv{q} \in M$, the coordinates of which
  are $\bv{\eta}_p, \bv{\eta}_q$ respectively, a {\bf divergence} is a function of $\bv{\eta}_p,
\bv{\eta}_q$, written as $D[\bv{p}:\bv{q}]$ or $D[\bv{\eta}_p:\bv{\eta}_q]$, which satisfies the
following criteria.
  \begin{itemize}
  \item $D[\bv{p}:\bv{q}] \geq 0$.
  \item $D[\bv{p}:\bv{q}] = 0$, if and only if $\bv{p} = \bv{q}$.
  \end{itemize}
  When $\bv{p}$ and $\bv{q}$ are sufficiently close, and $D$ is differentiable, by
    denoting their coordinates by $\bv{\eta}_p$ and $\bv{\eta}_q = \bv{\eta}_p + d\bv{\eta}$, the Taylor
    expansion of $D$ is written as
    \begin{displaymath}
      D[\bv{\eta}_p:\bv{\eta}_q + d\bv{\eta}] = \frac{1}{2}\sum g_{ij}(\bv{\eta}_p)d\bv{\eta}_i\bv{\eta}_j + O(|d\bv{\eta}|^{3})
    \end{displaymath},
    and matrix $\bv{G} = (g_{ij})$ is positive-definite, depending on $\bv{\eta}_p$.
\end{adef}

\begin{adef}[Bregman Divergence]
  Given a convex function $\psi(\bv{\eta})$, a {\bf Bregman divergence} derived from $\psi$ is
a divergence defined as
  \begin{displaymath}
    D_{\psi}[\bv{\eta}:\bv{\eta}'] = \psi(\bv{\eta}') - \psi(\bv{\eta}) - \nabla\psi({\bv{\eta}})^{T}(\bv{\eta}' - \bv{\eta})
  \end{displaymath}
\end{adef}

\begin{adef}[Legendre Dual; Legendre Transform]
  Given a convex function $\psi(\bv{\eta})$, the {\bf Legendre dual} of $\psi$ is the function $\psi^{*}$
  \begin{displaymath}
    \psi^{*}(\bv{\eta}^{*}) = \bv{\eta}^{T}\bv{\eta}^{*} - \psi(\bv{\eta})
  \end{displaymath}
  where $\bv{\eta} = \bv{f}(\bv{\eta}^{*})$ and $\bv{f}$ is the inverse function of
  $\bv{\eta}^{*} = \nabla\psi(\bv{\eta})$. $\psi^{*}$ is a convex function. $\nabla\psi(\bv{\eta})$ is
  called the {\bf Legendre Transform} of $\eta$.
\end{adef}

\begin{adef}[Dual Bregman Divergence]
  Given a convex function $\psi(\bv{\eta})$, and $D_{\psi}$ the Bregman divergence derived by
  $\psi$. Let $\psi^{*}$ be the Legendre dual function of $\psi$, then the Bregman
  divergence $D_{\psi^{*}}$ defined by $\psi^{*}$ is called the {\bf Dual Bregman Divergence}
  derived by $\psi$. We have
  \begin{displaymath}
    D_{\psi^{*}}[\bv{\eta}^{*}:\bv{\eta}'^{*}] = D_{\psi}[\bv{\eta}':\bv{\eta}]
  \end{displaymath}
\end{adef}

\begin{adef}[Exponential Family of Probability Distributions; Stochastic
Manifold; Affine Coordinate System; Dual Affine
Coordinate System]
  The form of {\bf probability distribution of exponential family} is given by the
probability density function
  \begin{displaymath}
    p(\bv{x}; \bv{\theta}) d\bv{x} = e^{(\boldsymbol{\theta}^{T}\boldsymbol{h}(\bv{x}) - \psi(\bv{\theta}))}d \mu(\bv{x})
  \end{displaymath}
  where $\bv{x}$ is a realizable value of a multivariate random variable,
$\bv{h}(\bv{x})$ is a vector function of $\bv{x}$ which are linearly
independent, $\psi(\bv{\theta})$ is the normalization factor, and $\mu$ is the law on
r.v. $\bv{x}$.

Since $\psi$ is a convex function w.r.t. $\bv{\theta}$, the exponential family
distributions is a Riemannian manifold $(M, g)$ with an {\bf affine coordinate
system} $\bv{\theta} = (\theta_1, \ldots, \theta_n) $, and $g$ is given by the second order Taylor
expansion of the Bregman divergence derived from $\psi$. It is called the {\bf
stochastic manifold of exponential family distribution}. $\bv{\theta}$ is called
{natural} or {canonical} parameters. An alternative coordinate system
of $M$ is given by the Legendre transform
\begin{displaymath}
  \bv{\eta} = \nabla\psi(\bv{\theta}) = \int \bv{h}e^{\bv{\theta}^{T}\bv{h}(\bv{x})}d\mu(\bv{x})
\end{displaymath}
of $\bv{\theta}$, which is well known as the expectation parameter in statistics,
and is called {\bf dual affine coordinate system}. Correspondingly, an
alternative Riemannian metric is derived from the Legendre dual of $\psi$. The
Bregman divergence derived is the well known discrepancy measure on probability,
the KL divergence.
\end{adef}

\section{Statistical Learning Theory}
\label{sec:stat-learn-theory}

Assume a sample space $\mathcal{Z} = \mathcal{X} \times \mathcal{Y}$, where
$\mathcal{X}$ is the space of input data, and $\mathcal{Y}$ is the label
space. We use $S_m = \{ s_i = (\bvec{x}_i, y_i) \}_{i=1}^{m}$ to denote the
training set of size $m$ whose samples are drawn independently and identically
distributed (i.i.d.) according to an unknown distribution $P$. Given a loss
function $l$, the goal of learning is to identify a function $f: \mathcal{X} \mapsto
\mathcal{Y}$ in a hypothesis space (a class $\mathcal{F}$ of functions) that
minimizes the expected error
\[
  R(f) = \mathbb{E}_{z \sim P}\left[l\left(f(\bvec{x}), y\right)\right] ,
\]
where $z = (\bvec{x}, y) \in \mathcal{Z}$ is sampled i.i.d. according to
$P$. Since $P$ is unknown, the observable quantity serving as the proxy to the
true risk $R(f)$ is the empirical error
\[
  R_m(f) = \frac{1}{m}\sum\limits_{i=1}^{m}l\left(f(\bvec{x}_i), y_i\right) .
\]



\end{document}